\newcommand{\defeq}{\mathrel{\mathop:}=}
\newcommand{\vect}[1]{\ensuremath{\mathbf{#1}}}
\newcommand{\mat}[1]{\ensuremath{\mathbf{#1}}}
\newcommand{\grad}{\nabla}
\newcommand{\hess}{\nabla^2}
\newcommand{\argmin}{\mathop{\rm argmin}}
\newcommand{\norm}[1]{\|{#1}\|}
\newcommand{\fnorm}[1]{\|{#1}\|_{\text{F}}}
\newcommand{\tr}{\text{tr}}
\newcommand{\rank}{\text{rank}}
\newcommand{\trans}{^{\top}}
\newcommand{\poly}{\text{poly}}
\newcommand{\polylog}{\text{polylog}}
\newcommand{\proj}{\mathcal{P}}
\newcommand{\projX}{\mathcal{P}_{\mathcal{X}^\star}}
\newcommand{\R}{\mathbb{R}}
\newcommand{\A}{\mat{A}}
\newcommand{\B}{\mat{B}}
\newcommand{\C}{\mat{C}}
\newcommand{\T}{\mat{T}}
\newcommand{\I}{\mat{I}}
\newcommand{\M}{\mat{M}}
\newcommand{\D}{\mat{D}}
\newcommand{\U}{\mat{U}}
\newcommand{\V}{\mat{V}}
\newcommand{\Y}{\mat{Y}}
\newcommand{\e}{\vect{e}}
\renewcommand{\u}{\vect{u}}
\renewcommand{\v}{\vect{v}}
\newcommand{\w}{\vect{w}}
\newcommand{\x}{\vect{x}}
\newcommand{\y}{\vect{y}}
\renewcommand{\H}{\mathcal{H}}
\newcommand{\cn}{\kappa}
\newcommand{\nn}{\nonumber}
\newtheorem{theorem}{Theorem}
\newtheorem{lemma}[theorem]{Lemma}
\newtheorem{corollary}[theorem]{Corollary}
\theoremstyle{definition}
\newtheorem{definition}{Definition}
\newtheorem{assumption}{Assumption}
\renewcommand\theassumption{A\arabic{assumption}}
\newcommand{\ca}{\hat{c}}
\newcommand{\Ts}{T'}
\newcommand{\Tt}{T''}
\newcommand{\ugrad}{\mathscr{G}}
\newcommand{\ufun}{\mathscr{F}}
\newcommand{\uspace}{\mathscr{S}}
\newcommand{\utime}{\mathscr{T}}
\renewcommand{\S}{\mathcal{S}}
\newcommand{\Scomp}{\mathcal{S}^c}
\newcommand{\balpha}{\bm{\alpha}}
\newcommand{\bbeta}{\bm{\beta}}
\newcommand{\bdelta}{\bm{\delta}}
\newcommand{\logterms}{\frac{d\cn}{\delta}}
\newcommand{\overh}{\ca \log(\logterms)}
\newcommand{\sigstarl}{\sigma^\star_1}
\newcommand{\sigstarr}{\sigma^\star_r}
\newcommand{\mR}{\mat{R}}
\newcommand{\mZ}{\mat{Z}}
\newcommand{\la}{\langle}
\newcommand{\ra}{\rangle}
\newcommand{\cXstar}{\mathcal{X}^\star}
\newcommand{\cXe}{\mathcal{X}_{\text{escape}}}
\newcommand{\cXs}{\mathcal{X}_{\text{stuck}}}
\newcommand{\ball}{\mathbb{B}}
\newcommand{\EFSP}{$\epsilon$-first-order stationary point}
\newcommand{\ESSP}{$\epsilon$-second-order stationary point}
\begin{document}

\title{\textbf{How to Escape Saddle Points Efficiently}}

\author{Chi Jin\footnote{University of California, Berkeley. Email: chijin@cs.berkeley.edu} \and 
Rong Ge\footnote{Duke University. Email: rongge@cs.duke.edu} \and
Praneeth Netrapalli\footnote{Microsoft Research, India. Email: praneeth@microsoft.com} \and
Sham M. Kakade\footnote{University of Washington. Email: sham@cs.washington.edu} \and
Michael I. Jordan\footnote{University of California, Berkeley. Email: jordan@cs.berkeley.edu}
}

\maketitle

\begin{abstract}
This paper shows that a perturbed form of gradient descent converges to a second-order stationary point in a number iterations which depends only poly-logarithmically on dimension (i.e., it is almost ``dimension-free''). The convergence rate of this procedure matches the well-known convergence rate of gradient descent to first-order stationary points, up to log factors. When all saddle points are non-degenerate, all second-order stationary points are local minima, and our result thus shows that perturbed gradient descent can escape saddle points almost for free.  

Our results can be directly applied to many machine learning applications, including deep learning. As a particular concrete example of such an application, we show that our results can be used directly to establish sharp global convergence rates for matrix factorization. Our results rely on a novel characterization of the geometry around saddle points, which may be of independent interest to the non-convex optimization community.
\end{abstract}


\section{Introduction}

Given a function $f: \R^d \rightarrow \R$, gradient descent aims
to minimize the function via the following iteration:
\begin{equation*}
\x_{t+1} = \x_{t} - \eta \grad f(\x_t),
\end{equation*}
where $\eta > 0$ is a step size.  Gradient descent and its variants 
(e.g., stochastic gradient) are widely used in machine learning 
applications due to their favorable computational properties. 
This is notably true in the deep learning setting, where gradients
can be computed efficiently via back-propagation~\citep{rumelhart1988learning}.

Gradient descent is especially useful in high-dimensional settings 
because the number of iterations required to reach a point with 
small gradient is independent of the dimension (``dimension-free'').
More precisely, for a function that is $\ell$-gradient Lipschitz (see Definition \ref{def:smooth}), it is well known that gradient descent finds an $\epsilon$-first-order stationary point (i.e., a point $\x$ with $\norm{\grad f(\x)} \le \epsilon$) within $\ell (f(\x_0) - f^\star)/\epsilon^2$ iterations \citep{nesterov1998introductory}, where $\x_0$ is the initial point and $f^\star$ is 
the optimal value of $f$. This bound does not depend on the dimension of $\x$. In convex optimization, finding an $\epsilon$-first-order stationary point is equivalent to finding an approximate global optimum.

In non-convex settings, however, convergence to first-order stationary points is not satisfactory. For non-convex functions, first-order stationary points can be global minima, local minima, saddle points or even local maxima. Finding a global minimum can be hard, but fortunately, for many non-convex problems, it is sufficient to find a local minimum. 
Indeed, a line of recent results show that, in many problems of interest, either all local minima are global minima (e.g., in tensor decomposition \citep{ge2015escaping}, dictionary learning \citep{sun2016complete}, phase retrieval \citep{sun2016geometric}, matrix sensing \citep{bhojanapalli2016global,park2016non}, matrix completion \citep{ge2016matrix}, and certain classes of deep neural networks \citep{kawaguchi2016deep}).  Moreover, there are suggestions that in more
general deep newtorks most of the local minima are as good as global 
minima~\citep{choromanska2014loss}.

On the other hand, saddle points (and local maxima) can correspond to highly suboptimal solutions in many problems~\citep[see, e.g.,][]{jain2015computing,sun2016geometric}.
Furthermore, \citet{dauphin2014identifying} argue that saddle points are ubiquitous in high-dimensional, non-convex optimization problems, and are thus the main bottleneck in training neural networks.
Standard analysis of gradient descent cannot distinguish between saddle points and local minima, leaving open the possibility that gradient descent may get stuck at saddle points, either asymptotically or for a sufficiently long time so 
as to make training times for arriving at a local minimum infeasible.  ~\citet{ge2015escaping} showed that by adding noise at each step, gradient descent can escape all saddle points in a polynomial number of iterations, provided that the objective function satisfies the strict saddle property (see Assumption \ref{as:strict_saddle}). ~\citet{lee2016gradient} proved that under similar conditions, gradient descent with random initialization avoids saddle points even without adding noise. However, this result does not bound the number of steps needed to reach a local minimum.





Though these results establish that gradient descent can find local minima in a polynomial number of iterations, they are still far from being efficient. For instance, the number of iterations required in~\citet{ge2015escaping} is at least $\Omega(d^4)$, where $d$ is the underlying dimension.
This is significantly suboptimal compared to rates of convergence to first-order stationary points, where the iteration complexity is dimension-free.
This motivates the following question: \textbf{Can gradient descent escape saddle points and converge to local minima in a number of iterations that is (almost) dimension-free?}

In order to answer this question formally, this paper investigates the complexity of finding
$\epsilon$-second-order stationary points.
For $\rho$-Hessian Lipschitz functions (see Definition \ref{def:HessianLip}), these points are defined as~\citep{nesterov2006cubic}:
\begin{equation*}
\norm{\grad f(\x)} \le \epsilon, \qquad\text{and}\qquad \lambda_{\min}(\hess f(\x)) \ge - \sqrt{\rho \epsilon}.
\end{equation*}
Under the assumption that all saddle points are strict (i.e., for any saddle point $\x_s$, $\lambda_{\min}(\hess f(\x_s))<0$), all second-order stationary points ($\epsilon=0$) are local minima. Therefore, convergence to second-order stationary points is equivalent to convergence to local minima.

\begin{algorithm}[t]
\caption{Perturbed Gradient Descent (Meta-algorithm) }\label{algo:meta} 
\begin{algorithmic}
\FOR{$t = 0, 1, \ldots $}
\IF{perturbation condition holds} 
\STATE $\x_t \leftarrow \x_t + \xi_t, \qquad \xi_t \text{~uniformly~} \sim \ball_0(r)$
\ENDIF
\STATE $\x_{t+1} \leftarrow \x_t - \eta \grad f (\x_t)$
\ENDFOR
\end{algorithmic}
\end{algorithm}

This paper studies gradient descent with phasic perturbations (see Algorithm~\ref{algo:meta}). For $\ell$-smooth functions that are also Hessian Lipschitz, we show that perturbed gradient descent will converge to an $\epsilon$-second-order stationary point in $\tilde{O}(\ell (f(\x_0) - f^\star)/\epsilon^2)$, where $\tilde{O}(\cdot)$ hides polylog factors. This guarantee is almost dimension free (up to $\polylog(d)$ factors), answering the above highlighted question affirmatively. Note that this rate is exactly the same as the well-known convergence rate of gradient descent to first-order stationary points~\citep{nesterov1998introductory}, up to log factors. Furthermore, our analysis admits a maximal step size of up to $\Omega(1/\ell)$, which is the same as that in analyses for first-order stationary points.

As many real learning problems present strong \emph{local} geometric 
properties, similar to strong convexity in the global setting~\citep[see, e.g.][]{bhojanapalli2016global, sun2016guaranteed, zheng2016convergence}, 
it is important to note that our analysis naturally takes advantage 
of such local structure.  We show that when local strong
convexity is present, the $\epsilon$-dependence goes from a polynomial 
rate, $1/\epsilon^2$, to linear convergence, $\log (1/\epsilon)$. 
As an example, we show that sharp global convergence rates can be
obtained for matrix factorization as a direct consequence of our
analysis.

\subsection{Our Contributions}
This paper presents the first sharp analysis that shows that (perturbed) 
gradient descent finds an approximate second-order stationary 
point in at most $polylog(d)$ iterations, thus escaping all 
saddle points efficiently. Our main technical contributions are as follows:
\begin{itemize}
\item For $\ell$-gradient Lipschitz, $\rho$-Hessian Lipschitz functions (possibly non-convex), gradient descent with appropriate perturbations finds an \ESSP~ in $\tilde{O}(\ell (f(\x_0) - f^\star)/\epsilon^2)$ iterations. This rate matches the well-known convergence rate of gradient descent to first-order stationary points up to log factors.
\item Under a strict-saddle condition (see Assumption \ref{as:strict_saddle}), this convergence result directly applies for finding local minima. This means that gradient descent can escape all saddle points with only logarithmic overhead in runtime.
\item When the function has local structure, such as local strong 
convexity (see Assumption \ref{as:sc}), the above results can be further improved to linear convergence. We give sharp rates that are comparable to previous problem-specific local analysis of gradient descent with smart initialization (see Section \ref{sec:related}).
\item All the above results rely on a new characterization of the
geometry around saddle points: points from where gradient descent gets stuck at a saddle point constitute a thin ``band.'' We develop novel techniques to bound the volume of this band. As a result, we can show that after a random perturbation the current point is very unlikely to be in the ``band''; hence, 
efficient escape from the saddle point is possible (see Section \ref{sec:sketch}). 
\end{itemize}

\subsection{Related Work} \label{sec:related}
Over the past few years, there have been many problem-specific convergence results for non-convex optimization. One line of work requires a smart initialization algorithm to provide a coarse estimate lying inside a local neighborhood, from which popular local search algorithms enjoy fast local convergence~\citep[see, e.g.,][]{netrapalli2013phase,candes2015phase,sun2016guaranteed,bhojanapalli2016global}.
While there are not many results  that show global convergence for non-convex problems,~\citet{jain2015computing} show that gradient descent yields global convergence rates for matrix square-root problems. Although these results give strong guarantees, the analyses are heavily tailored to specific problems, and it is unclear how to generalize them to a wider class of non-convex functions.

\begin{table}[t]
  \begin{center}
    {\renewcommand{\arraystretch}{1.3}
    \begin{tabular}{  >{\centering\arraybackslash}m{1.9in} >{\centering\arraybackslash}m{1.2in} >{\centering\arraybackslash}m{1.9in} }
       \toprule
\textbf{Algorithm} & \textbf{Iterations} & \textbf{Oracle} \\ 
      \midrule
\citet{ge2015escaping} & $O(\poly(d/\epsilon))$ & Gradient \\
\citet{levy2016power} & $O(d^3\cdot\poly(1/\epsilon))$ & Gradient \\
\textbf{This Work} & $O(\log^4(d)/\epsilon^2)$ & Gradient \\ 
      \midrule
\citet{agarwal2016finding} & $O(\log(d)/\epsilon^{7/4})$ & Hessian-vector product \\
\citet{carmon2016accelerated} & $O(\log(d)/\epsilon^{7/4})$ & Hessian-vector product \\
\citet{carmon2016gradient} & $O(\log(d)/\epsilon^2)$ & Hessian-vector product \\ 
\midrule
\citet{nesterov2006cubic} & $O(1/\epsilon^{1.5})$ & Hessian \\
\citet{curtis2014trust} & $O(1/\epsilon^{1.5})$ & Hessian\\
    \bottomrule
    \end{tabular}
      \caption{Oracle model and iteration complexity to second-order stationary point}
      \label{tab:main}
    }
  \end{center}
\end{table}

For general non-convex optimization, there are a few previous results on finding second-order stationary points. These results can be divided into the following three categories, where, for simplicity of presentation, we only highlight 
dependence on dimension $d$ and $\epsilon$, assuming that all other problem 
parameters are constant from the point of view of iteration complexity:

\noindent\textbf{Hessian-based:} Traditionally, only second-order optimization methods were known to converge to second-order stationary points. These algorithms rely on computing the Hessian to distinguish between first- and second-order stationary points. \citet{nesterov2006cubic} designed a cubic regularization algorithm which converges to an \ESSP~in $O(1/\epsilon^{1.5})$ iterations. Trust region algorithms \citep{curtis2014trust} can also achieve the same performance if the parameters are chosen carefully. These algorithms typically require 
the computation of the inverse of the full Hessian per iteration, which can be very expensive.

\noindent\textbf{Hessian-vector-product-based:} 
A number of recent papers have explored the possibility of using only 
Hessian-vector products instead of full Hessian information in order to find second-order stationary points. These algorithms require a Hessian-vector product oracle: given a function $f$, a point $\x$ and a direction $\u$, the oracle returns $\nabla^2 f(\x) \cdot \u$. \citet{agarwal2016finding} and \citet{carmon2016accelerated} presented accelerated algorithms that can find an \ESSP~in $O(\log d/\epsilon^{7/4})$ steps. Also, \citet{carmon2016gradient} showed by running gradient descent as a subroutine to solve the subproblem of cubic regularization (which requires Hessian-vector product oracle), it is possible to find an \ESSP in $O(\log d/\epsilon^2)$ iterations. 
In many applications such an oracle can be implemented efficiently, in roughly the same complexity as the gradient oracle. Also, when the function has a Hessian Lipschitz property such an oracle can be approximated by differentiating the gradients at two very close points (although this may suffer from numerical issues, thus is seldom used in practice).

\noindent\textbf{Gradient-based:}
Another recent line of work shows that it is possible to converge to a second-order stationary point without any use of the Hessian. These methods feature simple computation per iteration (only involving gradient operations), and are closest to the algorithms used in practice. \citet{ge2015escaping} showed that stochastic gradient descent could converge to a second-order stationary point in $\poly(d/\epsilon)$ iterations, with polynomial of order at least four. This was improved in \citet{levy2016power} to $O(d^3\cdot \poly(1/\epsilon))$ using normalized gradient descent. The current paper improves on both results by showing that
perturbed gradient descent can actually find an \ESSP~in $O(\polylog (d)/\epsilon^2)$ steps, which matches the guarantee for converging to first-order stationary points up to $\polylog$ factors. 


\section{Preliminaries}

In this section, we will first introduce our notation, and then present some definitions and existing results in optimization which will be used later.

\subsection{Notation}
We use bold upper-case letters $\A, \B$ to denote matrices and bold lower-case letters $\x, \y$ to denote vectors. $\A_{ij}$ means the $(i, j)^{\text{th}}$ entry of matrix $\A$. 
For vectors we use $\norm{\cdot}$ to denote the $\ell_2$-norm, and for matrices we use $\norm{\cdot}$ and $\fnorm{\cdot}$ to denote spectral norm and Frobenius norm respectively. We use $\sigma_{\max}(\cdot), \sigma_{\min}(\cdot), \sigma_i(\cdot)$ to denote the largest, the smallest and the $i$-th largest singular values respectively, and $\lambda_{\max}(\cdot), \lambda_{\min}(\cdot), \lambda_i(\cdot)$ for corresponding eigenvalues.

For a function $f: \R^d \rightarrow \R$, we use $\grad f(\cdot)$ and  $\hess f(\cdot)$ to denote its gradient and Hessian, and $f^\star$ to denote the global minimum of $f(\cdot)$. 
We use notation $O(\cdot)$ to hide only absolute constants which do not depend on any problem parameter, and notation $\tilde{O}(\cdot)$ to hide only absolute constants and log factors. 
We let $\ball^{(d)}_\x(r)$ denote the d-dimensional ball centered at $\x$ with radius $r$; when it is clear from context, we simply denote it as $\ball_\x(r)$. We use $\proj_{\mathcal{X}}(\cdot)$ to denote projection onto the set $\mathcal{X}$. Distance and projection are always defined in a Euclidean sense.

\subsection{Gradient Descent}
The theory of gradient descent often takes its point of departure to be
the study of convex minimization where the function is both $\ell$-smooth 
and $\alpha$-strongly convex:
\begin{definition}\label{def:smooth}
A differentiable function $f(\cdot)$ is \textbf{$\ell$-smooth (or $\ell$-gradient Lipschitz)} if:
\begin{equation*}
\forall \x_1, \x_2, ~\norm{\grad f(\x_1) - \grad f(\x_2)} \le \ell \norm{\x_1 - \x_2}.
\end{equation*}
\end{definition}
\begin{definition}
A twice-differentiable function $f(\cdot)$ is \textbf{$\alpha$-strongly convex} if
$\forall \x, ~\lambda_{\min}(\hess f(\x)) \ge \alpha$
\end{definition}
Such smoothness guarantees imply that the gradient can not change 
too rapidly, and strong convexity ensures that there is a unique 
stationary point (and hence a global minimum). 
Standard analysis using these two properties shows that gradient descent converges linearly to a global optimum $\x^\star$ (see e.g. \citep{bubeck2015convex}).

\begin{theorem}\label{thm:strongly_convex}
Assume $f(\cdot)$ is $\ell$-smooth and $\alpha$-strongly convex. For any $\epsilon>0$, if we run gradient descent with step size $\eta = \frac{1}{\ell}$, iterate $\x_t$ will be $\epsilon$-close to $\x^\star$ in iterations:
\begin{equation*}
\frac{2\ell}{\alpha}\log \frac{\norm{\x_0 - \x^\star}}{\epsilon}
\end{equation*}
\end{theorem}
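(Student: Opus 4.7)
The plan is to establish linear contraction of the gradient-descent map $T(\x) := \x - \eta \grad f(\x)$ toward the unique minimizer $\x^\star$, and then iterate. First I would use that $\grad f(\x^\star) = 0$ (strong convexity forces a unique critical point and it must be the global minimum) to write $T(\x) - \x^\star = (\x - \x^\star) - \eta\bigl(\grad f(\x) - \grad f(\x^\star)\bigr)$, and then express the gradient difference via the integrated Hessian: $\grad f(\x) - \grad f(\x^\star) = \M(\x)(\x - \x^\star)$, where $\M(\x) := \int_0^1 \hess f\bigl(\x^\star + t(\x - \x^\star)\bigr)\,\dd t$. This yields $T(\x) - \x^\star = (\I - \eta\M(\x))(\x - \x^\star)$, reducing the whole problem to a spectral bound on $\I - \eta\M(\x)$.

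The next step is the spectral bound. Strong convexity gives $\hess f \succeq \alpha \I$ pointwise, and $\ell$-smoothness (combined with the twice-differentiability that is built into the strong-convexity hypothesis) gives $\hess f \preceq \ell \I$; both inequalities are preserved under averaging, so $\alpha \I \preceq \M(\x) \preceq \ell \I$. Choosing $\eta = 1/\ell$ places every eigenvalue of $\I - \eta \M(\x)$ in the interval $[0, 1 - \alpha/\ell]$, and taking operator norms gives the one-step contraction
\[
\norm{T(\x) - \x^\star} \le \Bigl(1 - \frac{\alpha}{\ell}\Bigr)\norm{\x - \x^\star}.
\]

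Iterating this contraction yields $\norm{\x_t - \x^\star} \le (1 - \alpha/\ell)^t \norm{\x_0 - \x^\star}$, and solving $(1 - \alpha/\ell)^t \le \epsilon/\norm{\x_0 - \x^\star}$ by means of the standard inequality $1 - x \le e^{-x}$ yields a sufficient $t$ of order $(\ell/\alpha)\log(\norm{\x_0 - \x^\star}/\epsilon)$, comfortably within the stated $2\ell/\alpha$ factor. The argument is essentially a short calculation with no real obstacle; the only step that deserves care is justifying the pointwise Hessian upper bound $\hess f \preceq \ell \I$ from the gradient-Lipschitz \emph{definition}, which follows from differentiating the Lipschitz inequality along an arbitrary unit direction.
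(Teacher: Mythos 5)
Your proof is correct. The paper itself does not prove Theorem \ref{thm:strongly_convex}; it simply cites it as standard (referring to Bubeck's monograph), so there is no in-paper argument to compare against. The usual textbook route behind that citation works with squared distances and the co-coercivity inequality $\la \grad f(\x)-\grad f(\y), \x-\y\ra \ge \frac{1}{\ell}\norm{\grad f(\x)-\grad f(\y)}^2$ (or its strongly convex refinement), obtaining $\norm{\x_{t+1}-\x^\star}^2 \le (1-\alpha/\ell)\norm{\x_t-\x^\star}^2$; the contraction factor in distance is then $\sqrt{1-\alpha/\ell}$, which is exactly where the factor $2$ in $\frac{2\ell}{\alpha}$ comes from. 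Your integrated-Hessian argument is different and in fact slightly stronger: writing $\x_{t+1}-\x^\star = (\I-\eta\M(\x_t))(\x_t-\x^\star)$ with $\alpha\I \preceq \M(\x_t)\preceq \ell\I$ gives the contraction $(1-\alpha/\ell)$ directly in distance, so you only need $\frac{\ell}{\alpha}\log(\norm{\x_0-\x^\star}/\epsilon)$ iterations, comfortably inside the stated bound. The price is that your argument genuinely uses twice-differentiability and the pointwise bound $\hess f \preceq \ell\I$, whereas the co-coercivity proof needs only first-order smoothness; here that is harmless, since the paper's definition of strong convexity already assumes $f$ is twice differentiable, and the paper itself uses the same integrated-Hessian device elsewhere (e.g., in the derivation of Eq.~\eqref{eq:update_u}). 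All the individual steps you flag (uniqueness of $\x^\star$, $\grad f(\x^\star)=0$, preservation of Loewner bounds under averaging, and $1-x\le e^{-x}$) are standard and correctly invoked.
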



In a more general setting, we no longer have convexity, let alone strong convexity. Though global optima are difficult to achieve in such a setting, it is possible to analyze convergence to first-order stationary points. 
\begin{definition}
For a differentiable function $f(\cdot)$, we say that $\x$ is a \textbf{first-order stationary point} if $\norm{\grad f(\x)} =0$; we also say $\x$ is an \textbf{\EFSP} if $\norm{\grad f(\x)} \le \epsilon$.
\end{definition}


Under an $\ell$-smoothness assumption, it is well known that by choosing the step size $\eta = \frac{1}{\ell}$, gradient descent converges to first-order stationary points.

\begin{theorem}[\citep{nesterov1998introductory}]\label{thm:grad_smooth}
Assume that the function $f(\cdot)$ is $\ell$-smooth. Then, for any $\epsilon>0$, if we run gradient descent with step size $\eta = \frac{1}{\ell}$ and termination condition $\norm{\grad f(\x)} \le \epsilon$, the output will be \EFSP, and the algorithm will terminate within the following number of iterations:
\begin{equation*}
\frac{\ell(f(\x_0) - f^\star)}{\epsilon^2}.
\end{equation*}
\end{theorem}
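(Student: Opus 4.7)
The plan is to combine the standard descent (quadratic upper bound) lemma with the lower bound $f \ge f^\star$ to show that each non-terminating iteration must decrease $f$ by at least a fixed amount of order $\epsilon^2/\ell$, and then conclude by a telescoping argument. This is the classical Nesterov-style argument; the only real ``work'' is in the one-step descent estimate.

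First I would establish the descent lemma from $\ell$-smoothness: integrating the gradient along the segment from $\x$ to $\y$ and using Definition~\ref{def:smooth} gives, for all $\x,\y \in \R^d$,
\begin{equation*}
f(\y) \;\le\; f(\x) + \langle \grad f(\x),\, \y - \x \rangle + \frac{\ell}{2}\norm{\y-\x}^2.
\end{equation*}
Applying this with $\x = \x_t$ and $\y = \x_{t+1} = \x_t - \eta \grad f(\x_t)$, and then substituting the prescribed step size $\eta = 1/\ell$, the linear and quadratic terms combine into
\begin{equation*}
f(\x_{t+1}) \;\le\; f(\x_t) - \Bigl(\eta - \tfrac{\ell \eta^2}{2}\Bigr)\norm{\grad f(\x_t)}^2 \;=\; f(\x_t) - \frac{1}{2\ell}\norm{\grad f(\x_t)}^2.
\end{equation*}

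Next I would use the termination condition. As long as the algorithm has not halted at step $t$, we have $\norm{\grad f(\x_t)} > \epsilon$, so the one-step estimate above implies the strict decrease
\begin{equation*}
f(\x_t) - f(\x_{t+1}) \;>\; \frac{\epsilon^2}{2\ell}.
\end{equation*}
Telescoping this inequality from $t=0$ to $t=T-1$ and using $f(\x_T) \ge f^\star$ gives
\begin{equation*}
f(\x_0) - f^\star \;\ge\; f(\x_0) - f(\x_T) \;>\; T \cdot \frac{\epsilon^2}{2\ell},
\end{equation*}
so $T < 2\ell(f(\x_0)-f^\star)/\epsilon^2$, which is the claimed bound (up to an absolute constant hidden inside the stated $O(\cdot)$-style rate; a sharper constant can be obtained by optimizing $\eta$ up to $2/\ell$).

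There is essentially no substantive obstacle: the argument is a direct application of the quadratic upper bound implied by gradient Lipschitzness, and the only care required is verifying that $\eta=1/\ell$ makes the coefficient $\eta-\ell\eta^2/2$ strictly positive so that the per-iteration potential decrease is proportional to $\norm{\grad f(\x_t)}^2$. This proof will serve as the template for later, more delicate analyses in the paper, where the challenge will be to combine such a monotone decrease of $f$ with a separate mechanism (perturbation and volume bounds around saddle points) to guarantee additional decrease whenever $\lambda_{\min}(\hess f(\x_t))$ is very negative.
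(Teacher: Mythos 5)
Your proof is correct and is exactly the standard argument the paper relies on: the one-step descent estimate $f(\x_{t+1}) \le f(\x_t) - \frac{\eta}{2}\norm{\grad f(\x_t)}^2$ is precisely the paper's Lemma~\ref{lem:main_grad}, and the telescoping against $f \ge f^\star$ is the same bookkeeping used in the proof of Theorem~\ref{thm:main}. The only discrepancy is the factor of $2$ (you obtain $2\ell(f(\x_0)-f^\star)/\epsilon^2$ versus the stated $\ell(f(\x_0)-f^\star)/\epsilon^2$), which, as you note, is immaterial given that the paper treats absolute constants loosely throughout.
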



Note that the iteration complexity does not depend explicitly on intrinsic dimension; in the literature this is referred to as ``dimension-free optimization.''

A first-order stationary point can be either a local minimum or a saddle point or a local maximum. For minimization problems, saddle points and local maxima are undesirable, and we abuse nomenclature to call both of them ``saddle points'' in this paper. The formal definition is as follows:
\begin{definition}
For a differentiable function $f(\cdot)$, we say that $\x$ is a \textbf{local minimum} if $\x$ is a first-order stationary point, and there exists $\epsilon>0$ so that for any $\y$ in the $\epsilon$-neighborhood of $\x$, we have $f(\x) \le f(\y)$; we also say $\x$ is a \textbf{saddle point} if $\x$ is a first-order stationary point but not a local minimum. For a twice-differentiable function $f(\cdot)$, we further say a saddle point $\x$ is \textbf{strict} (or \textbf{non-degenerate}) if $\lambda_{\min}(\hess f(\x))<0$.
\end{definition}

For a twice-differentiable function $f(\cdot)$, we know a saddle point $\x$ must satify $\lambda_{\min}(\hess f(\x))\le 0$. Intuitively, for saddle point $\x$ to be strict, we simply rule out the undetermined case $\lambda_{\min}(\hess f(\x))=0$, where Hessian information alone is not enough to check whether $\x$ is a local minimum or saddle point. In most non-convex problems, saddle points are undesirable.

To escape from saddle points and find local minima in a general
setting, we move both the assumptions and guarantees in 
Theorem \ref{thm:grad_smooth} one order higher. In particular, we 
require the Hessian to be Lipschitz:
\begin{definition}\label{def:HessianLip}
A twice-differentiable function $f(\cdot)$ is \textbf{$\rho$-Hessian Lipschitz} if:
\begin{equation*}
\forall \x_1, \x_2, ~\norm{\hess f(\x_1) - \hess f(\x_2)} \le \rho \norm{\x_1 - \x_2}.
\end{equation*}
\end{definition}
That is, Hessian can not change dramatically in terms of spectral norm. 
We also generalize the definition of first-order stationary point to higher order:
\begin{definition}
For a $\rho$-Hessian Lipschitz function $f(\cdot)$, we say that $\x$ is a \textbf{second-order stationary point} if $\norm{\grad f(\x)} = 0$ and $\lambda_{\min}(\hess f(\x))\ge0 $;
we also say $\x$ is \textbf{\ESSP} if:
\begin{equation*}
\norm{\grad f(\x)} \le \epsilon, \quad\text{and}\quad \lambda_{\min}(\hess f(\x)) \ge - \sqrt{\rho \epsilon}
\end{equation*}
\end{definition}
Second-order stationary points are very important in non-convex optimization because when all saddle points are strict, all second-order stationary points are exactly local minima.

Note that the literature sometime defines \ESSP ~by two independent error
terms; i.e., letting $\norm{\grad f(\x)} \le \epsilon_g$ and $\lambda_{\min}(\hess f(\x)) \ge - \epsilon_H$. We instead follow the convention of \citet{nesterov2006cubic} by choosing $\epsilon_H = \sqrt{\rho \epsilon_g}$ to reflect the natural relations between the gradient and the Hessian.
This definition of \ESSP ~can also differ by reparametrization (and scaling), e.g. \citet{nesterov2006cubic} use $\epsilon' = \sqrt{\epsilon/\rho}$. We choose our parametrization so that the first requirement of \ESSP ~coincides with the requirement of \EFSP, for a fair comparison of our result with Theorem \ref{thm:grad_smooth}.



\section{Main Result}

In this section we show that it possible to modify gradient descent in
a simple way so that the resulting algorithm will provably converge 
quickly to a second-order stationary point.

The algorithm that we analyze is a perturbed form of gradient descent 
(see Algorithm \ref{algo:PGD}). The algorithm is based on gradient 
descent with step size $\eta$. When the norm of the current gradient is small 
($\le g_{\text{thres}}$) (which indicates that the current iterate 
$\tilde{\x}_{t}$ is potentially near a saddle point), the algorithm 
adds a small random perturbation to the gradient. The perturbation 
is added at most only once every $t_{\text{thres}}$ iterations.

To simplify the analysis we choose the perturbation $\xi_t$ to be uniformly sampled from a $d$-dimensional ball\footnote{Note that uniform sampling from a $d$-dimensional ball can be done efficiently by sampling $U^{\frac{1}{d}}\times \frac{\Y}{\norm{\Y}}$ where $U \sim \text{Uniform}([0, 1])$ and $\Y \sim \mathcal{N}(0, \I_d)$ \citep{harman2010decompositional}.}. 
The use of the threshold $t_{\text{thres}}$ ensures that the dynamics
are mostly those of gradient descent.  If the function value does not 
decrease enough (by $f_{\text{thres}}$) after $t_{\text{thres}}$ iterations, the algorithm outputs $\tilde{\x}_{t_{\text{noise}}}$. The analysis in
this section shows that under this protocol, the output 
$\tilde{\x}_{t_{\text{noise}}}$ is necessarily ``close'' to 
a second-order stationary point.

\begin{algorithm}[t]
\caption{Perturbed Gradient Descent: $\text{PGD}(\x_0, \ell, \rho, \epsilon, c, \delta, \Delta_f)$}\label{algo:PGD}
\begin{algorithmic}
\STATE $\chi \leftarrow 3\max\{\log(\frac{d\ell\Delta_f}{c\epsilon^2\delta}), 4\}, 
 ~\eta \leftarrow \frac{c}{\ell},
 ~r \leftarrow \frac{\sqrt{c}}{\chi^2}\cdot\frac{\epsilon}{\ell}, 
 ~g_{\text{thres}} \leftarrow \frac{\sqrt{c}}{\chi^2}\cdot \epsilon,
 ~f_{\text{thres}} \leftarrow \frac{c}{\chi^3} \cdot \sqrt{\frac{\epsilon^3}{\rho}}, 
 ~t_{\text{thres}} \leftarrow \frac{\chi}{c^2}\cdot\frac{\ell}{\sqrt{\rho \epsilon}}$
\STATE $t_{\text{noise}} \leftarrow -t_{\text{thres}}-1$
\FOR{$t = 0, 1, \ldots $}
\IF{$\norm{\grad f(\x_t)} \le g_{\text{thres}}$ and $t - t_{\text{noise}} > t_{\text{thres}}$} 
\STATE $\tilde{\x}_t \leftarrow \x_t, \quad t_{\text{noise}} \leftarrow t$
\STATE $\x_t \leftarrow \tilde{\x}_t + \xi_t, \qquad \xi_t \text{~uniformly~} \sim \ball_0(r)$
\ENDIF
\IF{$t - t_{\text{noise}} = t_{\text{thres}}$ and $f(\x_t) - f(\tilde{\x}_{t_{\text{noise}}}) > -f_{\text{thres}}$}
\STATE \textbf{return} $\tilde{\x}_{t_{\text{noise}}}$
\ENDIF
\STATE $\x_{t+1} \leftarrow \x_t - \eta \grad f (\x_t)$
\ENDFOR
\end{algorithmic}
\end{algorithm}

We first state the assumptions that we require.
\begin{assumption}\label{as:smooth_Lip}
Function $f(\cdot)$ is both $\ell$-smooth and $\rho$-Hessian Lipschitz.
\end{assumption}

The Hessian Lipschitz condition ensures that the function is well-behaved near a saddle point, and the small perturbation we add will suffice to allow the 
subsequent gradient updates to escape from the saddle point. More formally, we have:
\begin{theorem}
\label{thm:main}
Assume that $f(\cdot)$ satisfies \ref{as:smooth_Lip}.  Then there 
exists an absolute constant $c_{\max}$ such that, for any $\delta>0, 
\epsilon \le \frac{\ell^2}{\rho}$, $\Delta_f \ge f(\x_0) - f^\star$, 
and constant $c \le c_{\max}$, $\text{PGD}(\x_0, \ell, \rho, \epsilon, c, \delta, \Delta_f)$ will output an \ESSP, with probability $1-\delta$, and 
terminate in the following number of iterations:
\begin{equation*}
O\left(\frac{\ell(f(\x_0) - f^\star)}{\epsilon^2}\log^{4}\left(\frac{d\ell\Delta_f}{\epsilon^2\delta}\right) \right).
\end{equation*}
\end{theorem}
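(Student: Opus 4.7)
The plan is to use $f$ itself as the potential function and to split the analysis according to the magnitude of $\norm{\grad f(\x_t)}$. In the \emph{large-gradient regime} ($\norm{\grad f(\x_t)} > g_{\text{thres}}$), the standard $\ell$-smoothness descent lemma with step size $\eta = c/\ell$ gives $f(\x_{t+1}) \le f(\x_t) - \tfrac{\eta}{2}\norm{\grad f(\x_t)}^2$, so each such iteration buys at least $\Omega(\eta g_{\text{thres}}^2) = \Omega(c^2 \epsilon^2/(\ell\chi^4))$ in function value. Summing against the total budget $f(\x_0) - f^\star$, these account for at most $\tlO(\ell(f(\x_0) - f^\star)/\epsilon^2)$ iterations, already matching the target bound up to the polylog factor $\chi^4$. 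So the work is entirely in the small-gradient regime, which is where one might get stuck at a saddle.

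In the \emph{small-gradient regime} ($\norm{\grad f(\tilde\x_t)} \le g_{\text{thres}}$ with perturbation permitted) I would inspect $\hess f(\tilde\x_t)$. If $\lambda_{\min}(\hess f(\tilde\x_t)) \ge -\sqrt{\rho\epsilon}$, then $\tilde\x_t$ is (up to constants absorbable in $c_{\max}$) already an \ESSP, and the termination test $f(\x_{t+t_{\text{thres}}}) - f(\tilde\x_t) > -f_{\text{thres}}$ will succeed because Assumption~\ref{as:smooth_Lip} bounds the possible decrease. Otherwise there is a direction $\e_1$ with $\e_1^\top \hess f(\tilde\x_t)\e_1 \le -\sqrt{\rho\epsilon}$, and the crux is the \emph{escape lemma}: conditional on this strict-saddle situation, with probability at least $1-\delta'$ over $\xi_t \sim \ball_0(r)$ the next $t_{\text{thres}}$ gradient steps satisfy $f(\x_{t+t_{\text{thres}}}) - f(\tilde\x_t) \le -f_{\text{thres}}$. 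Granted this lemma, each escape episode costs $t_{\text{thres}}$ iterations for $f_{\text{thres}}$ of progress; substituting the parameter choices in Algorithm~\ref{algo:PGD} gives per-step progress $f_{\text{thres}}/t_{\text{thres}} = \Theta(c^3 \epsilon^2/(\ell \chi^4))$, so these episodes contribute $\tlO(\ell(f(\x_0) - f^\star)/\epsilon^2)$ iterations as well. A union bound over all such episodes, with $\delta'$ set inversely proportional to this count, yields the claimed $1-\delta$ guarantee; the extra logarithmic factor is already built into $\chi$.

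The main obstacle is the escape lemma, which I plan to prove by a coupling / thin-band argument, which is the ``novel characterization of the geometry around saddle points'' promised in the introduction. Define the stuck region $\cXs \subset \ball_0(r)$ as the set of perturbations $\xi$ for which running $t_{\text{thres}}$ gradient steps from $\tilde\x_t + \xi$ fails to decrease $f$ by $f_{\text{thres}}$. The claim is that $\cXs$ lies inside a slab of width at most $w \ll r$ along $\e_1$, so that the probability of landing in it is bounded by $O(w\sqrt{d}/r)$, which the choice $\chi = \Theta(\log(d\ell\Delta_f/(c\epsilon^2\delta)))$ drives below $\delta'$. To bound the slab width, consider two trajectories $\{\x_t\}, \{\y_t\}$ initialized at $\xi$ and $\xi + \mu\e_1$ for $\mu$ slightly exceeding the target width. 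Writing $\grad f(\x) = \grad f(\tilde\x_t) + \hess f(\tilde\x_t)(\x - \tilde\x_t) + \vect{r}(\x)$ with $\norm{\vect{r}(\x)} \le \tfrac{\rho}{2}\norm{\x - \tilde\x_t}^2$ from Hessian Lipschitzness, the separation $\z_t = \x_t - \y_t$ obeys $\z_{t+1} = (\I - \eta \hess f(\tilde\x_t))\z_t + \text{(small error)}$, so its $\e_1$-component is amplified by a factor $(1 + \eta\sqrt{\rho\epsilon})^t$ per step. After $t_{\text{thres}} = \tilde{\Theta}(\ell/\sqrt{\rho\epsilon})$ iterations this factor exceeds $\exp(\Omega(\chi))$, which is far larger than the diameter of any plausible stuck region, so at least one of the two trajectories must travel a large distance, and a short computation using $\ell$-smoothness converts that distance into the required $f_{\text{thres}}$ drop.

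The delicate part, and what dictates the exact constants in $r$, $g_{\text{thres}}$, $f_{\text{thres}}$, and $t_{\text{thres}}$, is propagating the nonlinear error: the quadratic bound on $\vect{r}$ is only useful while the iterates stay in a ball of radius $O(\sqrt{\epsilon/\rho})$ around $\tilde\x_t$, and the parameter choices in Algorithm~\ref{algo:PGD} are calibrated precisely so that this ``linear zone'' holds for $t_{\text{thres}}$ steps — long enough for the exponential amplification along $\e_1$ to dominate the slowly growing perturbation error. I expect this tight bookkeeping to be the bulk of the technical work, whereas the potential-function accounting in the preceding two paragraphs is essentially mechanical once the escape lemma is available.
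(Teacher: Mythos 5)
Your proposal follows essentially the same route as the paper: the descent lemma for the large-gradient phase, a potential-function accounting plus a union bound over perturbation episodes, and an escape lemma proved by coupling two trajectories separated along $\e_1$, showing that the separation is amplified geometrically while the nonlinear (Hessian-Lipschitz) error stays controlled, so the stuck region has width $O(\delta r/\sqrt{d})$ along $\e_1$ and hence negligible volume. The one step you flag as delicate --- maintaining the ``linear zone'' for $t_{\text{thres}}$ iterations --- is resolved in the paper exactly as your framing suggests, via a stopping-time argument: as long as the (quadratic-approximation) function value has not yet dropped by the required amount, the iterates provably remain in a small ball around $\tilde{\x}$, so the quadratic error bound stays valid precisely for as long as it is needed.
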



Strikingly, Theorem~\ref{thm:main} shows that perturbed gradient descent finds a second-order stationary point in almost the same amount of time that gradient descent takes to find first-order stationary point. The step size $\eta$ is chosen as $O(1/\ell)$ which is in accord with classical analyses of convergence
to first-order stationary points. Though we state the theorem with a certain choice of parameters for simplicity of presentation, our result holds even if we vary the parameters up to constant factors.

Without loss of generality, we can focus on the case 
$\epsilon \le \ell^2/\rho$, as in Theorem \ref{thm:main}. 
This is because in the case $\epsilon>\ell^2/\rho$, standard gradient descent 
without perturbation---Theorem \ref{thm:grad_smooth}---easily 
solves the problem (since by \ref{as:smooth_Lip}, 
we always have $\lambda_{\min}(\hess f(\x)) \ge -\ell \ge - \sqrt{\rho \epsilon}$, which means that all \ESSP s are $\epsilon$-first order stationary points).

We believe that the dependence on at least one $\log d$ factor in 
the iteration complexity is unavoidable in the non-convex setting, 
as our result can be directly applied to the principal component 
analysis problem, for which the best known runtimes (for the power 
method or Lanczos method) incur a $\log d$ factor. Establishing
this formally is still an open question however.

To provide some intuition for Theorem \ref{thm:main}, consider an iterate $\x_t$ which is not yet an \ESSP. By definition, either (1) the gradient $\grad f(\x_t)$ is large, or (2) the Hessian $\hess f(\x_t)$ has a significant negative eigenvalue. Traditional analysis works in the first case. The crucial step in the proof of Theorem \ref{thm:main} involves handling the second case: when the gradient is small $\norm{\nabla f(\x_t)} \le g_{\text{thres}}$ and the Hessian has a significant negative eigenvalue $\lambda_{\min}(\hess f(\tilde{\x}_t)) \le -\sqrt{\rho\epsilon}$,
then adding a perturbation, followed by standard gradient descent 
for $t_{\text{thres}}$ steps, decreases the function value by at 
least $f_{\text{thres}}$, with high probability.  The proof of 
this fact relies on a novel characterization of geometry 
around saddle points (see Section \ref{sec:sketch})

If we are able to make stronger assumptions on the objective function 
we are able to strengthen our main result.  This further analysis is
presented in the next section.

\subsection{Functions with Strict Saddle Property}
In many real applications, objective functions further admit the property that all saddle points are strict 
\citep{ge2015escaping,sun2016complete,sun2016geometric,bhojanapalli2016global,ge2016matrix}. 
In this case, all second-order stationary points are local minima and hence convergence to second-order stationary points (Theorem \ref{thm:main}) is equivalent to convergence to local minima.

To state this result formally,
we introduce a robust version of the strict saddle 
property~\citep[cf.][]{ge2015escaping}:
\begin{assumption}\label{as:strict_saddle}
Function $f(\cdot)$ is $(\theta, \gamma, \zeta)$-\textbf{strict saddle}. That is, for any $\x$, at least one of following holds:
\begin{itemize}
\item $\norm{\grad f(\x)} \ge \theta$.
\item $\lambda_{\min}(\hess f(\x)) \le -\gamma$.
\item $\x$ is $\zeta$-close to $\cXstar$ --- the set of local minima.
\end{itemize}
\end{assumption}
Intuitively, the strict saddle assumption states that the $\R^d$ space can be divided into three regions: 1) a region where the gradient is large; 2) a region
where the Hessian has a significant negative eigenvalue (around saddle point); 
and 3) the region close to a local minimum. With this assumption, we immediately have the following corollary:

\begin{corollary}\label{cor:main_strictsaddle}
Let $f(\cdot)$ satisfy \ref{as:smooth_Lip} and \ref{as:strict_saddle}. 
Then, there exists an absolute constant $c_{\max}$ such that, 
for any $\delta>0, \Delta_f \ge f(\x_0) - f^\star$, constant 
$c \le c_{\max}$, and letting $\tilde{\epsilon} = \min(\theta, \gamma^2/\rho)$, 
$\text{PGD}(\x_0, \ell, \rho, \tilde{\epsilon}, c, \delta, \Delta_f)$ 
will output a point $\zeta$-close to $\cXstar$, with probability $1-\delta$, 
and terminate in the following number of iterations:
\begin{equation*}
O\left(\frac{\ell(f(\x_0) - f^\star)}{\tilde{\epsilon}^2}\log^{4}\left(\frac{d\ell\Delta_f}{\tilde{\epsilon}^2\delta}\right)  \right).
\end{equation*}
\end{corollary}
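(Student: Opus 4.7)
The plan is to derive Corollary \ref{cor:main_strictsaddle} as a direct consequence of Theorem \ref{thm:main} combined with Assumption \ref{as:strict_saddle}. The strategy has only two genuinely moving parts: invoke Theorem \ref{thm:main} with the prescribed $\tilde{\epsilon}$, then use the strict-saddle dichotomy to upgrade ``$\tilde{\epsilon}$-second-order stationary'' to ``$\zeta$-close to $\cXstar$''.

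First I would check that the hypotheses of Theorem \ref{thm:main} are satisfied when we instantiate $\epsilon \leftarrow \tilde{\epsilon}$. Assumption \ref{as:smooth_Lip} is given. The nontrivial precondition is $\tilde{\epsilon} \le \ell^2/\rho$. Since $\tilde{\epsilon} \le \gamma^2/\rho$ and since $\ell$-smoothness forces $\|\hess f(\x)\| \le \ell$ everywhere (so the strict-saddle parameter $\gamma$ must satisfy $\gamma \le \ell$ for Assumption \ref{as:strict_saddle} to be non-vacuous), we get $\tilde{\epsilon} \le \gamma^2/\rho \le \ell^2/\rho$, as required. Theorem \ref{thm:main} then guarantees that $\text{PGD}(\x_0,\ell,\rho,\tilde{\epsilon},c,\delta,\Delta_f)$ terminates in at most
\begin{equation*}
O\!\left(\frac{\ell(f(\x_0) - f^\star)}{\tilde{\epsilon}^2}\log^{4}\!\left(\frac{d\ell\Delta_f}{\tilde{\epsilon}^2\delta}\right)\right)
\end{equation*}
iterations and returns, with probability at least $1-\delta$, a point $\x$ satisfying $\norm{\grad f(\x)} \le \tilde{\epsilon}$ and $\lambda_{\min}(\hess f(\x)) \ge -\sqrt{\rho\tilde{\epsilon}}$.

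Next I would apply Assumption \ref{as:strict_saddle} to this output $\x$. From $\tilde{\epsilon} = \min(\theta,\gamma^2/\rho)$ we have $\tilde{\epsilon} \le \theta$, so $\norm{\grad f(\x)} \le \theta$, ruling out the ``large gradient'' branch. Similarly, $\tilde{\epsilon} \le \gamma^2/\rho$ gives $\sqrt{\rho\tilde{\epsilon}} \le \gamma$, hence $\lambda_{\min}(\hess f(\x)) \ge -\gamma$, ruling out the ``significant negative curvature'' branch. The only remaining branch of the strict-saddle trichotomy is that $\x$ is $\zeta$-close to $\cXstar$, which is exactly the conclusion sought.

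There is no real technical obstacle here; the only subtlety is the edge case at equality in the first two branches (for instance if $\tilde{\epsilon} = \theta$ exactly), but this is harmless because when $\x$ lies simultaneously in two branches of the trichotomy the final branch may still be invoked, and more concretely the constant $c \le c_{\max}$ can be absorbed to produce strict inequalities, or the result can be stated for $\tilde{\epsilon}/2$ without affecting the asymptotic iteration bound. The iteration count in the corollary statement is then simply the iteration count from Theorem \ref{thm:main} with $\epsilon$ replaced by $\tilde{\epsilon}$, and the failure probability $\delta$ is inherited unchanged.
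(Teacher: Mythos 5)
Your proposal is correct and is exactly the argument the paper intends: the corollary is stated as an immediate consequence of Theorem \ref{thm:main} applied with $\epsilon = \tilde{\epsilon}$, after which the strict-saddle trichotomy rules out the large-gradient and negative-curvature branches, forcing the output to be $\zeta$-close to $\cXstar$. Your verification of the precondition $\tilde{\epsilon} \le \ell^2/\rho$ via $\gamma \le \ell$ and your handling of the boundary cases are both sound and, if anything, more careful than the paper's one-line justification.
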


Corollary~\ref{cor:main_strictsaddle} shows that by substituting 
$\epsilon$ in Theorem \ref{thm:main} using $\tilde{\epsilon} = \min(\theta, \gamma^2/\rho)$, the 
output of perturbed gradient descent will be in the $\zeta$-neighborhood of some local minimum.


Note although Corollary \ref{cor:main_strictsaddle} only explicitly asserts that the output will lie within some fixed radius $\zeta$ from a local minimum.  In many real applications, we can further write $\zeta$ as a function $\zeta(\cdot)$ of gradient threshold $\theta$, so that when $\theta$ decreases, $\zeta(\theta)$ decreases linearly or polynomially depending on $\theta$. Meanwhile, parameter $\gamma$ is always nondecreasing when $\theta$ decreases due to the nature of this strict saddle definition. Therefore, in these cases, the above corollary further gives a convergence rate to a local minimum.






\subsection{Functions with Strong Local Structure}

The convergence rate in Theorem \ref{thm:main} is polynomial in $\epsilon$, which is similar to that of Theorem~\ref{thm:grad_smooth}, but is worse than the
rate of Theorem~\ref{thm:strongly_convex} because of the lack of strong convexity.
Although global strong convexity does not hold in the non-convex setting
that is our focus, in many machine learning problems the objective 
function may have a favorable local structure in the neighborhood of 
local minima \citep{ge2015escaping,sun2016complete,sun2016geometric,sun2016guaranteed}. Exploiting this property can lead to much faster convergence (linear convergence) to local minima. One such property that ensures such
convergence is a local form of smoothness and strong convexity:
\edef\oldassumption{\the\numexpr\value{assumption}+1}
\setcounter{assumption}{0}
\renewcommand{\theassumption}{A\oldassumption.\alph{assumption}}
\begin{assumption}\label{as:sc}
In a $\zeta$-neighborhood of the set of local minima $\cXstar$, 
the function $f(\cdot)$ is $\alpha$-\textbf{strongly convex}, 
and $\beta$-smooth.
\end{assumption}
Here we use different letter $\beta$ to denote the local smoothness parameter 
(in contrast to the global smoothness parameter $\ell$). Note that we always 
have $\beta \le \ell$. 
However, often even local $\alpha$-strong convexity does not hold. 
We thus introduce the following relaxation:
\begin{assumption}\label{as:regularity}
In a $\zeta$-neighborhood of the set of local minima $\cXstar$, 
the function $f(\cdot)$ satisfies a $(\alpha, \beta)$-\textbf{regularity condition} if for any $\x$ in this neighborhood:
\begin{equation}\label{eq:regularity}
\la \grad f(\x), \x - \projX(\x) \ra \ge \frac{\alpha}{2}\norm{\x- \projX(\x)}^2 + \frac{1}{2\beta} \norm{\grad f(\x)}^2.
\end{equation} 
\end{assumption}

Here $\projX(\cdot)$ is the projection on to the set $\cXstar$. Note $(\alpha, \beta)$-regularity condition is more general and is directly implied by standard $\beta$-smooth and $\alpha$-strongly convex conditions. 
This regularity condition commonly appears in low-rank problems such as matrix sensing and matrix completion, and has been used in \citet{bhojanapalli2016global,zheng2016convergence},
where local minima form a connected set, and where the Hessian is strictly positive only with respect to directions pointing outside the set of local minima.

\begin{algorithm}[t]
\caption{Perturbed Gradient Descent with Local Improvement: $\text{PGDli}(\x_0, \ell, \rho, \epsilon, c, \delta, \Delta_f, \beta)$}\label{algo:PGDli}
\begin{algorithmic}
\STATE $\x_0 \leftarrow \text{PGD}(\x_0, \ell, \rho, \epsilon, c, \delta, \Delta_f)$
\FOR{$t = 0, 1, \ldots $}
\STATE $\x_{t+1} \leftarrow \x_t - \frac{1}{\beta} \grad f (\x_t)$
\ENDFOR
\end{algorithmic}
\end{algorithm}

Gradient descent naturally exploits local structure very well. In Algorithm \ref{algo:PGDli}, we first run Algorithm \ref{algo:PGD} to output a point within 
the neighborhood of a local minimum, and then perform standard gradient descent with step size $\frac{1}{\beta}$. We can then prove the following theorem:
\begin{theorem}\label{thm:main_local}
Let $f(\cdot)$ satisfy \ref{as:smooth_Lip}, \ref{as:strict_saddle}, and \ref{as:sc} (or \ref{as:regularity}). 
Then there exists an absolute constant $c_{\max}$ such that, 
for any $\delta>0, \epsilon>0, \Delta_f \ge f(\x_0) -f^\star$, 
constant $c \le c_{\max}$, and letting $\tilde{\epsilon} = 
\min(\theta, \gamma^2/\rho)$, $\text{PGDli}(\x_0, \ell, \rho, \tilde{\epsilon}, c, \delta, \Delta_f, \beta)$ will output a point that is 
$\epsilon$-close to $\cXstar$, with probability $1-\delta$, 
in the following number of iterations:
\begin{equation*}
O\left(\frac{\ell(f(\x_0) - f^\star)}{\tilde{\epsilon}^2}\log^{4}\left(\frac{d\ell\Delta_f}{\tilde{\epsilon}^2\delta}\right)  + \frac{\beta}{\alpha}\log \frac{\zeta}{\epsilon}\right).
\end{equation*}
\end{theorem}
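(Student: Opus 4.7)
}
The plan is to analyze the two phases of Algorithm \ref{algo:PGDli} separately and then add the iteration counts. The first phase is exactly $\text{PGD}(\x_0,\ell,\rho,\tilde{\epsilon},c,\delta,\Delta_f)$, so by Corollary \ref{cor:main_strictsaddle}, with probability at least $1-\delta$ it produces, in the first summand of the claimed number of iterations, a point $\x_0'$ that is $\zeta$-close to $\cXstar$. From this point onward I will condition on that success event, and it suffices to show that vanilla gradient descent with step size $1/\beta$ drives $\x_t$ to be $\epsilon$-close to $\cXstar$ within $O\!\left(\frac{\beta}{\alpha}\log\frac{\zeta}{\epsilon}\right)$ additional iterations.

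First I would reduce \ref{as:sc} to \ref{as:regularity}: in the $\zeta$-neighborhood, $\alpha$-strong convexity plus $\beta$-smoothness implies the $(\alpha,\beta)$-regularity inequality \eqref{eq:regularity}, via the classical coercivity bound
\[
\la \grad f(\x)-\grad f(\y),\x-\y\ra \ge \frac{\alpha\beta}{\alpha+\beta}\norm{\x-\y}^2 + \frac{1}{\alpha+\beta}\norm{\grad f(\x)-\grad f(\y)}^2,
\]
applied at $\y=\projX(\x)\in\cXstar$ (where $\grad f(\y)=0$), together with rescaling the constants by factors of two. So I only need to handle the case where \ref{as:regularity} holds.

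The core of the argument is then a single induction on the distance to $\cXstar$. Writing $\x^+ = \x - \tfrac{1}{\beta}\grad f(\x)$ and using $\norm{\x^+ - \projX(\x^+)} \le \norm{\x^+ - \projX(\x)}$, I would expand
\begin{align*}
\norm{\x^+ - \projX(\x)}^2
&= \norm{\x-\projX(\x)}^2 - \tfrac{2}{\beta}\la \grad f(\x), \x-\projX(\x)\ra + \tfrac{1}{\beta^2}\norm{\grad f(\x)}^2 \\
&\le \norm{\x-\projX(\x)}^2 - \tfrac{\alpha}{\beta}\norm{\x-\projX(\x)}^2 - \tfrac{1}{\beta^2}\norm{\grad f(\x)}^2 + \tfrac{1}{\beta^2}\norm{\grad f(\x)}^2 \\
&= \left(1-\tfrac{\alpha}{\beta}\right)\norm{\x-\projX(\x)}^2,
\end{align*}
where the inequality uses \eqref{eq:regularity}. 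This simultaneously shows (i) that the distance to $\cXstar$ is monotonically non-increasing, so by induction every iterate remains in the $\zeta$-neighborhood where \ref{as:regularity} is valid, and (ii) that the squared distance contracts by a factor of $(1-\alpha/\beta)$ per step. Iterating from $\norm{\x_0' - \projX(\x_0')} \le \zeta$ down to $\epsilon$ requires at most $\lceil \tfrac{\beta}{\alpha}\log\tfrac{\zeta^2}{\epsilon^2}\rceil = O(\tfrac{\beta}{\alpha}\log\tfrac{\zeta}{\epsilon})$ steps, giving the second summand in the claimed iteration count.

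The main delicacies I anticipate are: (a) verifying that \ref{as:sc} really implies \ref{as:regularity} with the same (up to constants) parameters $\alpha,\beta$, which is a short but nontrivial convex-analysis computation and the only place where global convexity-style tools are used; and (b) making sure the projection-based contraction argument is valid even when $\cXstar$ is a connected manifold of minima rather than a single point---this is exactly the setting for which \ref{as:regularity} was formulated, and the use of $\projX(\x)$ (the nearest point in $\cXstar$) instead of a fixed $\x^\star$ sidesteps the difficulty, since the $\norm{\x^+-\projX(\x^+)} \le \norm{\x^+-\projX(\x)}$ inequality requires nothing more than that $\projX(\x)\in\cXstar$.
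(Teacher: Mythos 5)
Your proposal is correct and matches the paper's proof: the same two-phase decomposition via Corollary \ref{cor:main_strictsaddle}, followed by the same projection-based contraction $\norm{\x_{t+1}-\projX(\x_{t+1})}^2 \le \left(1-\frac{\alpha}{\beta}\right)\norm{\x_t - \projX(\x_t)}^2$, which simultaneously keeps all iterates inside the $\zeta$-neighborhood by induction and yields the linear rate $O(\frac{\beta}{\alpha}\log\frac{\zeta}{\epsilon})$. The only (immaterial) difference is in deducing \ref{as:regularity} from \ref{as:sc}: you invoke the standard co-coercivity inequality at $\y=\projX(\x)$, while the paper first notes that strong convexity isolates the local minima (so $\projX(\x)=\x^\star$) and then combines the strong-convexity lower bound with the smoothness descent step $f(\x - \frac{1}{\beta}\grad f(\x)) \le f(\x) - \frac{1}{2\beta}\norm{\grad f(\x)}^2$ and $f(\x^\star)\le f(\x-\frac{1}{\beta}\grad f(\x))$; both routes give the regularity condition with the required parameters.
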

Theorem \ref{thm:main_local} says that if strong local structure is present, the convergence rate can be boosted to linear convergence ($\log\frac{1}{\epsilon}$).
In this theorem we see that sequence of iterations can be decomposed 
into two phases. In the first phase, perturbed gradient descent finds 
a $\zeta$-neighborhood by Corollary \ref{cor:main_strictsaddle}. 
In the second phase, standard gradient descent takes us from 
$\zeta$ to $\epsilon$-close to a local minimum. 
Standard gradient descent and Assumption \ref{as:sc} (or \ref{as:regularity}) make sure that the iterate never steps out of a $\zeta$-neighborhood in this second phase, giving a result similar to Theorem \ref{thm:strongly_convex} with linear convergence. 

Finally, we note our choice of local conditions (Assumption \ref{as:sc} and \ref{as:regularity}) are not special. The interested reader can refer to \citet{karimi2016linear} for other relaxed and alternative notions of convexity, which can also be potentially combined with Assumptions $\ref{as:smooth_Lip} and \ref{as:strict_saddle}$ to yield convergence results of a similar flavor as that of Theorem \ref{thm:main_local}.



\section{Example --- Matrix Factorization}
As a simple example to illustrate how to apply our general theorems to specific non-convex optimization problems, we consider a symmetric low-rank matrix factorization problem, based on the following objective function:
\begin{equation}\label{eq:obj}
\min_{\U \in \R^{d\times r}} f(\U) = \frac{1}{2}\fnorm{\U\U\trans - \M^\star}^2,
\end{equation} 
where $\M^\star \in \R^{d\times d}$.  For simplicity, we assume 
$\rank(\M^\star) = r$, and denote $\sigstarl \defeq \sigma_1(\M^\star)$, $\sigstarr \defeq \sigma_r(\M^\star)$.
Clearly, in this case the global minimum of function value is zero, which is achieved at $\V^\star = \T\D^{1/2}$ where $\T\D\T\trans$ is the SVD of the symmetric real matrix $\M^\star$.


The following two lemmas show that the objective function in Eq.~\eqref{eq:obj} satisfies the geometric assumptions \ref{as:smooth_Lip}, \ref{as:strict_saddle},and \ref{as:regularity}.  Moreover, all local minima are global minima.

\begin{lemma}\label{lem:mf_smooth}
For any $\Gamma \ge \sigstarl$, the function $f(\U)$ defined in 
Eq.~\eqref{eq:obj} is $8\Gamma$-smooth and $12\Gamma^{1/2}$-Hessian Lipschitz,
inside the region $\{\U| \norm{\U}^2 < \Gamma \}$.
\end{lemma}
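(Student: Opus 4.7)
The plan is to work directly with the explicit formulas for $\grad f(\U)$ and $\hess f(\U)$ regarded as operators on $\R^{d\times r}$ equipped with the Frobenius inner product, then read off both the smoothness and Hessian-Lipschitz constants from elementary bounds. First I would compute
\begin{equation*}
\grad f(\U) = 2(\U\U\trans - \M^\star)\U,
\end{equation*}
and then differentiate once more along a direction $\V$ to obtain the bilinear form
\begin{equation*}
\la \V, \hess f(\U)[\V] \ra \;=\; \fnorm{\U\V\trans + \V\U\trans}^2 \;+\; 2\la \U\U\trans - \M^\star,\, \V\V\trans \ra.
\end{equation*}
This identity is the workhorse for both bounds.

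For the smoothness claim, I would bound the two pieces separately on the region $\norm{\U}^2 < \Gamma$. The first piece gives $\fnorm{\U\V\trans + \V\U\trans}^2 \le 4\fnorm{\U\V\trans}^2 \le 4\norm{\U}^2\fnorm{\V}^2 \le 4\Gamma \fnorm{\V}^2$. For the second piece, I would use $|\la A, \V\V\trans\ra| \le \norm{A}\cdot \fnorm{\V}^2$ together with $\norm{\U\U\trans - \M^\star} \le \norm{\U}^2 + \norm{\M^\star} \le 2\Gamma$ (using $\sigstarl \le \Gamma$), giving a bound of $4\Gamma \fnorm{\V}^2$. Summing yields $|\la \V, \hess f(\U)[\V] \ra| \le 8\Gamma \fnorm{\V}^2$, i.e.\ $8\Gamma$-smoothness.

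For the Hessian-Lipschitz claim, write $\bdelta = \U_1 - \U_2$ and estimate the bilinear form of $\hess f(\U_1) - \hess f(\U_2)$ at an arbitrary direction $\V$. For the quadratic part I would use the identity $\|a\|^2 - \|b\|^2 = \la a+b, a-b\ra$ with $a = \U_1\V\trans + \V\U_1\trans$, $b = \U_2\V\trans + \V\U_2\trans$; then $\fnorm{a+b} \le 2(\norm{\U_1}+\norm{\U_2})\fnorm{\V} \le 4\sqrt{\Gamma}\fnorm{\V}$ and $\fnorm{a-b} \le 2\fnorm{\bdelta}\norm{\V} \le 2\fnorm{\bdelta}\fnorm{\V}$, giving $8\sqrt{\Gamma}\fnorm{\bdelta}\fnorm{\V}^2$. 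For the inner-product part I would use the telescoping identity $\U_1\U_1\trans - \U_2\U_2\trans = \bdelta\U_1\trans + \U_2\bdelta\trans$, so the term is bounded in absolute value by $2(\norm{\U_1}+\norm{\U_2})\fnorm{\bdelta}\fnorm{\V}^2 \le 4\sqrt{\Gamma}\fnorm{\bdelta}\fnorm{\V}^2$. Combining gives $12\sqrt{\Gamma}\fnorm{\bdelta}\fnorm{\V}^2$, which translates into $\norm{\hess f(\U_1) - \hess f(\U_2)} \le 12\Gamma^{1/2}\fnorm{\U_1 - \U_2}$, as desired.

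The only mild subtlety is to be consistent about what norm is used to measure the operator $\hess f(\U)$ on $\R^{d\times r}$: since the algorithmic analysis and Definitions \ref{def:smooth} and \ref{def:HessianLip} treat $\U$ as a vector via its Frobenius inner product, the relevant operator norm of the Hessian is the one induced by $\fnorm{\cdot}$, so all the bounds above are really of the form $|\la \V, \cdot\, [\V] \ra| \le C \fnorm{\V}^2$, which is exactly what we need. I do not anticipate any real obstacle; the main care is just in the choice of mixed spectral/Frobenius norm inequalities so that the resulting constants come out to exactly $8$ and $12$.
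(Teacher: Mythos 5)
Your proposal is correct, and the Hessian-Lipschitz half is essentially the paper's own argument: the paper also splits $\hess f(\U)(\mZ,\mZ) - \hess f(\V)(\mZ,\mZ)$ into the difference of squared Frobenius norms (bounded by $8\Gamma^{1/2}$, via the same $\la a+b, a-b\ra$ factorization written out as two inner products) and the inner-product term (bounded by $4\Gamma^{1/2}$ via the telescoping $\U\U\trans - \V\V\trans = \U(\U-\V)\trans + (\U-\V)\V\trans$), arriving at the same $12\Gamma^{1/2}$.

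Where you genuinely diverge is the smoothness half. The paper bounds $\fnorm{\grad f(\U_1) - \grad f(\U_2)}$ directly, using the three-term telescoping of $\U_1\U_1\trans\U_1 - \U_2\U_2\trans\U_2$ to get $2(3\Gamma + \sigstarl)\fnorm{\U_1-\U_2} \le 8\Gamma\fnorm{\U_1-\U_2}$. You instead bound the Hessian quadratic form by $8\Gamma\fnorm{\V}^2$ pointwise and deduce the gradient-Lipschitz property. That deduction needs the region $\{\U : \norm{\U}^2 < \Gamma\}$ to be convex so that the mean-value/integral representation of $\grad f(\U_1) - \grad f(\U_2)$ stays inside the region; it is convex (it is an open spectral-norm ball), so the step is sound, but you should say this explicitly since the lemma is a statement about a restricted region rather than all of $\R^{d\times r}$. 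Your route is arguably more economical, since the formula for $\hess f(\U)(\V,\V)$ is needed anyway for the Lipschitz part and both constants then fall out of the same identity; the paper's route avoids any appeal to convexity of the region. Both land on exactly the constants $8\Gamma$ and $12\Gamma^{1/2}$.
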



\begin{lemma}\label{lem:mf_strictsaddle}
For function $f(\U)$ defined in Eq.\eqref{eq:obj}, all local minima are global minima. The set of global minima is $\cXstar = \{\V^\star \mR | \mR\mR\trans=\mR\trans\mR = \I \}$. Furthermore, $f(\U)$ satisfies:
\begin{enumerate}
\item $(\frac{1}{24}(\sigstarr)^{3/2}, \frac{1}{3}\sigstarr, \frac{1}{3}(\sigstarr)^{1/2})$-strict saddle property.
\item $(\frac{2}{3}\sigstarr, 10\sigstarl)$-regularity condition in $\frac{1}{3}(\sigstarr)^{1/2}$ neighborhood of $\cXstar$.
\end{enumerate}
\end{lemma}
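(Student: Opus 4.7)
The proof begins with the standard computation
\begin{align*}
\nabla f(\U) &= 2(\U\U^\top - \M^\star)\U, \\
\la \nabla^2 f(\U)[\Z], \Z\ra &= \|\U\Z^\top + \Z\U^\top\|_F^2 + 2\la \U\U^\top - \M^\star, \Z\Z^\top\ra.
\end{align*}
First-order stationarity $\M^\star \U = \U\U^\top \U$, combined with a thin SVD $\U = \A\Sigma\B^\top$, forces every left singular vector of $\U$ with nonzero singular value to be an eigenvector of $\M^\star$ whose eigenvalue is the squared singular value. A short second-order check then shows a critical point is a local minimum only when the span of its nonzero singular directions equals the top-$r$ eigenspace of $\M^\star$, forcing $\U\U^\top = \M^\star$. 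This proves $\cXstar = \{\V^\star \mR : \mR \in O(r)\}$ and classifies every remaining critical point as a strict saddle.

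For the quantitative claims I would work in the Procrustes-optimal gauge: let $\mR^\star \in \argmin_{\mR \in O(r)} \|\U - \V^\star \mR\|_F$ and $\Delta \defeq \U - \V^\star \mR^\star$. The Procrustes first-order conditions force $\mR^{\star\top}(\V^\star)^\top \U$ to be symmetric positive semidefinite, and consequently $\U^\top\Delta = \U^\top\U - \U^\top\V^\star\mR^\star$ is symmetric, so $\la \U\Delta^\top, \Delta \U^\top \ra = \tr((\U^\top \Delta)^2) = \|\U^\top \Delta\|_F^2 \ge 0$. Expanding $\U = \V^\star\mR^\star + \Delta$ and using $\V^\star(\V^\star)^\top = \M^\star$ yields the key identity
\begin{equation*}
\U\U^\top - \M^\star = \U\Delta^\top + \Delta \U^\top - \Delta\Delta^\top,
\end{equation*}
from which, combined with the symmetry of $\U\U^\top - \M^\star$ and trace cyclicity, one derives
\begin{align*}
\la \nabla f(\U), \Delta \ra &= \|\U\Delta^\top + \Delta\U^\top\|_F^2 - 2 \la \Delta\Delta^\top, \U\Delta^\top \ra, \\
\la \nabla^2 f(\U)[\Delta], \Delta \ra &= 3\|\U\Delta^\top + \Delta\U^\top\|_F^2 - 2\la \nabla f(\U), \Delta \ra - 2\|\Delta\Delta^\top\|_F^2.
\end{align*}

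Inside the $\tfrac{1}{3}\sqrt{\sigstarr}$-neighborhood, Weyl's inequality gives $\sigma_r(\U) \ge \tfrac{2}{3}\sqrt{\sigstarr}$, and the PSD gauge yields $\|\U\Delta^\top + \Delta\U^\top\|_F^2 = 2\|\U\Delta^\top\|_F^2 + 2\|\U^\top\Delta\|_F^2 \ge \tfrac{8}{9}\sigstarr\|\Delta\|_F^2$. Combining with the crude estimate $\|\nabla f(\U)\|_F \le 2\|\U\|\cdot\|\U\U^\top - \M^\star\|_F$ and $\|\U\|^2 \le 2\sigstarl$ in this neighborhood lets one split the lower bound on $\la \nabla f(\U), \Delta\ra$ into an $\tfrac{\alpha}{2}\|\Delta\|_F^2$ piece and a $\tfrac{1}{2\beta}\|\nabla f(\U)\|_F^2$ piece with $\alpha = \tfrac{2}{3}\sigstarr$ and $\beta = 10\sigstarl$, giving the regularity condition. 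Outside the neighborhood, when $\|\Delta\|_F \ge \tfrac{1}{3}\sqrt{\sigstarr}$, I would lower bound $\|\Delta\Delta^\top\|_F^2$ using the PSD-gauge control on the spectrum of $\Delta$ and plug into the Hessian identity above to obtain the dichotomy: either $\|\nabla f(\U)\|_F \ge \tfrac{1}{24}\sigstarr^{3/2}$ (first strict-saddle alternative), or the direction $\Delta/\|\Delta\|_F$ witnesses $\lambda_{\min}(\nabla^2 f(\U)) \le -\tfrac{1}{3}\sigstarr$ (second alternative).

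The main technical obstacle is sign and constant tracking. The Procrustes PSD gauge is indispensable: without the symmetry of $\mR^{\star\top}(\V^\star)^\top\U$, the cross term $\la \U\Delta^\top, \Delta\U^\top\ra$ could become negative and collapse both the regularity bound and the negative-curvature witness. Producing the exact constants $\tfrac{1}{24}$, $\tfrac{1}{3}$, $\tfrac{2}{3}$, and $10$ then requires delicately balancing the quartic $\|\Delta\|_F^4$ contributions from $\|\Delta\Delta^\top\|_F^2$ against the quadratic $\sigstarr\|\Delta\|_F^2$ contributions from $\|\U\Delta^\top + \Delta\U^\top\|_F^2$, particularly near the crossover at $\|\Delta\|_F \approx \tfrac{1}{3}\sqrt{\sigstarr}$.
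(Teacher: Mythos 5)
Your framework coincides with the paper's: the same gradient and Hessian formulas, the same Procrustes-optimal gauge making $\U\trans\V^\star\mR^\star$ symmetric PSD, the same decomposition $\U\U\trans-\M^\star = \U\Delta\trans+\Delta\U\trans-\Delta\Delta\trans$, and your two displayed identities are algebraically equivalent to the paper's form $\hess f(\U)(\Delta,\Delta)=\fnorm{\Delta\Delta\trans}^2-3\fnorm{\U\U\trans-\M^\star}^2+4\la\grad f(\U),\Delta\ra$. Your regularity argument is structured the same way (prove the $\alpha$-part and the $\beta$-part separately and combine), and your SVD classification of critical points is a somewhat more explicit route to ``all local minima are global'' than the paper's, which obtains this as a byproduct of the two quantitative properties.

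There are, however, two concrete gaps. First, the crux of the strict-saddle claim is the inequality $3\fnorm{\U\U\trans-\M^\star}^2-\fnorm{\Delta\Delta\trans}^2\ge(4\sqrt{3}-6)\,\sigstarr\fnorm{\Delta}^2$, valid for \emph{every} $\U$ with no neighborhood restriction. The paper proves it by expanding in $\U^\star$ and $\Delta$ and completing squares so that the only leftover terms are $\tr(\U^{\star}{}\trans\U^\star\Delta\trans\Delta)$ and $\tr(\U^{\star}{}\trans\U\,\Delta\trans\Delta)$, both nonnegative precisely because of the PSD gauge. Your stated plan --- lower bound $\fnorm{\Delta\Delta\trans}^2$ and plug into the Hessian identity --- does not control the \emph{positive} term $3\fnorm{\U\Delta\trans+\Delta\U\trans}^2$ in your version of the identity: outside the neighborhood you have no lower bound on $\sigma_r(\U)$ (Weyl only helps inside it), and at a degenerate point such as $\U=0$ all of the negative curvature must come from the cross terms rather than from $\fnorm{\Delta\Delta\trans}^2$ alone. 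Without this completion-of-squares step the dichotomy with constants $\tfrac{1}{24}(\sigstarr)^{3/2}$ and $\tfrac13\sigstarr$ is not established. Second, the ``crude estimate'' $\fnorm{\grad f(\U)}\le 2\norm{\U}\fnorm{\U\U\trans-\M^\star}$ is too lossy to yield $\beta=10\sigstarl$: chasing it through gives $\fnorm{\grad f(\U)}^2=O\bigl((\sigstarl)^2\fnorm{\Delta}^2\bigr)=O\bigl(\tfrac{(\sigstarl)^2}{\sigstarr}\la\grad f(\U),\Delta\ra\bigr)$, i.e.\ $\beta=O((\sigstarl)^2/\sigstarr)$, worse by a factor of the condition number, which would degrade the second-phase rate in Theorem~\ref{thm:mf_global} from $O(\cn^\star\log(1/\epsilon))$ to $O((\cn^\star)^2\log(1/\epsilon))$. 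The paper instead bounds $\fnorm{\grad f(\U)}^2$ termwise against the two summands $\fnorm{\Delta\trans\U}^2$ and $\tr(\U^{\star}{}\trans\U\Delta\trans\Delta)$ of $\tfrac12\la\grad f(\U),\Delta\ra$, each picking up only a single factor of $O(\sigstarl)$.
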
 

 
One caveat is that since the objective function is actually a fourth-order polynomial with respect to $\U$, the smoothness and Hessian Lipschitz parameters from Lemma~\ref{lem:mf_smooth} naturally depend on $\norm{\U}$.
Fortunately, we can further show that gradient descent (even with perturbation) does not increase $\norm{\U}$ beyond $O(\max\{\norm{\U_0}, (\sigstarl)^{1/2}\})$. Then, applying Theorem \ref{thm:main_local} gives:

\begin{theorem}\label{thm:mf_global}
There exists an absolute constant $c_{\max}$ such that the following holds. 
For the objective function in Eq.~\eqref{eq:obj}, for any $\delta >0$ and 
constant $c\le c_{\max}$, and for $\Gamma^{1/2} \defeq 2\max\{\norm{\U_0}, 
3(\sigstarl)^{1/2}\}$, the output of $\text{PGDli}(\U_0, 8\Gamma, 
12\Gamma^{1/2}, \frac{(\sigstarr)^{2}}{108\Gamma^{1/2}}, c, \delta, 
\frac{r\Gamma^2}{2}, 10\sigstarl)$, will be $\epsilon$-close to the 
global minimum set $\cXstar$, with probability $1-\delta$, after the
following number of iterations:
\begin{equation*}
O\left(r\left(\frac{ \Gamma}{\sigstarr}\right)^4\log^{4}\left(\frac{d \Gamma}{\delta\sigstarr}\right)  + \frac{\sigstarl}{\sigstarr}\log \frac{\sigstarr}{\epsilon} \right).
\end{equation*}
\end{theorem}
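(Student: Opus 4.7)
The plan is to verify that $\text{PGDli}$ with the parameter choice in the theorem satisfies the hypotheses of Theorem \ref{thm:main_local}, and then to substitute. The only delicate point is that Lemma \ref{lem:mf_smooth} gives smoothness and Hessian-Lipschitz constants only on $\{\U : \|\U\|^2 < \Gamma\}$, whereas the quartic $f$ is \emph{not} globally $8\Gamma$-smooth. So the heart of the proof is an invariant showing that the iterates never leave this region.

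\textbf{Step 1 (iterate-norm invariant).} I would prove as a standalone lemma that every iterate $\U_t$ produced by $\text{PGDli}$, during both the PGD phase and the subsequent local-refinement phase, satisfies $\|\U_t\|^2 < \Gamma$. The key algebraic fact is that $\grad f(\U) = 2(\U\U\trans - \M^\star)\U$ obeys
\[
\langle \grad f(\U), \U \rangle \;\ge\; 2\|\U\|^2\bigl(\|\U\|^2 - \sigstarl\bigr),
\]
so once $\|\U\|^2$ exceeds roughly $\sigstarl$ the gradient points outward along $\U$, and with step size $\eta \le c/(8\Gamma)$ the update $\U - \eta \grad f(\U)$ is contractive in $\|\cdot\|$. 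Since $\|\U_0\| \le \Gamma^{1/2}/2$ and each perturbation adds at most $r = O(\epsilon/\ell)$ in norm — negligible compared to $\Gamma^{1/2} - 2(\sigstarl)^{1/2}$ by our parameter choices — an induction gives the invariant. This invariant is the main obstacle; everything else is bookkeeping.

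\textbf{Step 2 (parameter arithmetic).} Granted the invariant, Lemma \ref{lem:mf_smooth} supplies $\ell = 8\Gamma$, $\rho = 12\Gamma^{1/2}$ along the entire trajectory, while Lemma \ref{lem:mf_strictsaddle} supplies the strict-saddle triple $(\theta,\gamma,\zeta) = \bigl(\tfrac{1}{24}(\sigstarr)^{3/2},\tfrac{1}{3}\sigstarr,\tfrac{1}{3}(\sigstarr)^{1/2}\bigr)$ and local constants $(\alpha,\beta) = (\tfrac{2}{3}\sigstarr, 10\sigstarl)$. A direct computation gives
\[
\tilde{\epsilon} = \min\!\Bigl(\theta,\,\tfrac{\gamma^2}{\rho}\Bigr) = \min\!\Bigl(\tfrac{(\sigstarr)^{3/2}}{24},\,\tfrac{(\sigstarr)^2}{108\Gamma^{1/2}}\Bigr) = \tfrac{(\sigstarr)^2}{108\Gamma^{1/2}},
\]
using $\Gamma^{1/2} \ge 6(\sigstarl)^{1/2} \ge 6(\sigstarr)^{1/2}$ from the definition of $\Gamma$. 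The side conditions of Theorem \ref{thm:main_local} are immediate: $\tilde{\epsilon} \le \ell^2/\rho$ holds trivially, and $\Delta_f = r\Gamma^2/2 \ge f(\U_0) - f^\star$ follows from $\|\U_0\U_0\trans\|_{\text{F}} \le \sqrt{r}\,\|\U_0\|^2 \le \sqrt{r}\,\Gamma/4$ together with $\|\M^\star\|_{\text{F}} \le \sqrt{r}\,\sigstarl \le \sqrt{r}\,\Gamma/36$.

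\textbf{Step 3 (invoke Theorem \ref{thm:main_local} and simplify).} Applying Theorem \ref{thm:main_local} returns a point $\epsilon$-close to $\cXstar$ with probability $1-\delta$ in
\[
O\!\left(\tfrac{\ell\,\Delta_f}{\tilde{\epsilon}^{2}}\log^{4}\!\Bigl(\tfrac{d\,\ell\,\Delta_f}{\tilde{\epsilon}^{2}\delta}\Bigr) \;+\; \tfrac{\beta}{\alpha}\log\tfrac{\zeta}{\epsilon}\right)
\]
iterations. Substituting $\ell = 8\Gamma$, $\Delta_f = r\Gamma^2/2$, $\tilde\epsilon = (\sigstarr)^2/(108\Gamma^{1/2})$, $\beta/\alpha = 15\,\sigstarl/\sigstarr$, and absorbing constants inside logs yields
\[
O\!\left(r\Bigl(\tfrac{\Gamma}{\sigstarr}\Bigr)^{4}\log^{4}\!\Bigl(\tfrac{d\,\Gamma}{\delta\,\sigstarr}\Bigr) + \tfrac{\sigstarl}{\sigstarr}\log\tfrac{\sigstarr}{\epsilon}\right),
\]
which is precisely the claimed rate. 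As noted, the only non-routine part is Step 1; the remainder is a careful substitution into the general local-convergence theorem.
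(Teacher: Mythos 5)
Your overall architecture is exactly the paper's: establish the spectral-norm invariant $\norm{\U_t}^2 \le \Gamma$ so that Lemmas \ref{lem:mf_smooth} and \ref{lem:mf_strictsaddle} apply along the whole trajectory, then substitute into Theorem \ref{thm:main_local}. Your Steps 2 and 3 (the computation of $\tilde\epsilon = (\sigstarr)^2/(108\Gamma^{1/2})$, of $\Delta_f$, and of the final rate) are correct and match the paper. The gap is in Step 1. The ``key algebraic fact'' you propose is false as stated: writing the singular values of $\U$ as $\sigma_i$, one has $\la \grad f(\U), \U\ra = 2\bigl(\sum_i \sigma_i^4 - \tr(\M^\star \U\U\trans)\bigr) \ge 2\sum_i \sigma_i^2(\sigma_i^2 - \sigstarl)$, and the terms with $\sigma_i^2 < \sigstarl$ are negative, so this is \emph{not} bounded below by $2\norm{\U}^2(\norm{\U}^2 - \sigstarl)$ in spectral norm; in Frobenius norm the needed inequality $\sum_i\sigma_i^4 \ge (\sum_i\sigma_i^2)^2$ fails by a factor of $r$ (Cauchy--Schwarz goes the other way). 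More fundamentally, a radial bound on $\la\grad f(\U),\U\ra$ only controls $\fnorm{\U - \eta\grad f(\U)}$, whereas the region in Lemma \ref{lem:mf_smooth} is defined by the \emph{spectral} norm, and $\fnorm{\U_0}$ can be as large as $\sqrt{r}\,\norm{\U_0}$, so a Frobenius-norm invariant does not yield $\norm{\U_t}^2 < \Gamma$ with the stated ($r$-independent) $\Gamma$. The paper instead works singular value by singular value: since $\U - \eta\U\U\trans\U$ has singular values $\sigma_i - \eta\sigma_i^3$ and $t\mapsto t-\eta t^3$ is increasing on $[0,1/\sqrt{3\eta}]$, one gets directly in spectral norm
\begin{equation*}
\norm{\U_{t+1}} \le \norm{\U_t} - \eta\bigl(\norm{\U_t}^2 - \sigstarl\bigr)\norm{\U_t},
\end{equation*}
which is the correct replacement for your inequality.

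A second point you wave through is the accumulation of perturbations: it is not enough that a single perturbation of radius $r$ is small, because perturbations recur every $t_{\text{thres}}$ iterations and could in principle ratchet the norm upward. The paper closes this by a two-level induction: before each perturbation $\norm{\tilde\U_\tau}\le \tfrac12\Gamma^{1/2}$, and whenever $\norm{\U_t} > \tfrac12\Gamma^{1/2}$ the displayed recursion forces a decrease of at least $\tfrac{c}{72}\Gamma^{1/2}$ per step, so the norm re-enters $\{\norm{\U}\le\tfrac12\Gamma^{1/2}\}$ within $36/c < t_{\text{thres}}$ iterations, restoring the hypothesis before the next perturbation. You should make this part of the induction explicit; with the corrected spectral-norm recursion and this bookkeeping, the rest of your argument goes through as written.
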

Theorem \ref{thm:mf_global} establishes global convergence of perturbed gradient descent from an arbitrary initial point $\U_0$, including exact saddle points. 
Suppose we initialize at $\U_0 = 0$, then our iteration complexity becomes:
\begin{equation*}
O\left(r(\cn^\star)^4\log^{4}(d\cn^\star /\delta)  + \cn^\star \log(\sigstarr/\epsilon) \right),
\end{equation*}
where $\cn^\star = \sigstarl/\sigstarr$ is the condition number of the matrix $\M^\star$. 
We see that in the second phase, when convergence occurs inside the local region, we require $O(\cn^\star \log(\sigstarr/\epsilon))$ iterations which is the standard local linear rate for gradient descent. 
In the first phase, to find a neighborhood of the solution, 
our method requires a number of iterations scaling as $\tilde{O}(r(\cn^\star)^4)$. We suspect that this strong dependence on condition number arises
from our generic assumption that the Hessian Lipschitz is uniformly 
upper bounded; it may well be the case that this dependence can be
reduced in the special case of matrix factorization via a finer analysis
of the geometric structure of the problem.

\section{Proof Sketch for Theorem \ref{thm:main}} \label{sec:sketch}

In this section we will present the key ideas underlying the main result of this paper (Theorem~\ref{thm:main}). We will first argue the correctness of Theorem~\ref{thm:main} given two important intermediate lemmas. Then we turn
to the main lemma, which establishes that gradient descent can escape 
from saddle points quickly. We present full proofs of all these results in Appendix~\ref{app:main}.
Throughout this section, we use $\eta, r, g_{\text{thres}}, f_{\text{thres}}$ and $t_{\text{thres}}$ as defined in Algorithm~\ref{algo:PGD}.

\subsection{Exploiting Large Gradient or Negative Curvature}
%
Recall that an \ESSP~is a point with a small gradient, and where the Hessian does not have a significant negative eigenvalue. Suppose we are currently at an iterate $\x_t$ that is not an \ESSP; i.e., it does not satisfy the above 
properties.  There are two possibilities: 
\begin{enumerate}
\item Gradient is large: $\norm{\grad f(\x_t)} \ge g_{\text{thres}}$, or
\item Around saddle point: $\norm{\grad f(\x_t)} \le g_{\text{thres}}$ and $\lambda_{\min}(\hess f(\x_t)) \le - \sqrt{\rho \epsilon}$.
\end{enumerate}

The following two lemmas address these two cases respectively. They guarantee that perturbed gradient descent will decrease the function value in both scenarios.
\begin{lemma}[Gradient] \label{lem:sketch_gradient}
Assume that $f(\cdot)$ satisfies \ref{as:smooth_Lip}. Then for gradient descent with stepsize $\eta < \frac{1}{\ell}$, we have $f(\x_{t+1}) \le f(\x_t) - \frac{\eta}{2}\norm{\grad f(\x_t)}^2$.
\end{lemma}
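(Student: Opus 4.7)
The plan is to derive this inequality purely from the $\ell$-gradient-Lipschitz part of Assumption~\ref{as:smooth_Lip}; the Hessian-Lipschitz property plays no role here. The key tool is the standard quadratic upper bound
\[
f(\y) \le f(\x) + \langle \nabla f(\x), \y - \x\rangle + \tfrac{\ell}{2}\|\y - \x\|^2,
\]
which holds for every $\x,\y \in \R^d$ whenever $\nabla f$ is $\ell$-Lipschitz. I would derive it in the usual textbook way, by writing $f(\y)-f(\x) = \int_0^1 \langle \nabla f(\x + s(\y-\x)), \y - \x\rangle\,ds$, splitting off the gradient at $\x$, and bounding the remaining integrand via Cauchy--Schwarz together with $\|\nabla f(\x + s(\y-\x)) - \nabla f(\x)\| \le \ell s\|\y-\x\|$.

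Next I would instantiate this bound at $\x = \x_t$ and $\y = \x_{t+1} = \x_t - \eta \nabla f(\x_t)$. The linear term becomes $-\eta \|\nabla f(\x_t)\|^2$ and the quadratic term becomes $\tfrac{\ell \eta^2}{2}\|\nabla f(\x_t)\|^2$, so the right-hand side collapses to $f(\x_t) - \eta\bigl(1 - \tfrac{\ell \eta}{2}\bigr)\|\nabla f(\x_t)\|^2$. Finally, the hypothesis $\eta \le 1/\ell$ ensures $1 - \ell\eta/2 \ge 1/2$, which yields the desired $f(\x_{t+1}) \le f(\x_t) - \tfrac{\eta}{2}\|\nabla f(\x_t)\|^2$.

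There is essentially no obstacle: the whole argument is two displayed lines after the quadratic upper bound is stated. The only thing worth explicitly flagging in the writeup is that although Assumption~\ref{as:smooth_Lip} bundles together smoothness and Hessian-Lipschitzness, only the former is needed here, so the same conclusion would hold under the weaker assumption of $\ell$-gradient-Lipschitzness alone.
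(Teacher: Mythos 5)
Your proof is correct and follows exactly the paper's argument: apply the quadratic upper bound implied by $\ell$-gradient Lipschitzness at $\x_t$ and $\x_{t+1}=\x_t-\eta\grad f(\x_t)$, then use $\eta\le 1/\ell$ to bound $1-\ell\eta/2\ge 1/2$. The only difference is that you also sketch the standard derivation of the quadratic bound itself, which the paper simply invokes as a known property of smoothness.
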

\begin{lemma}[Saddle] (informal)  \label{lem:sketch_saddle}
Assume that $f(\cdot)$ satisfies ~\ref{as:smooth_Lip}, 
If $\x_t$ satisfies $\norm{\nabla f(\x_t)} \le g_{\text{thres}}$ and $\lambda_{\min}(\hess f(\x_t)) \le -\sqrt{\rho\epsilon}$, then
adding one perturbation step followed by $t_{\text{thres}}$ steps of gradient descent, we have
$f(\x_{t+t_{\text{thres}}}) - f(\x_t) \le -f_{\text{thres}}$ with high probability.
\end{lemma}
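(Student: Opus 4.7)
The plan is to show that after the perturbation step, with high probability the resulting iterate lies outside a thin ``stuck region'' around $\tilde{\x}_t$, and that starting from outside the stuck region, standard gradient descent produces a function value decrease of at least $f_{\text{thres}}$ within $t_{\text{thres}}$ iterations. Concretely, I would first reduce to analyzing the local quadratic model at $\tilde{\x}_t$: by the $\rho$-Hessian Lipschitz assumption, as long as the iterates stay within some radius $O((\epsilon/\rho)^{1/2})$ of $\tilde{\x}_t$, the dynamics of gradient descent are well approximated by those of the quadratic $\tilde{f}(\y) = f(\tilde{\x}_t) + \langle \grad f(\tilde{\x}_t), \y - \tilde{\x}_t\rangle + \tfrac{1}{2}(\y-\tilde{\x}_t)^\top \H (\y-\tilde{\x}_t)$, where $\H = \hess f(\tilde{\x}_t)$ has a negative eigenvalue $-\gamma$ with $\gamma \ge \sqrt{\rho\epsilon}$. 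Let $\e_1$ denote the corresponding eigenvector.

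The heart of the argument is a coupling/width lemma. Consider two perturbations $\xi, \xi' \in \ball_0(r)$ that differ only along $\e_1$, and let $\{\x_s\}, \{\x_s'\}$ be the two gradient descent trajectories starting from $\tilde{\x}_t+\xi$ and $\tilde{\x}_t+\xi'$. In the pure quadratic setting, the difference $\w_s = \x_s - \x_s'$ satisfies $\w_{s+1} = (\I - \eta \H)\w_s$, so its $\e_1$-component grows like $(1+\eta\gamma)^s$. Using Hessian Lipschitz, I would show inductively that \emph{as long as both trajectories stay in the local region where the quadratic model is valid}, the deviation $\w_s$ is dominated by this exponential growth (with lower-order correction $O(\rho \cdot \text{radius}^2)$ per step). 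Given the choice $t_{\text{thres}} \sim \tfrac{\ell}{\sqrt{\rho\epsilon}} \cdot \mathrm{polylog}$, one gets $(1+\eta\gamma)^{t_{\text{thres}}} \gg 1$, so at least one of the two trajectories must leave the local quadratic region within $t_{\text{thres}}$ steps. A standard ``improve or localize'' computation (summing $f(\x_{s+1}) - f(\x_s) \le -\tfrac{\eta}{2}\|\grad f(\x_s)\|^2$ from Lemma \ref{lem:sketch_gradient}, together with the gradient bound $\|\grad f(\x_s)\|^2 \ge \tfrac{1}{\eta^2}\|\x_{s+1}-\x_s\|^2$) then forces the corresponding trajectory to decrease $f$ by at least $f_{\text{thres}}$. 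In other words, among any pair of $\e_1$-coupled perturbations, at most one can remain ``stuck'': the stuck set $\cXs$ has width at most some $w = \tilde{O}((\epsilon/\rho)^{1/2}\cdot (1+\eta\gamma)^{-t_{\text{thres}}})$ along $\e_1$.

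With the width estimate in hand, the volume argument is routine: the stuck set inside $\ball_0(r)$ is contained in a slab of thickness $2w$ in the $\e_1$-direction, so
\begin{equation*}
\Pr_{\xi \sim \ball_0(r)}[\xi \in \cXs] \;\le\; \frac{2w \cdot V_{d-1}(r)}{V_d(r)} \;\le\; \frac{w\sqrt{d}}{r},
\end{equation*}
using standard bounds on the ratio of ball volumes. Plugging in the values $r = \tfrac{\sqrt{c}}{\chi^2}\cdot\tfrac{\epsilon}{\ell}$ and $t_{\text{thres}} = \tfrac{\chi}{c^2}\cdot\tfrac{\ell}{\sqrt{\rho\epsilon}}$ from Algorithm \ref{algo:PGD}, the exponential factor $(1+\eta\gamma)^{t_{\text{thres}}}$ dominates $\sqrt{d}/r$ by any desired power of $\chi = \Theta(\log(d\ell\Delta_f/(c\epsilon^2\delta)))$, giving the stuck probability below $\delta/\mathrm{poly}$. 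A concluding step combines this with Lemma \ref{lem:sketch_gradient} to upper bound the initial function increase $f(\tilde{\x}_t+\xi_t) - f(\x_t)$ caused by the perturbation (at most $\tfrac{\ell r^2}{2} \ll f_{\text{thres}}$ by construction), so that the net drop over the whole $t_{\text{thres}}$-window still exceeds $f_{\text{thres}}$.

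The main technical obstacle is the coupling/width step: both trajectories are nonlinear, and the exponential growth analysis requires them to remain in the region where the quadratic approximation is accurate. The cleanest way to handle this is a ``bootstrap'' induction: assume both stay in a ball of radius $R = \tilde{O}(\sqrt{\epsilon/\rho})$ around $\tilde{\x}_t$ throughout the $t_{\text{thres}}$ steps, use Hessian Lipschitz to show the deviation from the quadratic dynamics on $\w_s$ is at most a constant fraction of the quadratic-growth term, and then use the contrapositive---if the growth forces $\|\w_s\| > 2R$, at least one trajectory has left the local region, which in turn (via the improve-or-localize summation) certifies the required function decrease. Making the constants tight enough so that $f_{\text{thres}} = \Theta(\tfrac{c}{\chi^3}\sqrt{\epsilon^3/\rho})$ is achievable with $\eta = c/\ell$ and the stated $t_{\text{thres}}$ will require careful bookkeeping of the $\chi$ factors but is otherwise mechanical.
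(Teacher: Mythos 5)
Your proposal follows essentially the same route as the paper's proof: couple two starting points separated along the most negative eigendirection $\e_1$ of $\hess f(\tilde{\x}_t)$, show the difference of the two gradient-descent trajectories grows geometrically like $(1+\eta\gamma)^s$ so that at most one member of the pair can remain localized, conclude that the stuck region $\cXs$ has width $O(\delta r/\sqrt{d})$ along $\e_1$, and finish with the ratio of ball volumes. The one substantive difference is how you certify that ``leaving the local region'' implies a function decrease. You use an improve-or-localize inequality (summing $f(\x_{s+1})\le f(\x_s)-\tfrac{\eta}{2}\norm{\grad f(\x_s)}^2$ to show that traveling distance $R$ within $t_{\text{thres}}$ steps forces a drop of order $R^2/(\eta t_{\text{thres}})$), whereas the paper works with the quadratic surrogate $\tilde{f}_{\u_0}$ and proves a dedicated localization lemma (Lemma~\ref{lem:1st_seq}): until the surrogate has dropped by $3\ufun$, the iterates stay in a ball of radius $O(\uspace)$, which is established by decomposing into the eigenspaces $\S$ and $\Scomp$ and bounding $\norm{\bbeta_t}$ and $\bbeta_t\trans\H\bbeta_t$ via a double-sum estimate; Lemma~\ref{lem:2nd_seq} then runs your coupling induction inside that ball. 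Your mechanism is arguably cleaner and avoids the eigenspace bookkeeping, at the cost of having to re-derive the iterate bound needed to control the error term $\rho(\norm{\u_t}+\norm{\v_t}+\norm{\tilde{\x}})$ in the coupled dynamics; both yield the same rates. Two minor corrections: the perturbation's initial damage is $\grad f(\tilde{\x}_t)\trans\xi+\tfrac{\ell}{2}\norm{\xi}^2$, and the first-order term $g_{\text{thres}}\cdot r$ is of the same order as $\ell r^2/2$ rather than negligible (both are still $\ll f_{\text{thres}}$ since $\sqrt{\rho\epsilon}\le\ell$); and $\cXs$ is not contained in a single slab of thickness $2w$ but rather has one-dimensional measure at most $2w$ along every line parallel to $\e_1$, which is what the Fubini-type volume computation actually requires.
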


We see that Algorithm~\ref{algo:PGD} is designed so that Lemma \ref{lem:sketch_saddle} can be directly applied. According to these two lemmas, perturbed gradient descent will decrease the function value either in the case of a large gradient, or around strict saddle points. Computing the average decrease per step in function value yields the total iteration complexity. Since Algorithm~\ref{algo:PGD} only terminate when the function value decreases too slowly, this guarantees that the output must be \ESSP ~(see Appendix~\ref{app:main} for formal proofs).

\subsection{Main Lemma: Escaping from Saddle Points Quickly}
The proof of Lemma~\ref{lem:sketch_gradient} is straightforward and follows from traditional analysis. The key technical contribution of this paper is the proof of Lemma~\ref{lem:sketch_saddle}, which gives a new characterization
of the geometry around saddle points.

Consider a point $\tilde{\x}$ that satisfies the the preconditions of Lemma \ref{lem:sketch_saddle} ($\norm{\nabla f(\tilde{\x})} \le g_{\text{thres}}$ and $\lambda_{\min}(\hess f(\tilde{\x})) \le -\sqrt{\rho\epsilon}$). After adding the perturbation ($\x_0 = \tilde{\x} + \xi$), we can view $\x_0$ as coming from a uniform distribution over $\ball_{\tilde{\x}}(r)$, which we call the \textbf{perturbation ball}. We can divide this perturbation ball $\ball_{\tilde{\x}}(r)$ into two disjoint regions: (1) an \textbf{escaping region} $\cXe$ which consists of all the points $\x \in \ball_{\tilde{\x}}(r)$ whose function value decreases by at least $f_{\text{thres}}$ after $t_{\text{thres}}$ steps; 
(2) a \textbf{stuck region} $\cXs = \ball_{\tilde{\x}}(r) - \cXe $.
Our general proof strategy is to show that $\cXs$ 
consists of a very small proportion of the volume of perturbation ball. After adding a perturbation to $\tilde{\x}$, point $\x_0$ has a very small chance of falling in $\cXs$, and hence will escape from the saddle point efficiently.

\begin{figure}
\centering
\begin{minipage}{.45\textwidth}
  \centering
  \includegraphics[trim={2cm 2cm 2cm 0cm}, width=\textwidth]{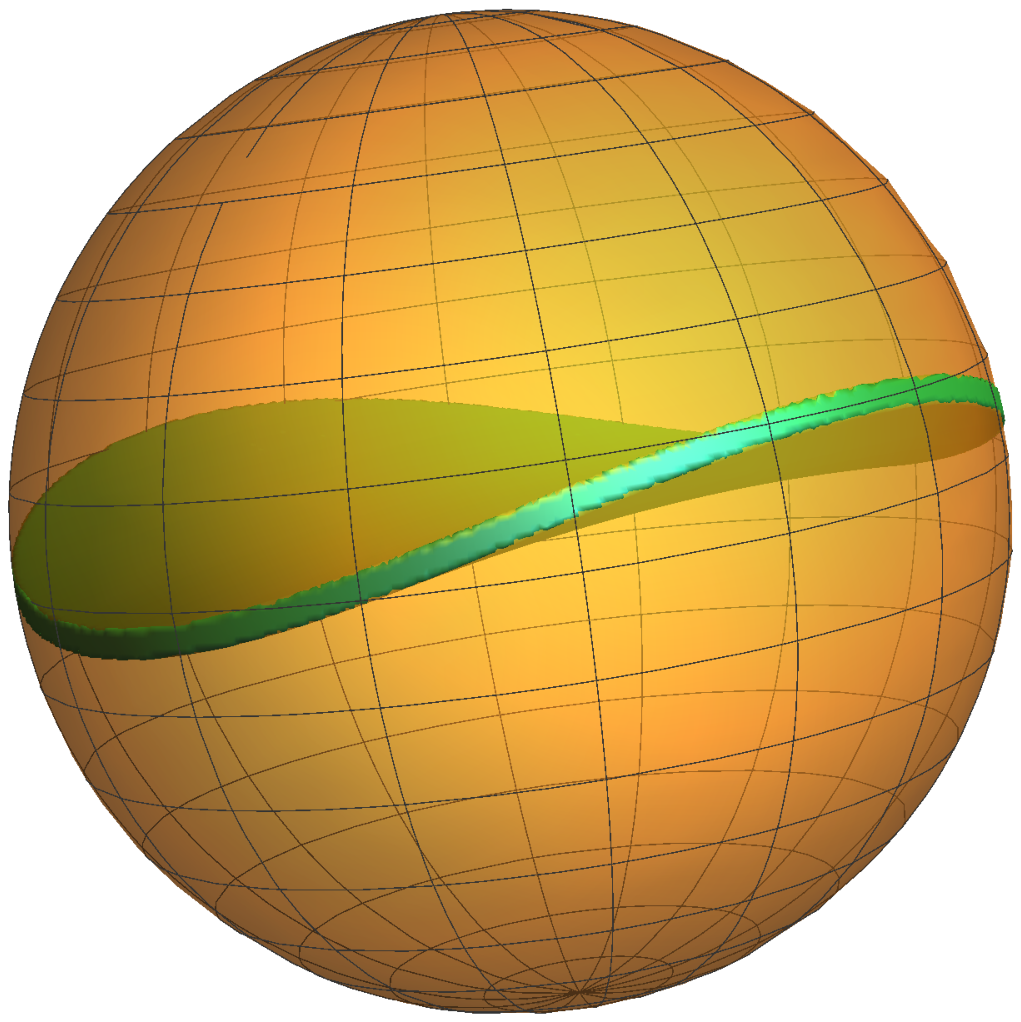}
  \captionof{figure}{Pertubation ball in 3D and ``thin pancake'' shape stuck region}
  \label{fig:band}
\end{minipage}%
\begin{minipage}{.05\textwidth}
~
\end{minipage}
\begin{minipage}{.45\textwidth}
  \centering
  \includegraphics[width=0.9\textwidth]{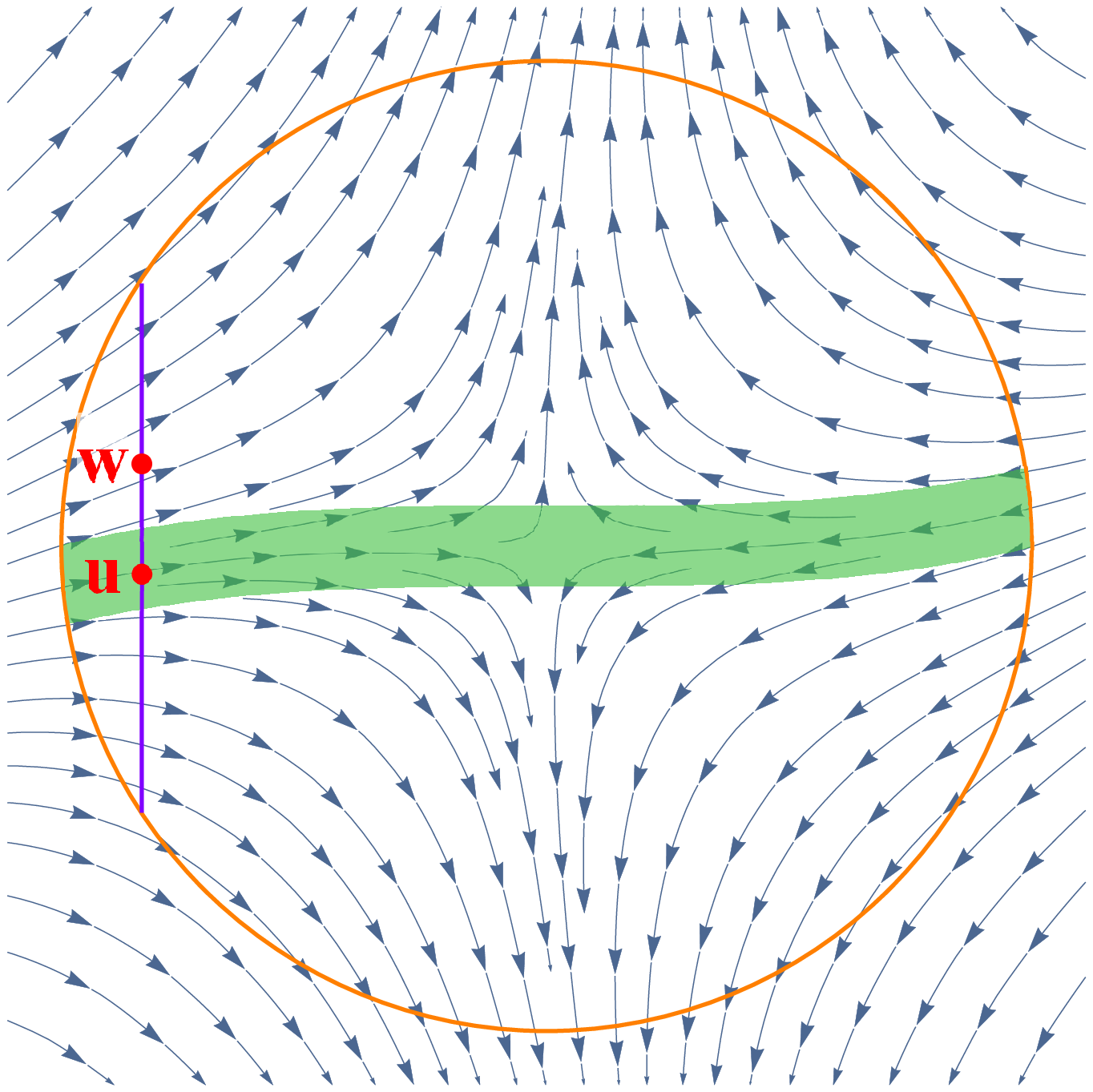}
  \captionof{figure}{Pertubation ball in 2D and ``narrow band'' stuck region under gradient flow}
  \label{fig:flow}
\end{minipage}
\end{figure}





Let us consider the nature of $\cXs$. For simplicity, let us imagine that $\tilde{\x}$ is an exact saddle point whose Hessian has only one negative eigenvalue, and $d-1$ positive eigenvalues. Let us denote the minimum eigenvalue direction as $\e_1$.
In this case, if the Hessian remains constant (and we have a quadratic function), the stuck region $\cXs$ consists of points $\x$ such that $\x - \tilde{\x}$ has a small $\e_1$ component. This is a straight band in two dimensions and 
a flat disk in high dimensions. However, when the Hessian is not constant, the shape of the stuck region is distorted. In two dimensions, it forms 
a ``narrow band'' as plotted in Figure~\ref{fig:flow} on top of
the gradient flow. In three dimensions, it forms a ``thin pancake'' 
as shown in  Figure~\ref{fig:band}. 

The major challenge here is to bound the volume of this high-dimensional 
non-flat ``pancake'' shaped region $\cXs$.
A crude approximation of this ``pancake" by a flat ``disk" loses 
polynomial factors in the dimensionalilty, which gives a suboptimal rate.
Our proof relies on the following crucial observation: Although we do not know the explicit form of the stuck region, we know it must be very ``thin,'' therefore it cannot have a large volume. The informal statement of the lemma
is as follows:
\begin{lemma}(informal)\label{lem:sketch_informal}
Suppose $\tilde{\x}$ satisfies the precondition of Lemma~\ref{lem:sketch_saddle}, and let $\e_1$ be the smallest eigendirection of $\hess f(\tilde{\x})$. For any $\delta \in (0, 1/3]$ and any two points $\w, \u \in \ball_{\tilde{\x}}(r)$, if $\w-\u = \mu r \e_1$ and $\mu \ge \delta/(2\sqrt{d})$, then at least one of $\w, \u$ is not in the stuck region $\cXs$.
\end{lemma}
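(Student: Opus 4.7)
The plan is by contradiction. Assume both $\w,\u \in \cXs$, run gradient descent from each to obtain sequences $\{\w_t\},\{\u_t\}$ with $\w_0 = \w$, $\u_0 = \u$, and study the coupled displacement $\mZ_t \defeq \w_t - \u_t$, so that $\mZ_0 = \mu r\e_1$. The first step is an ``improve-or-localize'' bound: Lemma~\ref{lem:sketch_gradient} telescoped over $s=0,\ldots,t-1$ gives $\sum_{s < t}\|\w_{s+1}-\w_s\|^2 \le 2\eta(f(\w_0)-f(\w_t))$. Combining the stuck hypothesis $f(\w_t)-f(\w_0)\ge -f_{\text{thres}}$ with Cauchy--Schwarz yields $\|\w_t-\w_0\|\le \sqrt{2\eta t_{\text{thres}} f_{\text{thres}}}$ for $t\le t_{\text{thres}}$, and similarly for $\u_t$. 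Hence both trajectories stay inside $\ball_{\tilde{\x}}(R)$ with $R \defeq r + \sqrt{2\eta t_{\text{thres}} f_{\text{thres}}}$; plugging in the algorithm's parameter choices gives $R = O(\sqrt{\epsilon/\rho}/\chi)$, so $\rho R$ is a small constant fraction of $\gamma \defeq \sqrt{\rho\epsilon}$ once $\chi$ is large enough.

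Next, linearize the coupled dynamics. By the fundamental theorem of calculus, $\grad f(\w_t)-\grad f(\u_t) = \mathbf{H}_t\,\mZ_t$ where $\mathbf{H}_t \defeq \int_0^1\hess f(\u_t+s\mZ_t)\,ds$, so $\mZ_{t+1} = (\I-\eta\mathbf{H}_t)\mZ_t$. By Hessian-Lipschitz and the localization, $\mathbf{H}_t = \H + \Delta_t$ where $\H \defeq \hess f(\tilde{\x})$ and $\|\Delta_t\|\le \rho R \ll \gamma$. Decompose $\R^d$ via the negative spectral subspace of $\H$: let $\mathcal{S}$ be the span of eigenvectors of $\H$ with eigenvalue at most $-\gamma/2$, so $\e_1\in\mathcal{S}$ (because $\lambda_{\min}(\H)\le-\gamma$), and set $\phi_t \defeq \|P_{\mathcal{S}}\mZ_t\|$, $\psi_t \defeq \|P_{\mathcal{S}^\perp}\mZ_t\|$, whence $\phi_0 = \mu r$ and $\psi_0 = 0$. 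Since $\H$ preserves both $\mathcal{S}$ and $\mathcal{S}^\perp$, $\I-\eta\H$ has every eigenvalue at least $1+\eta\gamma/2$ on $\mathcal{S}$ and spectral norm at most $1+\eta\gamma/2$ on $\mathcal{S}^\perp$, with a quantitative gap engineered by using a slightly more negative threshold for $\mathcal{S}$ than for its complement. Using this gap together with $\|\Delta_t\|\ll\gamma$, a short calculation establishes the inductive claim: for all $t\le t_{\text{thres}}$ both $\psi_t\le\phi_t$ and $\phi_{t+1} \ge (1+\eta\gamma/4)\phi_t$, hence $\phi_t \ge (1+\eta\gamma/4)^t\mu r$.

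Finally, evaluating at $t = t_{\text{thres}} = \Theta(\chi\ell/(c^2\gamma))$ with $\eta = c/\ell$ gives $\phi_{t_{\text{thres}}} \ge e^{\Omega(\chi/c)}\,\mu r \ge e^{\Omega(\chi/c)}\,\delta r/(2\sqrt{d})$. Plugging in $\chi = 3\log(d\ell\Delta_f/(c\epsilon^2\delta))$ ensures the exponential factor dominates $4R\sqrt{d}/(\delta r)$, so $\phi_{t_{\text{thres}}} > 2R$. But $\|\mZ_{t_{\text{thres}}}\| \ge \phi_{t_{\text{thres}}}$, while the localization bound forces $\|\mZ_{t_{\text{thres}}}\| \le \|\w_{t_{\text{thres}}}-\tilde{\x}\| + \|\u_{t_{\text{thres}}}-\tilde{\x}\| \le 2R$, the desired contradiction. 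The main obstacle is the inductive spectral argument: the Hessian perturbation $\Delta_t$ couples $\mathcal{S}$ with $\mathcal{S}^\perp$ and in principle could destroy the dominance of $\phi_t$ over $\psi_t$; closing the induction hinges on making $\rho R$ strictly smaller than the spectral gap, which is exactly what the large logarithmic factor $\chi$ hard-coded into Algorithm~\ref{algo:PGD} buys us.
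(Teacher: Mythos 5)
Your overall architecture is correct and, for the heart of the argument, coincides with the paper's (Lemmas~\ref{lem:one_in_two}--\ref{lem:2nd_seq}): couple the two trajectories, write $\w_t-\u_t$ as a product of factors $\I-\eta\H-\eta\Delta_t$ applied to $\mu r\e_1$, with $\norm{\Delta_t}$ controlled by Hessian-Lipschitzness plus localization, split this difference vector into a negative-curvature component and its orthogonal complement, show the former grows geometrically while dominating the latter, and contradict the boundedness of both trajectories. Where you genuinely depart from the paper is the localization step. The paper's Lemma~\ref{lem:1st_seq} localizes a stuck trajectory via a fairly heavy induction on the quadratic surrogate $\tilde{f}_{\u_0}(\u_t)$, splitting $\u_t$ across eigenspaces of $\H$ and separately bounding $\norm{\bbeta_t}$ and $\bbeta_t\trans\H\bbeta_t$. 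Your ``improve-or-localize'' step --- telescoping Lemma~\ref{lem:sketch_gradient} and applying Cauchy--Schwarz to get $\norm{\w_t-\w_0}\le\sqrt{2\eta t f_{\text{thres}}}$ directly from the stuck hypothesis and the monotone decrease of $f$ along gradient descent --- is shorter, works with the true function value rather than a surrogate, and applies symmetrically to both trajectories, so the asymmetric two-lemma structure of the appendix is avoided. This is a genuine simplification of the hardest part of the paper's proof.

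Two caveats, one of which is substantive. First, you cannot ``engineer a quantitative gap'' between the growth rates on $\mathcal{S}$ and $\mathcal{S}^\perp$: whatever eigenvalue threshold defines $\mathcal{S}$, its complement contains eigenvalues arbitrarily close to that threshold, so both restrictions of $\I-\eta\H$ can have spectral radius arbitrarily close to $1+\eta\gamma/2$. Consequently the plain hypothesis $\psi_t\le\phi_t$ does not self-propagate: when $\psi_t$ is near $\phi_t$, the cross term $\eta\norm{\Delta_t}\,\norm{\mZ_t}$ can push $\psi_{t+1}$ above $\phi_{t+1}$. The repair is exactly the paper's device in Lemma~\ref{lem:2nd_seq}: strengthen the hypothesis to the time-weighted bound $\psi_t\le C\eta\rho R\,t\cdot\phi_t$, which does propagate because the two components share the same nominal rate, and which implies $\psi_t\le\phi_t$ provided $\eta\rho R\,t\le 1/C$ over the horizon actually needed for escape. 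Second, that proviso is where your larger localization radius costs you: with $R=\Theta(\sqrt{\epsilon/\rho}/\chi)$ one gets $\rho R=\Theta(\gamma/\chi)$ independent of $c$, and the accumulated drift over the escape time $t^\star=\Theta\bigl(\log(R\sqrt{d}/(\delta r))/(\eta\gamma)\bigr)$ works out to a universal constant rather than a quantity driven to zero by shrinking $c$; the paper's radius $O(\uspace)$ carries an extra factor $\sqrt{\eta\ell}=\sqrt{c}$, which is precisely what makes its analogous quantity $O(\sqrt{c})$. Your version therefore requires enlarging the constant multiplying $\chi$ (not merely shrinking $c$) to close the induction --- a constant-factor repair consistent with the paper's remark that its parameters may vary by constants, but one that should be made explicit.
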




Using this lemma it is not hard to bound the volume of the stuck region: we can draw a straight line along the $\e_1$ direction which intersects the perturbation ball (shown as purple line segment in Figure \ref{fig:flow}). For any two points on this line segment that are at least $\delta r/(2\sqrt{d})$ away from each other (shown as red points $\w, \u$ in Figure \ref{fig:flow}), by Lemma \ref{lem:sketch_informal}, we know at least one of them must not be in $\cXs$. This implies if there is one point $\tilde{\u} \in \cXs$ on this line segment, then $\cXs$ on this line can be at most an interval of length $\delta r/\sqrt{d}$ around $\tilde{\u}$. 
This establishes the ``thickness'' of  $\cXs$ 
in the $\e_1$ direction, which is turned into an upper bound on
the volume of the stuck region $\cXs$ by standard calculus.

\section{Conclusion}

This paper presents the first (nearly) dimension-free result for 
gradient descent in a general non-convex setting. We present a 
general convergence result and show how it can be further 
strengthened when combined with further structure such as 
strict saddle conditions and/or local regularity/convexity.

There are still many related open problems. First, in the presence of constraints, it is worthwhile to study whether gradient descent still admits similar sharp convergence results.
Another important question is whether similar techniques can be applied 
to accelerated gradient descent.
We hope that this result could serve as a first step towards a more general theory with strong, almost dimension free guarantees for non-convex optimization.





\bibliographystyle{plainnat}
\bibliography{saddle}

\newpage
\appendix

\section{Detailed Proof of Main Theorem}
\label{app:main}

In this section, we give detailed proof for the main theorem. We will first state two key lemmas that show how the algorithm can make progress when the gradient is large or near a saddle point, and show how the main theorem follows from the two lemmas. Then we will focus on the novel technique in this paper: how to analyze gradient descent near saddle point.

\subsection{General Framework}

In order to prove the main theorem, we need to show that the algorithm will not be stuck at any point that either has a large gradient or is near a saddle point. This idea is similar to previous works (e.g.\citep{ge2015escaping}). We first state a standard Lemma that shows if the current gradient is large, then we make progress in function value.


\begin{lemma}[Lemma \ref{lem:sketch_gradient} restated]\label{lem:main_grad} 
Assume $f(\cdot)$ satisfies \ref{as:smooth_Lip}, then for gradient descent with stepsize $\eta < \frac{1}{\ell}$, we have:
\begin{equation*}
f(\x_{t+1}) \le f(\x_t) - \frac{\eta}{2}\norm{\grad f(\x_t)}^2
\end{equation*}
\end{lemma}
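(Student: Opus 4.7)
The plan is to prove this via the standard \emph{descent lemma} that follows from $\ell$-gradient Lipschitzness. The key observation is that $\ell$-smoothness alone (even without the Hessian Lipschitz part of \ref{as:smooth_Lip}) is enough; we do not need to invoke $\rho$-Hessian Lipschitzness here.

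First, I would establish the quadratic upper bound: for any $\x, \y \in \R^d$,
\begin{equation*}
f(\y) \le f(\x) + \la \grad f(\x), \y - \x \ra + \frac{\ell}{2} \norm{\y - \x}^2.
\end{equation*}
This is a classical consequence of $\ell$-gradient Lipschitzness, obtained by writing $f(\y) - f(\x) = \int_0^1 \la \grad f(\x + s(\y-\x)), \y-\x\ra\, \dd s$, adding and subtracting $\la \grad f(\x), \y - \x\ra$, and bounding the remainder by Cauchy--Schwarz together with $\norm{\grad f(\x + s(\y-\x)) - \grad f(\x)} \le \ell s \norm{\y - \x}$.

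Next, I would apply this bound at $\x = \x_t$ and $\y = \x_{t+1} = \x_t - \eta \grad f(\x_t)$. Substituting $\y - \x = -\eta \grad f(\x_t)$ gives
\begin{equation*}
f(\x_{t+1}) \le f(\x_t) - \eta \norm{\grad f(\x_t)}^2 + \frac{\ell \eta^2}{2} \norm{\grad f(\x_t)}^2 = f(\x_t) - \eta\Bigl(1 - \frac{\ell \eta}{2}\Bigr) \norm{\grad f(\x_t)}^2.
\end{equation*}

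Finally, since $\eta < 1/\ell$ implies $1 - \ell \eta/2 > 1/2$, the coefficient on $\norm{\grad f(\x_t)}^2$ is at least $\eta/2$, yielding the desired inequality. There is no real obstacle in this proof; it is purely a textbook calculation. The only thing to take mild care with is that the step size assumption $\eta < 1/\ell$ is strict, but any $\eta \le 1/\ell$ in fact suffices since $1 - \ell\eta/2 \ge 1/2$ is all that is needed.
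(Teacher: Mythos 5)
Your proof is correct and follows exactly the same route as the paper's: apply the quadratic upper bound implied by $\ell$-smoothness at $\y = \x_t - \eta\grad f(\x_t)$ and use $\eta < 1/\ell$ to absorb the second-order term. The only difference is that you also spell out the standard derivation of the quadratic upper bound, which the paper takes for granted as a known property of Assumption \ref{as:smooth_Lip}.
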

\begin{proof}
By Assumption \ref{as:smooth_Lip} and its property, we have:
\begin{align*}
f(\x_{t+1}) \le& f(\x_t) + \grad f(\x_t)\trans (\x_{t+1} - \x_t) + \frac{\ell}{2}\norm{\x_{t+1} - \x_t}^2 \\
=& f(\x_t) -\eta \norm{\grad f(\x_t)}^2 + \frac{\eta^2\ell}{2}\norm{\grad f(\x_t)}^2
\le f(\x_t) - \frac{\eta}{2}\norm{\grad f(\x_t)}^2
\end{align*}
\end{proof}

The next lemma says that if we are ``close to a saddle points'', i.e., we are at a point where the gradient is small, but the Hessian has a reasonably large negative eigenvalue. This is the main difficulty in the analysis. We show a perturbation followed by small number of standard gradient descent steps can also make the function value decrease with high probability.

\begin{lemma}[Lemma \ref{lem:sketch_saddle} formal]\label{lem:main_saddle}
There exist absolute constant $c_{\max}$, for $f(\cdot)$ satisfies \ref{as:smooth_Lip}, and any $c\le c_{\max}$, and $\chi\ge1$. Let $\eta, r, g_{\text{thres}}, f_{\text{thres}}, t_{\text{thres}}$ calculated same way as in Algorithm \ref{algo:PGD}. Then, if $\tilde{\x}_t$ satisfies:
\begin{equation*}
\norm{\nabla f(\tilde{\x}_t)} \le g_{\text{thres}} \quad\quad \text{~and~} \quad\quad \lambda_{\min}(\hess f(\tilde{\x}_t)) \le -\sqrt{\rho\epsilon} 
\end{equation*}
Let $\x_t = \tilde{\x}_t + \xi_t$ where $\xi_t$ comes from the uniform distribution over $\ball_0(r)$, 
and let $\x_{t+i}$ be the iterates of gradient descent from $\x_t$ with stepsize 
$\eta$, then with at least probability $1-\frac{d\ell}{\sqrt{\rho\epsilon}}e^{-\chi}$, we have:
\begin{equation*}
f(\x_{t+t_{\text{thres}}}) - f(\tilde{\x}_t) \le -f_{\text{thres}}
\end{equation*}
\end{lemma}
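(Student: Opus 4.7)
The plan is to bound the probability that the perturbation $\xi_t$ lands in a ``stuck region'' $\cXs \subseteq \ball_0(r)$ via a purely geometric (volume) argument. Define $\cXs$ to be the set of $\xi \in \ball_0(r)$ such that, starting from $\tilde{\x}_t + \xi$, the subsequent $t_{\text{thres}}$ gradient steps fail to decrease $f$ by $f_{\text{thres}}$. If I can show that $\mathrm{Vol}(\cXs) \le \frac{d\ell}{\sqrt{\rho\epsilon}} e^{-\chi} \cdot \mathrm{Vol}(\ball_0(r))$, then uniform sampling of $\xi_t$ delivers the stated failure probability.

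The central ingredient is a coupling lemma. Let $\mat{H} \defeq \hess f(\tilde{\x}_t)$ and let $\e_1$ be a unit eigenvector of $\mat{H}$ with eigenvalue $\lambda_1 \le -\sqrt{\rho\epsilon}$. I claim that if $\w_0, \u_0 \in \ball_{\tilde{\x}_t}(r)$ satisfy $\w_0 - \u_0 = \mu r \e_1$ with $\mu \ge \delta/(2\sqrt{d})$ for $\delta = e^{-\chi}/\poly(\chi)$, then at least one of the two gradient-descent trajectories $\{\w_i\}$, $\{\u_i\}$ must escape within $t_{\text{thres}}$ steps. Granting this, any line in $\ball_0(r)$ parallel to $\e_1$ meets $\cXs$ in an interval of length at most $\delta r/\sqrt{d}$; Fubini together with the standard ratio $\mathrm{Vol}(\ball^{(d-1)}(r))/\mathrm{Vol}(\ball^{(d)}(r)) = O(\sqrt{d}/r)$ then yields $\mathrm{Vol}(\cXs)/\mathrm{Vol}(\ball_0(r)) = O(\delta)$, which matches the target probability bound once $\delta$ is traded for $\chi$.

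To prove the coupling claim I would proceed in three steps. First (\textbf{localization}): combining Lemma~\ref{lem:main_grad} with the termination condition on $f_{\text{thres}}$ shows that any stuck trajectory $\x_i$ satisfies $\sum_i \eta\|\grad f(\x_i)\|^2/2 \le f_{\text{thres}}$, and hence $\|\x_{t+i} - \tilde{\x}_t\| \le \mathcal{R}$ for all $i \le t_{\text{thres}}$ with $\mathcal{R} = O(\sqrt{\eta t_{\text{thres}} f_{\text{thres}}})$; the initial perturbation radius $r$ and threshold $g_{\text{thres}}$ are sized so as not to spoil this. Second (\textbf{coupled recursion}), write
\begin{equation*}
\v_{i+1} = (\I - \eta \mat{H})\v_i - \eta \Delta_i \v_i, \qquad \Delta_i \defeq \int_0^1 \left[\hess f(\u_i + \tau \v_i) - \mat{H}\right]\dd\tau,
\end{equation*}
and use $\rho$-Hessian Lipschitzness together with localization to bound $\|\Delta_i\| \le \rho \mathcal{R}$. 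Third (\textbf{amplification}), induct to show that the $\e_1$-component of $\v_i$ grows like $(1+\eta\sqrt{\rho\epsilon})^i$ while the orthogonal contribution from the $\Delta_i$ terms stays subdominant; the choice $t_{\text{thres}} = \Theta(\chi/(\eta\sqrt{\rho\epsilon}))$ from Algorithm~\ref{algo:PGD} is exactly large enough that $(1+\eta\sqrt{\rho\epsilon})^{t_{\text{thres}}} \gg \mathcal{R}/(\mu r)$, forcing $\|\v_{t_{\text{thres}}}\| > 2\mathcal{R}$ and contradicting the localization of both trajectories.

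The main obstacle is step three: cleanly controlling the nonlinear term $\eta\Delta_i\v_i$ so that it does not swamp the exponential amplification along $\e_1$. This is a delicate induction balancing the four scales $r,\ g_{\text{thres}},\ f_{\text{thres}},\ t_{\text{thres}}$, and the powers of $\chi$ chosen in Algorithm~\ref{algo:PGD} are set precisely to close it. A cruder bound that approximates $\cXs$ by a flat disk rather than exploiting its narrow-band structure would inflate the stuck volume by $\poly(d)$, destroying the near-dimension-free rate; the coupling lemma is what buys the $\polylog(d)$ dependence.
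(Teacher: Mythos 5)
Your overall architecture is the same as the paper's: define the stuck region $\cXs$, prove a coupling lemma saying that of any two points in $\ball_{\tilde{\x}_t}(r)$ separated by at least $\delta r/(2\sqrt{d})$ along $\e_1$ at least one escapes, conclude that $\cXs$ meets every line parallel to $\e_1$ in an interval of length $\le \delta r/\sqrt{d}$, and finish with the $\mathrm{Vol}(\ball^{(d-1)}(r))/\mathrm{Vol}(\ball^{(d)}(r))$ ratio. Your ``coupled recursion'' and ``amplification'' steps are exactly the paper's Lemma~\ref{lem:2nd_seq} (tracking $\v_i = \w_i - \u_i$, splitting into the $\e_1$ component $\psi_i$ and its complement $\varphi_i$, and inducting to show $\varphi_i$ stays subdominant while $\psi_i$ grows like $(1+\eta\gamma/2)^i$ until it exceeds the localization radius).

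Where you genuinely diverge is the localization step. The paper's Lemma~\ref{lem:1st_seq} does \emph{not} use the descent lemma; it runs a separate induction on the iterates themselves, decomposing $\R^d$ into the span $\S$ of eigenvectors of $\hess f(\tilde{\x}_t)$ with eigenvalue below $-\gamma/\overh$ and its complement, bounding $\norm{\bbeta_t}$ and $\bbeta_t\trans\H\bbeta_t$ via the explicit sum $\bbeta_t = \sum_\tau(\I-\eta\H)^\tau\bdelta_\tau$, and it defines ``stuck'' via the quadratic surrogate $\tilde{f}_{\u_0}$ rather than via $f$ itself. Your ``improve or localize'' argument --- telescoping $f(\x_{i+1}) \le f(\x_i) - \tfrac{\eta}{2}\norm{\grad f(\x_i)}^2$ against the failure of the $f_{\text{thres}}$ decrease and applying Cauchy--Schwarz to get $\norm{\x_{t+i}-\x_t}\le\sqrt{2\eta\, t_{\text{thres}}f_{\text{thres}}}$ --- is simpler and perfectly valid; it also lets you define the stuck region directly in terms of $f$, avoiding the paper's back-and-forth between $f$ and $\tilde{f}$. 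Two caveats you should close when writing this up. First, with the algorithm's parameters your radius is $\mathcal{R}=\tfrac{\sqrt{2}}{\chi}\sqrt{\epsilon/\rho}$, which is \emph{independent of $c$}: the amplification needs $\eta\rho\,\mathcal{R}$ to be a small constant fraction of $\eta\gamma=\eta\sqrt{\rho\epsilon}$, and $\rho\mathcal{R}/\gamma=\sqrt{2}/\chi$ can only be made small by taking $\chi$ large, not by shrinking $c_{\max}$. The lemma as stated assumes only $\chi\ge 1$, so your induction does not close there, whereas the paper's localization radius carries a factor $\sqrt{\eta\ell}=\sqrt{c}$ and can be tuned through $c_{\max}$. (In the algorithm $\chi\ge 12$ always, so this is reparable either by strengthening the hypothesis on $\chi$ or by adding the paper's $c$-dependence.) Second, the lemma asks for a decrease of $f_{\text{thres}}$ measured from $f(\tilde{\x}_t)$, not from $f(\x_t)=f(\tilde{\x}_t+\xi_t)$; you need the extra bookkeeping step bounding the increase from the perturbation by $g_{\text{thres}}r+\tfrac{\ell}{2}r^2$ and asking the escaping region for a correspondingly larger decrease, as the paper does with its $-2.5\ufun$ versus $+1.5\ufun$ accounting.
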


The proof of this lemma is deferred to Section~\ref{sec:mainlemma}. Using this Lemma, we can then prove the main Theorem.

\begingroup
\def\thetheorem{\ref{thm:main}}
\begin{theorem}
There exist absolute constant $c_{\max}$ such that: if $f(\cdot)$ satisfies \ref{as:smooth_Lip},  then for any $\delta>0, \epsilon \le \frac{\ell^2}{\rho}, \Delta_f \ge f(\x_0) - f^\star$, and constant $c \le c_{\max}$, with probability $1-\delta$, the output of $\text{PGD}(\x_0, \ell, \rho, \epsilon, c, \delta, \Delta_f)$ will be $\epsilon-$second order stationary point, and terminate in iterations:
\begin{equation*}
O\left(\frac{\ell(f(\x_0) - f^\star)}{\epsilon^2}\log^{4}\left(\frac{d\ell\Delta_f}{\epsilon^2\delta}\right) \right)
\end{equation*}
\end{theorem}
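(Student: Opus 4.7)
The plan is to combine Lemmas \ref{lem:main_grad} and \ref{lem:main_saddle} into a uniform per-iteration function-value decrease and then balance it against the total budget $f(\x_0)-f^\star$. I would partition the run of $\text{PGD}$ into two disjoint classes of iterations: ``plain gradient iterations,'' on which $t-t_{\text{noise}} > t_{\text{thres}}$ and a bare GD step is taken; and ``saddle blocks,'' each consisting of one perturbation immediately followed by $t_{\text{thres}}$ GD steps. Since the perturbation trigger requires $\norm{\grad f(\x_t)} \le g_{\text{thres}}$, on every plain gradient iteration we must have $\norm{\grad f(\x_t)} > g_{\text{thres}}$, and Lemma \ref{lem:main_grad} yields decrease at least $\tfrac{\eta}{2}g_{\text{thres}}^2$. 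On each saddle block, either $\lambda_{\min}(\hess f(\tilde{\x}_t)) \le -\sqrt{\rho\epsilon}$ and Lemma \ref{lem:main_saddle} gives decrease at least $f_{\text{thres}}$ with probability $\ge 1-\tfrac{d\ell}{\sqrt{\rho\epsilon}}e^{-\chi}$, or $\tilde{\x}_t$ is already an \ESSP{} (its gradient is $\le g_{\text{thres}} \le \epsilon$ and its Hessian passes the second-order test), in which case the algorithm's return condition triggers at the end of the block.

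For correctness, note that the algorithm only returns at the end of a saddle block on which the observed decrease is less than $f_{\text{thres}}$. The perturbation trigger itself ensures $\norm{\grad f(\tilde{\x}_{t_{\text{noise}}})} \le g_{\text{thres}} \le \epsilon$. Assuming Lemma \ref{lem:main_saddle}'s high-probability conclusion holds at every invocation (a joint event controlled by the union bound below), its contrapositive forces $\lambda_{\min}(\hess f(\tilde{\x}_{t_{\text{noise}}})) > -\sqrt{\rho\epsilon}$, so the returned point is an \ESSP{}.

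For the iteration bound, plugging in the parameters of Algorithm \ref{algo:PGD} gives
\[
\tfrac{\eta}{2}g_{\text{thres}}^{2} \;=\; \frac{c^{2}\epsilon^{2}}{2\chi^{4}\ell}, \qquad \frac{f_{\text{thres}}}{t_{\text{thres}}} \;=\; \frac{c^{3}\epsilon^{2}}{\chi^{4}\ell}.
\]
Choosing $c_{\max} \le 1/2$ makes $\tfrac{\eta}{2}g_{\text{thres}}^{2} \ge f_{\text{thres}}/t_{\text{thres}}$, so if $T_g$ is the number of plain gradient iterations and $T_s$ the number of saddle-block iterations, the total decrease is at least $T_g \cdot \tfrac{\eta}{2}g_{\text{thres}}^{2} + T_s \cdot f_{\text{thres}}/t_{\text{thres}} \ge (T_g+T_s)\cdot f_{\text{thres}}/t_{\text{thres}}$. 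Since the total decrease cannot exceed $\Delta_f$, the run terminates within
\[
T \;=\; O\!\left(\frac{\ell\,(f(\x_0)-f^\star)}{\epsilon^{2}}\chi^{4}\right) \;=\; O\!\left(\frac{\ell\,(f(\x_0)-f^\star)}{\epsilon^{2}}\log^{4}\!\left(\frac{d\ell\Delta_f}{\epsilon^{2}\delta}\right)\right)
\]
iterations, matching the claimed rate.

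Finally, the overall failure probability is handled by a union bound over the at most $T/t_{\text{thres}}$ saddle blocks, each contributing failure probability $\le \tfrac{d\ell}{\sqrt{\rho\epsilon}}e^{-\chi}$; the choice $\chi = 3\max\{\log(d\ell\Delta_f/(c\epsilon^{2}\delta)),4\}$ in Algorithm \ref{algo:PGD} is calibrated precisely so that $\tfrac{T}{t_{\text{thres}}}\cdot\tfrac{d\ell}{\sqrt{\rho\epsilon}}e^{-\chi} \le \delta$. I anticipate no deep obstacle beyond careful constant bookkeeping --- matching the $c \le c_{\max}$ used here against that required by Lemma \ref{lem:main_saddle}, and checking that the floor $\chi \ge 4$ absorbs the various $\polylog$ slacks --- since the real mathematical content has already been isolated in Lemma \ref{lem:main_saddle} and its ``thin-pancake'' volume argument from Section \ref{sec:sketch}.
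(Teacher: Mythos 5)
Your proposal is correct and follows essentially the same route as the paper's proof: the same dichotomy between large-gradient steps (decrease $\tfrac{\eta}{2}g_{\text{thres}}^2$ via Lemma \ref{lem:main_grad}) and perturbation blocks (decrease $f_{\text{thres}}$ over $t_{\text{thres}}$ steps via Lemma \ref{lem:main_saddle}), the same amortized per-step decrease $c^3\epsilon^2/(\chi^4\ell)$ against the budget $\Delta_f$, the same contrapositive argument for correctness of the returned point, and the same union bound over at most $T/t_{\text{thres}}$ perturbations absorbed by the choice of $\chi$. The only (harmless) imprecision is the claim that the return condition necessarily triggers when $\tilde{\x}_{t_{\text{noise}}}$ is already an \ESSP{} --- it need not, but in that case the block still contributes the required decrease, so the accounting goes through exactly as in the paper.
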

\addtocounter{theorem}{-1}
\endgroup


\begin{proof}
Denote $\tilde{c}_{\max}$ to be the absolute constant allowed in Theorem \ref{lem:main_saddle}. 
In this theorem, we let $c_{\max} = \min\{\tilde{c}_{\max}, 1/2\}$, and choose any constant $c \le c_{\max}$.


In this proof, we will actually achieve some point satisfying following condition:
\begin{equation} \label{eq:tighter_cond}
 \norm{\grad f(\x)} \le g_{\text{thres}} = \frac{\sqrt{c}}{\chi^2} \cdot \epsilon, \qquad\qquad \lambda_{\min}(\hess f(\x)) \ge - \sqrt{\rho \epsilon}
\end{equation}
Since $c\le1$, $\chi\ge 1$, we have $\frac{\sqrt{c}}{\chi^2} \le 1$, which implies any $\x$ satisfy Eq.\eqref{eq:tighter_cond} is also a \ESSP.

Starting from $\x_0$, we know if $\x_0$ does not satisfy Eq.\eqref{eq:tighter_cond}, there are only two possibilities:
\begin{enumerate}
\item $\norm{\grad f(\x_0)} > g_{\text{thres}}$: In this case, Algorithm \ref{algo:PGD} will not add perturbation. By Lemma \ref{lem:main_grad}:
\begin{equation*}
f(\x_{1}) - f(\x_0) \le  -\frac{\eta}{2} \cdot g_{\text{thres}}^2 = -\frac{c^2}{2\chi^4}\cdot\frac{\epsilon^2}{\ell}
\end{equation*}

\item $\norm{\grad f(\x_0)} \le g_{\text{thres}}$:
In this case, Algorithm \ref{algo:PGD} will add a perturbation of radius $r$, and will perform gradient descent (without perturbations) for the next $t_{\text{thres}}$ steps. Algorithm \ref{algo:PGD} will then check termination condition. If the condition is not met, we must have:
\begin{equation*}
f(\x_{t_{\text{thres}}}) - f(\x_0) \le -f_{\text{thres}} = -\frac{c}{\chi^3}\cdot\sqrt{\frac{\epsilon^3}{\rho}}
\end{equation*}
This means on average every step decreases the function value by
\begin{equation*}
\frac{f(\x_{t_{\text{thres}}}) - f(\x_0)}{t_{\text{thres}}} \le -\frac{c^3}{\chi^4}\cdot\frac{\epsilon^2}{\ell}
\end{equation*}
\end{enumerate}
In case 1, we can repeat this argument for $t = 1$ and in case 2, we can repeat this argument for $t=t_{\text{thres}}$.
Hence, we can conclude as long as algorithm \ref{algo:PGD} has not terminated yet, on average, every step decrease function value by at least $\frac{c^3}{\chi^4}\cdot\frac{\epsilon^2}{\ell}$. However, we clearly can not decrease function value by more than $f(\x_0) - f^\star$, where $f^\star$ is the function value of global minima. This means algorithm \ref{algo:PGD} must terminate within the following number of iterations:
\begin{equation*}
\frac{f(\x_0) - f^\star}{\frac{c^3}{\chi^4}\cdot\frac{\epsilon^2}{\ell}}
= \frac{\chi^4}{c^3}\cdot \frac{\ell(f(\x_0) - f^\star)}{\epsilon^2} = O\left(\frac{\ell(f(\x_0) - f^\star)}{\epsilon^2}\log^{4}\left(\frac{d\ell\Delta_f}{\epsilon^2\delta}\right) \right)
\end{equation*}

~

Finally, we would like to ensure when Algorithm \ref{algo:PGD} terminates, the point it finds is actually an \ESSP. The algorithm can only terminate when the gradient is small, and the function value does not decrease after a perturbation and $t_{\text{thres}}$ iterations. We shall show every time when we add perturbation to iterate $\tilde{\x}_t$, if $\lambda_{\min}(\hess f(\tilde{\x}_t)) < - \sqrt{\rho \epsilon}$, then 
we will have $f(\x_{t+t_{\text{thres}}}) - f(\tilde{\x}_t) \le -f_{\text{thres}}$. Thus, whenever the current point is not an \ESSP, the algorithm cannot terminate.

According to Algorithm \ref{algo:PGD}, we immediately know $\norm{\nabla f(\tilde{\x}_t)} \le g_{\text{thres}}$ (otherwise we will not add perturbation at time $t$). By Lemma \ref{lem:main_saddle}, we know this event happens with probability at least $1-\frac{d\ell}{\sqrt{\rho\epsilon}}e^{-\chi}$ each time. On the other hand, during one entire run of Algorithm \ref{algo:PGD}, the number of times we add perturbations is at most:
\begin{equation*}
\frac{1}{t_{\text{thres}}} \cdot \frac{\chi^4}{c^3}\cdot \frac{\ell(f(\x_0) - f^\star)}{\epsilon^2}
=\frac{\chi^3}{c}\frac{\sqrt{\rho\epsilon}(f(\x_0) - f^\star)}{\epsilon^2}
\end{equation*}

By union bound, for all these perturbations, with high probability Lemma~\ref{lem:main_saddle} is satisfied. As a result Algorithm \ref{algo:PGD} works correctly. The probability of that is at least
\begin{equation*}
1- \frac{d\ell}{\sqrt{\rho\epsilon}}e^{-\chi} \cdot \frac{\chi^3}{c}\frac{\sqrt{\rho\epsilon}(f(\x_0) - f^\star)}{\epsilon^2}
= 1 -   \frac{\chi^3e^{-\chi}}{c}\cdot  \frac{d\ell(f(\x_0) - f^\star)}{\epsilon^2}
\end{equation*}

Recall our choice of $\chi = 3\max\{\log(\frac{d\ell\Delta_f}{c\epsilon^2\delta}), 4\}$. Since $\chi \ge 12$, we have $\chi^3e^{-\chi} \le e^{-\chi/3}$, this gives:
\begin{equation*}
\frac{\chi^3e^{-\chi}}{c}\cdot  \frac{d\ell(f(\x_0) - f^\star)}{\epsilon^2} 
\le e^{-\chi/3}  \frac{d\ell(f(\x_0) - f^\star)}{c\epsilon^2} \le \delta
\end{equation*}
which finishes the proof.

\end{proof}

\subsection{Main Lemma: Escaping from Saddle Points Quickly}\label{sec:mainlemma}

Now we prove the main lemma (Lemma \ref{lem:main_saddle}), which shows near a saddle point, a small perturbation followed by a small number of gradient descent steps will decrease the function value with high probability. This is the main step where we need new analysis, as the analysis previous works (e.g.\citep{ge2015escaping}) do not work when the step size and perturbation do not depend polynomially in dimension $d$. 

Intuitively, after adding a perturbation, the current point of the algorithm comes from a uniform distribution over a $d$-dimensional ball centered at $\tilde{\x}$, which we call \textbf{perturbation ball}. After a small number of gradient steps, some points in this ball (which we call the {\bf escaping region}) will significantly decrease the function; other points (which we call the {\bf stuck region}) does not see a significant decrease in function value.
We hope to show that the escaping region constitutes at least $1-\delta$ fraction of the volume of the perturbation ball. 

However, we do not know the exact form of the function near the saddle point, so the escaping region does not have a clean analytic description. Explicitly computing its volume can be very difficult. Our proof rely on a crucial observation: although we do not know the shape of the stuck region, we know the ``width'' of it must be small, therefore it cannot have a large volume. We will formalize this intuition later in Lemma~\ref{lem:one_in_two}.

The proof of the main lemma requires carefully balancing between different quantities including function value, gradient, parameter space and number of iterations. For clarify, we define following scalar quantities, which serve as the ``units'' for function value, gradient, parameter space, and time (iterations). We will use these notations throughout the proof. 

Let the condition number be the ratio of the smoothness parameter (largest eigenvalue of Hessian) and the negative eigenvalue $\gamma$: $\cn = \ell / \gamma \ge 1$, we define the following units:
\begin{align*}
&\ufun \defeq \eta \ell \frac{\gamma^3}{\rho^2}\cdot \log^{-3}(\logterms),
&&\ugrad \defeq \sqrt{\eta \ell} \frac{\gamma^2}{\rho} \cdot \log^{-2}(\logterms)\\
&\uspace \defeq \sqrt{\eta\ell}\frac{\gamma}{\rho} \cdot \log^{-1}(\logterms), 
&&\utime \defeq \frac{\log (\logterms)}{\eta \gamma}
\end{align*}
Intuitively, if we plug in our choice of learning rate $\eta \ell = O(1)$ (which we will prove later) and hide the logarithmic dependences, we have $\ufun = \tilde{O}(\frac{\gamma^3}{\rho^2}), \ugrad = \tilde{O}(\frac{\gamma^2}{\rho}), \uspace = \tilde{O}(\frac{\gamma}{\rho})$, which is the only way to correctly discribe the units of function value, gradient, parameter space by just $\gamma$ and $\rho$. Moreover, these units are closely related, in particular, we know $\sqrt{\frac{\ufun \cdot \log(\logterms)}{\gamma}} = \frac{\ugrad \cdot \log(\logterms)}{\gamma} = \uspace$.

For simplicity of later proofs, we first restate Lemma \ref{lem:main_saddle} into a slightly more general form as follows. Lemma \ref{lem:main_saddle} is directly implied following lemma.
\begin{lemma}[Lemma \ref{lem:main_saddle} restated]\label{lem:main_saddle_restate}
There exists universal constant $c_{\max}$, for $f(\cdot)$ satisfies \ref{as:smooth_Lip}, for any $\delta\in (0, \frac{d\cn}{e}]$, suppose we start with point $\tilde{\x}$ satisfying following conditions:
\begin{equation*}
\norm{\nabla f(\tilde{\x})} \le \ugrad \quad \quad \text{~and~} \quad \quad \lambda_{\min}(\hess f(\tilde{\x})) \le -\gamma
\end{equation*}
Let $\x_0 = \tilde{\x} + \xi$ where $\xi$ come from the uniform distribution over ball with radius $\uspace /(\cn\cdot\log(\logterms))$, 
and let $\x_t$ be the iterates of gradient descent from $\x_0$. Then, when stepsize 
$\eta \le c_{\max} / \ell$, with at least probability $1-\delta$, we have following for any $T \ge \frac{1}{c_{\max}}\utime$:
\begin{equation*}
f(\x_T) - f(\tilde{\x}) \le -\ufun
\end{equation*}
\end{lemma}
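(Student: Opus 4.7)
My plan is to prove the volume bound on the stuck region $\cXs$ sketched in Section~\ref{sec:sketch} by a two-point coupling argument, then convert the volume bound into the claimed probability bound. Concretely, I will show the following dichotomy: if $\w_0, \u_0 \in \ball_{\tilde{\x}}(r)$ satisfy $\u_0 - \w_0 = \mu r \e_1$ with $\mu \ge \delta/(2\sqrt{d})$, then at least one of the gradient-descent trajectories starting from $\w_0, \u_0$ escapes (decreases $f$ by $\ufun$ in $T$ steps), where $\e_1$ is a unit eigenvector achieving $\lambda_{\min}(\hess f(\tilde{\x}))$. Given this, any line in direction $\e_1$ through the perturbation ball intersects $\cXs$ in a segment of length at most $\delta r/\sqrt{d}$, so $\mathrm{Vol}(\cXs) \le (\delta r/\sqrt{d}) \cdot \mathrm{Vol}(\ball_0^{(d-1)}(r))$, and the volume ratio $\mathrm{Vol}(\ball_0^{(d-1)}(r))/\mathrm{Vol}(\ball_0^{(d)}(r)) = O(\sqrt{d}/r)$ gives $\Pr[\x_0 \in \cXs] = O(\delta)$.

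\textbf{Improve-or-localize.} Before the coupling, I need a standard ``improve or localize'' estimate: for gradient descent with $\eta \le 1/\ell$, telescoping the per-step descent bound $f(\x_t) - f(\x_{t+1}) \ge \tfrac{1}{2\eta}\|\x_{t+1} - \x_t\|^2$ and Cauchy--Schwarz yields $\|\x_t - \x_0\|^2 \le 2\eta t (f(\x_0) - f(\x_t))$. Thus if a trajectory is ``stuck'' (does not decrease $f$ by $\ufun$ over $T = \Theta(\utime)$ iterations), the iterates are confined to a ball of radius $O(\sqrt{\eta T \ufun}) = \tilde O(\uspace)$ around $\x_0$, hence $\tilde O(\uspace)$ around $\tilde{\x}$. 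This bounds the region in which we must control the nonlinear dynamics.

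\textbf{Coupling calculation.} Suppose for contradiction that both $\w_t$ and $\u_t$ are stuck. Let $\v_t = \u_t - \w_t$ and $\H = \hess f(\tilde{\x})$. Using the fundamental theorem of calculus along the segment $\w_t \to \u_t$,
\begin{equation*}
\v_{t+1} = (\I - \eta \H) \v_t - \eta \Delta_t \v_t, \qquad \Delta_t \defeq \int_0^1 \bigl[\hess f(\w_t + s \v_t) - \H\bigr]\, ds.
\end{equation*}
By $\rho$-Hessian Lipschitzness and the localization bound, $\|\Delta_t\| \le \rho \cdot \tilde O(\uspace)$, which I will show is strictly smaller than $\gamma/2$ by our choice of units; in fact this is the precise place the logarithmic factors in $\uspace, \utime, \ufun$ are calibrated. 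Since $\H$ has eigenvalue at most $-\gamma$ along $\e_1$ and at most $\ell$ elsewhere, $(\I - \eta \H)$ expands along $\e_1$ by $(1+\eta\gamma)$. I will split $\v_t$ into its $\e_1$-component $\alpha_t$ and orthogonal part $\bbeta_t$, and prove inductively that $|\alpha_t| \ge \tfrac{1}{2}(1+\eta\gamma)^t |\alpha_0|$ while $\|\bbeta_t\| \le |\alpha_t|$ (the perturbation from $\Delta_t$ and from other eigendirections can be absorbed because $\eta\|\Delta_t\|$ is dominated by the gap $\eta\gamma$). Iterating to $T = \Theta(\log(d\cn/\delta)/(\eta\gamma))$ then gives $\|\v_T\| \ge (1+\eta\gamma)^T \cdot \tfrac{\mu r}{2} \gtrsim (d\cn/\delta)^{\Omega(1)} \cdot \tfrac{\delta r}{\sqrt d}$, which, for a suitable constant in $T$, exceeds $2 \cdot \tilde O(\uspace)$. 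But both trajectories being stuck forces $\|\v_T\| \le \|\w_T - \tilde{\x}\| + \|\u_T - \tilde{\x}\| \le \tilde O(\uspace)$, a contradiction.

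\textbf{Main obstacle and conclusion.} The hard step is the inductive control of $\alpha_t$ versus $\bbeta_t$ in the presence of the nonlinear term $\Delta_t \v_t$. Its magnitude depends on $\|\v_t\|$ itself, but to bound $\|\v_t\|$ we need the induction hypothesis; this is a self-referential argument that only closes because $\rho \cdot \tilde O(\uspace) \ll \gamma$ by our definition of $\uspace$, and because $T$ is chosen so that $(1+\eta\gamma)^T$ is only polylogarithmic in $d\cn/\delta$ (so the geometric growth of $\alpha_t$ dominates without ever allowing $\v_t$ to leave the localization ball prematurely). Once the coupling lemma is established, applying it to the pair $(\x_0, \x_0 + (\delta/(2\sqrt d)) r \e_1)$ for each fixed $\x_0$ gives the ``thin slab'' characterization of $\cXs$ in direction $\e_1$; integrating over the $(d-1)$-dimensional cross-section and comparing with $\mathrm{Vol}(\ball_0^{(d)}(r))$ yields $\Pr[\xi \in \cXs] \le \delta$, which together with the improve-or-localize bound on escaping trajectories completes the proof of Lemma~\ref{lem:main_saddle_restate}.
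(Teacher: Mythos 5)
Your proposal is correct and shares the paper's overall architecture --- the two-point coupling along $\e_1$ (Lemma \ref{lem:one_in_two}), the ``thin slab'' characterization of $\cXs$, and the integration over $(d-1)$-dimensional cross-sections to get $\Pr[\xi\in\cXs]=O(\delta)$ --- but your localization step takes a genuinely different and simpler route. The paper proves localization (Lemma \ref{lem:1st_seq}) by decomposing the iterate dynamics into the subspace $\S$ of significantly negative eigenvalues of $\hess f(\tilde{\x})$ and its complement, controlling $\norm{\bbeta_t}$ and $\bbeta_t\trans\H\bbeta_t$ separately through the recursion, all relative to the quadratic surrogate $\tilde f_{\u_0}$; this then forces an extra conversion between surrogate decrease and true decrease of $f$ via the cubic error term. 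You instead telescope the descent lemma and apply Cauchy--Schwarz to get $\norm{\x_t-\x_0}\le\sqrt{2\eta t\,(f(\x_0)-f(\x_t))}=\tilde O(\uspace)$ for stuck trajectories directly in terms of $f$, which lets you run the coupling symmetrically (``both stuck $\Rightarrow$ both localized $\Rightarrow$ $\norm{\v_T}$ cannot have grown geometrically''), avoiding the eigenspace bookkeeping and the surrogate-to-$f$ conversion entirely. This is a real simplification (it is essentially the ``improve-or-localize'' device the authors adopt in later work), and your identification of where the argument closes --- $\rho\cdot\tilde O(\uspace)\ll\gamma$ and $(1+\eta\gamma)^{T}$ only polylogarithmic in $d\cn/\delta$ --- is exactly the right calibration. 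The coupling recursion itself ($\v_{t+1}=(\I-\eta\H-\eta\Delta_t)\v_t$ with the induction keeping the orthogonal part dominated by the $\e_1$-component) matches the paper's Lemma \ref{lem:2nd_seq}. One bookkeeping point you gloss over: the perturbation step can \emph{raise} the function value by up to about $\tfrac32\ufun$ (since $f(\x_0)-f(\tilde{\x})\le\ugrad\,r+\tfrac{\ell}{2}r^2$), so defining ``escape'' as a decrease of only $\ufun$ relative to $\x_0$ would leave you with $f(\x_T)-f(\tilde{\x})\le\tfrac12\ufun$ rather than $-\ufun$; you need the coupling lemma's escape threshold to be, say, $2.5\ufun$ (as the paper does), which your argument supports with no change beyond constants.
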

Lemma \ref{lem:main_saddle_restate} is almost the same as Lemma \ref{lem:main_saddle}. It is easy to verify that by substituting $\eta = \frac{c}{\ell}, \gamma = \sqrt{\rho \epsilon}$ and $\delta = \frac{d\ell}{\sqrt{\rho\epsilon}}e^{-\chi}$ into Lemma \ref{lem:main_saddle_restate}, we immediately obtain Lemma \ref{lem:main_saddle}.

Now we will formalize the intuition that the ``width'' of stuck region is small.

%

\begin{lemma}[Lemma \ref{lem:sketch_informal} restated]\label{lem:one_in_two} 
There exists a universal constant $c_{\max}$, for any $\delta\in (0, \frac{d\cn}{e}]$, let $f(\cdot), \tilde{\x}$ satisfies the conditions in Lemma \ref{lem:main_saddle_restate}, and without loss of generality let $\e_1$ be the minimum eigenvector of $\hess f(\tilde{\x})$. Consider two gradient descent sequences $\{\u_t\}, \{\w_t\}$ with initial points $\u_0, \w_0$ satisfying: (denote radius $r = \uspace/(\cn\cdot \log(\logterms))$)
\begin{equation*}
\norm{\u_0 - \tilde{\x}} \le r, \quad \w_0 = \u_0 +\mu \cdot r \cdot \e_1, \quad\mu \in [\delta/(2\sqrt{d}), 1]
\end{equation*}
Then, for any stepsize $\eta \le c_{\max} / \ell$, and any $T \ge \frac{1}{c_{\max}}\utime$, we have:
\begin{equation*}
\min\{f(\u_{T})  - f(\u_0), f(\w_{T})  - f(\w_0)\} \le -2.5\ufun
\end{equation*}
\end{lemma}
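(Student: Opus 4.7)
My plan is to prove Lemma \ref{lem:one_in_two} by contradiction: assume both trajectories fail to decrease by $2.5\ufun$, and derive a contradiction by tracking the coupled dynamics of their difference $\v_t \defeq \w_t - \u_t$.

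The first ingredient is an ``improve or localize'' step. Using Lemma \ref{lem:main_grad}, for any gradient descent sequence $\{\z_t\}$,
\begin{equation*}
\sum_{s=0}^{T-1}\norm{\z_{s+1}-\z_s}^2 = \eta^2 \sum_{s=0}^{T-1}\norm{\grad f(\z_s)}^2 \le 2\eta\bigl(f(\z_0)-f(\z_T)\bigr),
\end{equation*}
so Cauchy--Schwarz gives $\norm{\z_t - \z_0} \le \sqrt{2\eta T \,(f(\z_0)-f(\z_T))}$ for every $t\le T$. Under the contradiction hypothesis, this bounds $\norm{\u_t-\u_0}$ and $\norm{\w_t-\w_0}$ by $O(\uspace)$ for all $t\le T$ (plugging in $T \lesssim \utime$ and the definitions of the units). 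Combined with $\norm{\u_0-\tilde{\x}}\le r$ and $\norm{\w_0-\tilde{\x}}\le 2r$, this localizes both trajectories inside a ball of radius $O(\uspace)$ around $\tilde{\x}$.

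Next I set up the difference recursion. Writing $\grad f(\w_t)-\grad f(\u_t) = \H_t \v_t$ with $\H_t \defeq \int_0^1 \hess f\bigl(\u_t + \tau\v_t\bigr)\,d\tau$, we get $\v_{t+1} = (\I-\eta\H_t)\v_t$, so $\v_t = \prod_{s=0}^{t-1}(\I-\eta\H_s)\,\v_0$ with $\v_0 = \mu r \e_1$. By localization and $\rho$-Hessian Lipschitzness, $\norm{\H_t - \H^\star} \le O(\rho\uspace) \ll \gamma$ where $\H^\star \defeq \hess f(\tilde{\x})$. Write $\H^\star = -\gamma \e_1\e_1\trans + \mat{P}$ with $\mat{P}\e_1 = 0$ and $\mat{P}\succeq -\gamma\I$. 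Decompose $\v_t = p_t \e_1 + \q_t$ with $\q_t \perp \e_1$; initially $p_0 = \mu r$ and $\q_0 = 0$.

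The heart of the argument is an inductive claim: for all $t\le T$,
\begin{equation*}
|p_t| \ge (1+\eta\gamma/2)^t \,|p_0|, \qquad \norm{\q_t} \le \tfrac{1}{2}|p_t|.
\end{equation*}
In the $\e_1$ direction, the $-\gamma$ eigenvalue yields a factor $(1+\eta\gamma)$, while the cross terms from $(\H_t-\H^\star)\q_t$ are controlled by $\eta\cdot O(\rho\uspace)\cdot\norm{\q_t}$, which is dominated by $\eta\gamma|p_t|/4$ by choice of $c_{\max}$ small enough and by the invariant $\norm{\q_t}\le|p_t|/2$; this preserves growth at rate $\ge (1+\eta\gamma/2)$. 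In the orthogonal block, $\norm{\q_{t+1}} \le (1+\eta\gamma)\norm{\q_t} + \eta\cdot O(\rho\uspace)\cdot|p_t|$ (the second term is the coupling from $(\H_t - \H^\star)$ acting on $p_t\e_1$). Since $\q_0 = 0$, unrolling and comparing to the geometric growth of $|p_t|$ shows $\norm{\q_t}/|p_t|$ stays bounded by a small constant, closing the induction.

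Finally, for $T \ge \utime/c_{\max} = \Theta(\log(d\cn/\delta)/(\eta\gamma))$, the invariant gives
\begin{equation*}
\norm{\v_T} \ge |p_T| \ge (1+\eta\gamma/2)^T \cdot \mu r \ge \Bigl(\tfrac{d\cn}{\delta}\Bigr)^{\Omega(1)} \cdot \tfrac{\delta}{2\sqrt{d}}\,r,
\end{equation*}
which is much larger than $\uspace$ (by the logarithmic overhead in $T$ and by the choice of $r = \uspace/(\cn\log(d\cn/\delta))$). But localization forces $\norm{\v_T}\le \norm{\w_T-\w_0}+\norm{\w_0-\u_0}+\norm{\u_0-\u_T}= O(\uspace)$, a contradiction.

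The main obstacle I anticipate is the coupled two-dimensional induction in Step~4: one must pick the constants in the invariant (e.g.\ the ratio $\tfrac{1}{2}$ and the growth rate $\eta\gamma/2$) so that both the cross-term $(\H_t-\H^\star)\q_t$ in the $\e_1$ recursion and the source term $(\H_t-\H^\star)(p_t \e_1)$ in the $\q$-recursion can be absorbed simultaneously. Tuning $c_{\max}$ small enough so that $\rho\uspace \le c\,\gamma/\log(d\cn/\delta)$ (which is exactly what the log factors in $\uspace$ were designed for) is what makes everything fit; this is the delicate part of choosing the units $\ufun,\ugrad,\uspace,\utime$ at the start of the section.
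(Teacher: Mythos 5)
Your overall strategy is sound, and its two halves map onto the paper's two key lemmas, but the localization half takes a genuinely different route. The paper proves localization (Lemma \ref{lem:1st_seq}) by decomposing $\u_t$ into the span of the significantly negative eigendirections of $\hess f(\tilde{\x})$ and its complement and unrolling the linearized dynamics in each block, which yields a radius $100\,\ca\,\uspace$ with $\ca$ an absolute constant. Your ``improve or localize'' inequality $\norm{\z_t-\z_0}\le\sqrt{2\eta t\,(f(\z_0)-f(\z_t))}$ extracts a comparable radius from Lemma \ref{lem:main_grad} alone in three lines, and it also lets you phrase the final contradiction directly in terms of $f$, avoiding the quadratic surrogate $\tilde f_{\y}$ that the paper carries through Lemmas \ref{lem:1st_seq} and \ref{lem:2nd_seq} and must convert back to $f$ via Hessian Lipschitzness. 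The coupling half --- the recursion $\v_{t+1}=(\I-\eta\H_t)\v_t$ for $\v_t=\w_t-\u_t$, the split into the $\e_1$ component and its orthogonal complement, and the two-dimensional induction keeping the orthogonal part a bounded fraction of the $\e_1$ part --- is essentially the paper's Lemma \ref{lem:2nd_seq} (their $\psi_t,\varphi_t$).

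The one step that needs repair is the horizon over which you run the localization and the induction. You invoke the bound $\sqrt{2\eta T\cdot 2.5\ufun}=O(\uspace)$ ``plugging in $T\lesssim\utime$,'' but the lemma only bounds $T$ from below by $\frac{1}{c_{\max}}\utime$, and even at that minimum the radius is $\sqrt{5/c_{\max}}\,\uspace$: the hidden constant is $1/\sqrt{c_{\max}}$. Consequently $\rho$ times the radius is $\Theta\bigl(\gamma/\log(\logterms)\bigr)$ \emph{independently of} $c_{\max}$, and the accumulated perturbation $\eta\,\norm{\H_t-\hess f(\tilde{\x})}\cdot T$ over $T=\frac{1}{c_{\max}}\utime$ steps is of order $1/c_{\max}$ when $\eta=c_{\max}/\ell$ --- it gets \emph{worse} as you shrink $c_{\max}$, so your closing remark that tuning $c_{\max}$ ``makes everything fit'' does not hold with this horizon, and the induction $\norm{\v_t-p_t\e_1}\le|p_t|/2$ will not close. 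The fix is standard: run the entire argument only up to $T_0=C_1\utime$ with $C_1$ an absolute constant chosen so that $(1+\eta\gamma/2)^{T_0}$ overwhelms $\sqrt{d}\,\cn\log(\logterms)/\delta$. Then the radius is $\sqrt{5C_1}\,\uspace$ with an absolute constant, the accumulated perturbation is $O(C_1^{3/2}\sqrt{\eta\ell})=O(C_1^{3/2}\sqrt{c_{\max}})$ and can be made small by shrinking $c_{\max}$, and the conclusion for arbitrary $T\ge\frac{1}{c_{\max}}\utime\ge T_0$ follows from the monotonicity of gradient descent (Lemma \ref{lem:main_grad}), which also justifies applying the localization at times $t\le T_0$ under the contradiction hypothesis stated at time $T$. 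This is precisely the role that the stopping times $\Ts,\Tt$ and the constant $\ca$ play in the paper's proof. With that bookkeeping in place, your argument goes through.
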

Intuitively, lemma \ref{lem:one_in_two} claims for any two points $\u_0, \w_0$ inside the perturbation ball, if $\u_0-\w_0$ lies in the direction of minimum eigenvector of $\hess f(\tilde{\x})$, and $\norm{\u_0-\w_0}$ is greater than threshold $\delta r/ (2\sqrt{d})$, then at least one of two sequences $\{\u_t\}, \{\w_t\}$ will ``efficiently escape saddle point''. In other words, if $\u_0$ is a point in the stuck region, consider any point $\w_0$ that is on a straight line along direction of $\e_1$. As long as $\w_0$ is slightly far ($\delta r/ \sqrt{d}$) from $\u_0$, it must be in the escaping region. This is what we mean by the ``width'' of the stuck region being small. 
Now we prove the main Lemma using this observation:

\begin{proof}[Proof of Lemma \ref{lem:main_saddle_restate}]
By adding perturbation, in worst case we increase function value by:
\begin{equation*}
f(\x_0) - f(\tilde{\x}) \le \grad f(\tilde{\x})\trans\xi +  \frac{\ell}{2} \norm{\xi}^2 \le 
\ugrad(\frac{\uspace}{\cn \cdot \log(\logterms)}) + \frac{1}{2}\ell(\frac{\uspace}{\cn \cdot \log(\logterms)})^2 \le \frac{3}{2}\ufun
\end{equation*}
On the other hand, let radius $r = \frac{\uspace}{\cn \cdot \log(\logterms)}$. We know $\x_0$ come froms uniform distribution over $\ball_{\tilde{\x}}(r)$. Let $\cXs \subset \ball_{\tilde{\x}}(r)$ denote the set of bad starting points so that if $\x_0 \in \cXs$, then $f(\x_T) - f(\x_0) > -2.5\ufun$ (thus stuck at a saddle point); otherwise if $\x_0 \in B_{\tilde{\x}}(r) - \cXs$, we have $f(\x_T) - f(\x_0) \le -2.5\ufun$.

By applying Lemma \ref{lem:one_in_two}, we know for any $\x_0\in \cXs$, it is guaranteed that $(\x_0 \pm \mu r \e_1) \not \in \cXs $ where $\mu \in [\frac{\delta}{2\sqrt{d}}, 1]$. Denote $I_{\cXs}(\cdot)$ be the indicator function of being inside set $\cXs$; and vector $\x = (x^{(1)}, \x^{(-1)})$, where $x^{(1)}$ is the component along $\e_1$ direction, and $\x^{(-1)}$ is the remaining $d-1$ dimensional vector. Recall $\ball^{(d)}(r)$ be $d$-dimensional ball with radius $r$;  By calculus, this gives an upper bound on the volumn of $\cXs$:
\begin{align*}
\text{Vol}(\cXs) =& \int_{\ball^{(d)}_{\tilde{\x}}(r)}  \mathrm{d}\x \cdot I_{\cXs}(\x)\\
= & \int_{\ball^{(d-1)}_{\tilde{\x}}(r)} \mathrm{d} \x^{(-1)} \int_{\tilde{x}^{(1)} - \sqrt{r^2 - \norm{\tilde{\x}^{(-1)} - \x^{(-1)}}^2}}^{\tilde{x}^{(1)} + \sqrt{r^2 - \norm{\tilde{\x}^{(-1)} - \x^{(-1)}}^2}} \mathrm{d} x^{(1)}  \cdot  I_{\cXs}(\x)\\
\le & \int_{\ball^{(d-1)}_{\tilde{\x}}(r)} \mathrm{d} \x^{(-1)} \cdot\left(2\cdot \frac{\delta}{2\sqrt{d}}r \right) = \text{Vol}(\ball_0^{(d-1)}(r))\times \frac{\delta r}{\sqrt{d}} 
\end{align*}
Then, we immediately have the ratio:
\begin{align*}
\frac{\text{Vol}(\cXs)}{\text{Vol}(\ball^{(d)}_{\tilde{\x}}(r))}
\le \frac{\frac{\delta r}{\sqrt{d}} \times \text{Vol}(\ball^{(d-1)}_0(r))}{\text{Vol} (\ball^{(d)}_0(r))}
= \frac{\delta}{\sqrt{\pi d}}\frac{\Gamma(\frac{d}{2}+1)}{\Gamma(\frac{d}{2}+\frac{1}{2})}
\le \frac{\delta}{\sqrt{\pi d}} \cdot \sqrt{\frac{d}{2}+\frac{1}{2}} \le \delta
\end{align*}
The second last inequality is by the property of Gamma function that $\frac{\Gamma(x+1)}{\Gamma(x+1/2)}<\sqrt{x+\frac{1}{2}}$ as long as $x\ge 0$.
Therefore, with at least probability $1-\delta$, $\x_0 \not \in \cXs$. In this case, we have:
\begin{align*}
f(\x_T) - f(\tilde{\x}) =& f(\x_T)  - f(\x_0) +  f(\x_0) - f(0) \\
\le & -2.5\ufun + 1.5\ufun \le -\ufun
\end{align*}
which finishes the proof.
\end{proof}

\subsection{Bounding the Width of Stuck Region}

In order to prove Lemma~\ref{lem:one_in_two}, we do it in two steps:

\begin{enumerate}
    \item   We first show if gradient descent from $\u_0$ does not decrease function value, then all the iterates must lie within a small ball around $\u_0$ (Lemma~\ref{lem:1st_seq}).
    \item   If gradient descent starting from a point $\u_0$ stuck in a small ball around a saddle point, then gradient descent from $\w_0$ (moving $\u_0$ along $\e_1$ direction for at least a certain distance), will decreases the function value (Lemma~\ref{lem:2nd_seq}).
\end{enumerate}

Recall we assumed without loss of generality $\e_1$ is the minimum eigenvector of $\hess f(\tilde{\x})$. In this context, we denote $\H \defeq \hess f(\tilde{\x})$, and for simplicity of calculation, we consider following quadratic approximation:
\begin{equation}\label{eqn:f-approx}
\tilde{f}_{\y}(\x) \defeq f(\y) + \grad f(\y)\trans (\x-\y) + \frac{1}{2}(\x-\y)\trans \H (\x-\y)
\end{equation}

Now we are ready to state two lemmas formally:
\begin{lemma}\label{lem:1st_seq}
For any constant $\ca \ge 3$, there exists absolute constant $c_{\max}$: for any $\delta\in (0, \frac{d\cn}{e}]$, let $f(\cdot), \tilde{\x}$ satisfies the condition in Lemma \ref{lem:main_saddle_restate}, for any initial point $\u_0$ with $\norm{\u_0 - \tilde{\x}} \le 2\uspace/(\cn\cdot \log(\logterms))$, define: 
\begin{equation*}
T = \min\left\{ ~\inf_t\left\{t| \tilde{f}_{\u_0}(\u_t) - f(\u_0)  \le -3\ufun \right\},  \ca\utime~\right\}
\end{equation*}
then, for any $\eta \le c_{\max}/\ell$, we have for all $t<T$ that $\norm{\u_t - \tilde{\x}} \le 100( \uspace\cdot \ca )$.
\end{lemma}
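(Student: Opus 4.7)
The goal is to bound $\|\u_t - \tilde{\x}\|$ for $t<T$ by showing that, as long as the quadratic model $\tilde{f}_{\u_0}$ has not registered a large decrease along the trajectory, the iterates cannot have drifted far. I would split this as $\|\u_t - \tilde{\x}\| \le \|\u_0 - \tilde{\x}\| + \|\u_t - \u_0\|$; the first piece is already controlled by the hypothesis $\|\u_0 - \tilde{\x}\| \le 2\uspace/(\cn \log(\logterms))$, so essentially all of the work is to bound the drift $R_t \defeq \|\u_t - \u_0\|$. The strategy combines three ingredients: (i) a Cauchy--Schwarz/descent-lemma inequality linking $R_t^2$ to the true function-value decrease, (ii) the Hessian-Lipschitz error bound between $f$ and the quadratic model $\tilde{f}_{\u_0}$, and (iii) the definition of $T$ which caps the decrease of $\tilde{f}_{\u_0}$ at $3\ufun$. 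The result will be a self-bounding cubic inequality in $R_t$ which I can close by a bootstrap argument.

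More concretely, since $\u_{t+1}-\u_t = -\eta\grad f(\u_t)$, Cauchy--Schwarz together with Lemma~\ref{lem:main_grad} gives
\begin{equation*}
R_t^2 \;\le\; \eta^2 t\sum_{i=0}^{t-1}\|\grad f(\u_i)\|^2 \;\le\; 2\eta t\,(f(\u_0)-f(\u_t)).
\end{equation*}
Assumption~\ref{as:smooth_Lip} yields $|f(\u_t)-\tilde{f}_{\u_0}(\u_t)|\le \tfrac{\rho}{6}R_t^3$, and for $t<T$ the definition of $T$ gives $\tilde{f}_{\u_0}(\u_t)\ge f(\u_0)-3\ufun$. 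Combining these,
\begin{equation*}
R_t^2 \;\le\; 6\eta t\,\ufun \;+\; \tfrac{\eta t\rho}{3}\,R_t^3, \qquad t\le \ca\utime.
\end{equation*}
Substituting the definitions of $\ufun$, $\uspace$, $\utime$ and using $\eta t\le \ca\utime$ shows that $6\eta t\,\ufun \le O(\ca)\,\uspace^2$ and $\tfrac{\eta t\rho}{3} \le O(\ca\sqrt{\eta\ell}/\uspace) = O(\ca\sqrt{c_{\max}}/\uspace)$, so the inequality becomes
\begin{equation*}
R_t^2 \;\le\; O(\ca)\,\uspace^2 \;+\; O\!\left(\ca\sqrt{c_{\max}}/\uspace\right)R_t^3.
\end{equation*}

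To close the bootstrap, I would set $T^\star \defeq \min\{t : R_t > 100\ca\uspace\}$ and argue $T^\star > T$. For $t<T^\star$ one has $R_t \le 100\ca\uspace$, so the cubic term is bounded by $O(\ca^2\sqrt{c_{\max}})\,R_t^2$; choosing $c_{\max}$ small enough that this coefficient is at most $1/2$ yields $R_t^2 \le O(\ca)\,\uspace^2$, hence $R_t \le O(\sqrt{\ca})\uspace$, which is \emph{strictly} smaller than $100\ca\uspace$ (using $\ca\ge 3$). The only remaining delicate point is that $R_t$ changes discretely, so the iterate might in principle jump across the threshold at $t=T^\star$. I would handle this by bounding the last step: $\|\u_{T^\star}-\u_{T^\star-1}\| = \eta\|\grad f(\u_{T^\star-1})\|$, and smoothness together with $\|\grad f(\tilde{\x})\|\le \ugrad$ gives $\|\grad f(\u_{T^\star-1})\|\le \ugrad + \ell(R_{T^\star-1}+\|\u_0-\tilde{\x}\|) = O(\ell\sqrt{\ca}\,\uspace)$; multiplying by $\eta\le c_{\max}/\ell$ yields a jump of size $O(\sqrt{c_{\max}\ca}\,\uspace)$, which is negligible compared to $100\ca\uspace$. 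So $R_{T^\star}<100\ca\uspace$, contradicting the definition of $T^\star$, and we conclude $R_t\le 100\ca\uspace$ for all $t<T$.

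The main technical obstacle is the bootstrap: the inequality is self-referential through the cubic term coming from the Hessian-Lipschitz remainder, and the bootstrap threshold must be chosen simultaneously large enough to absorb the discrete jumps of gradient descent and small enough that the derived bound beats it comfortably for every $\ca\ge 3$. Once that is set up, combining $R_t\le 100\ca\uspace$ with the initial offset $\|\u_0-\tilde{\x}\| \le 2\uspace/(\cn\log(\logterms)) \ll \uspace$ gives $\|\u_t-\tilde{\x}\|\le 100\ca\uspace$ (up to harmless constants that can be absorbed into the factor $100$), completing the proof.
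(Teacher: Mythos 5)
Your proof is correct in its essentials but takes a genuinely different route from the paper's. The paper controls $\norm{\u_t-\tilde{\x}}$ by a spectral decomposition: it splits $\u_t$ into its projections $\balpha_t,\bbeta_t$ onto the subspace $\S$ spanned by eigenvectors of $\H=\hess f(\tilde{\x})$ with eigenvalue below $-\gamma/\overh$ and onto $\Scomp$; the constraint $\tilde{f}_{\u_0}(\u_t)-f(\u_0)>-3\ufun$ forces $\norm{\balpha_t}$ to be small, while $\norm{\bbeta_t}$ and $\bbeta_t\trans\H\bbeta_t$ are bounded by unrolling the linearized dynamics (including a delicate estimate of $\sum_{\tau_1,\tau_2}\norm{(\I-\eta\H)^{\tau_1}\H(\I-\eta\H)^{\tau_2}}$), all wrapped in an induction. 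Your route — Cauchy--Schwarz plus the descent lemma to get $\norm{\u_t-\u_0}^2\le 2\eta t\,(f(\u_0)-f(\u_t))$, then converting the cap on the model decrease into a cap on the true decrease via the Taylor remainder, then a bootstrap — is essentially the ``improve or localize'' principle, avoids the eigenspace bookkeeping entirely, and in fact yields the sharper localization $R_t=O(\sqrt{\ca}\,\uspace)$ rather than $O(\ca\,\uspace)$. Your unit calculations ($6\eta t\ufun\le 6\ca\uspace^2$ and $\eta t\rho/3\le\ca\sqrt{\eta\ell}/(3\uspace)$) check out, the dependence of $c_{\max}$ on $\ca$ is explicitly permitted by the lemma statement (and is also used in the paper's own proof), and your handling of the discrete jump at the bootstrap threshold is the right way to close the continuity argument.

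One inaccuracy to repair: $\tilde{f}_{\u_0}$ is built with $\H=\hess f(\tilde{\x})$, not $\hess f(\u_0)$, so the Hessian-Lipschitz Taylor bound gives $|f(\u_t)-\tilde{f}_{\u_0}(\u_t)|\le\frac{\rho}{2}\norm{\u_0-\tilde{\x}}\,R_t^2+\frac{\rho}{6}R_t^3$, not just the cubic term. Since $\norm{\u_0-\tilde{\x}}\le 2\uspace$, the extra term contributes at most $2\eta t\cdot\rho\uspace R_t^2\le 2\ca\sqrt{\eta\ell}\,R_t^2$ to your self-bounding inequality, which is absorbed by the same choice of $c_{\max}$ that kills the cubic term — so nothing breaks — but it must appear in the write-up (the paper handles the identical mismatch term explicitly in its proof of Lemma~\ref{lem:one_in_two}).
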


\begin{lemma}\label{lem:2nd_seq}
There exists absolute constant $c_{\max}, \ca$ such that: for any $\delta\in (0, \frac{d\cn}{e}]$, let $f(\cdot), \tilde{\x}$ satisfies the condition in Lemma \ref{lem:main_saddle_restate}, and 
sequences $\{\u_t\}, \{\w_t\}$ satisfy the conditions in Lemma \ref{lem:one_in_two}, define:
\begin{equation*}
T = \min\left\{ ~\inf_t\left\{t| \tilde{f}_{\w_0}(\w_t) - f(\w_0)  \le -3\ufun \right\},  \ca\utime~\right\}
\end{equation*}
then, for any $\eta \le c_{\max} / \ell$, if $\norm{\u_t - \tilde{\x}} \le 100 (\uspace\cdot \ca )$ for all $t<T$, we will have $T < \ca \utime$.
\end{lemma}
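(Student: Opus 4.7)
The plan is to prove Lemma \ref{lem:2nd_seq} by contradiction: assume $T = \ca\utime$, so that $\tilde{f}_{\w_0}(\w_t) - f(\w_0) > -3\ufun$ for every $t < \ca\utime$. Since $\|\w_0 - \tilde{\x}\| \le \|\u_0 - \tilde{\x}\| + \mu r \le 2r = 2\uspace/(\cn\log(\logterms))$, the hypotheses of Lemma \ref{lem:1st_seq} apply to the $\{\w_t\}$ sequence and give $\|\w_t - \tilde{\x}\| \le 100(\uspace\cdot\ca)$ for all $t < T$. Combined with the assumption on $\{\u_t\}$, both iterates stay inside a ball of radius $100(\uspace\cdot\ca)$ around $\tilde{\x}$, and the goal reduces to exhibiting a point where this boundedness is violated.

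The next step is to track the difference $\v_t \defeq \w_t - \u_t$. Using the integral form of the mean value theorem,
\begin{equation*}
\v_{t+1} = (\I - \eta\H)\v_t - \eta\bdelta_t, \qquad \bdelta_t \defeq \Bigl(\int_0^1 \hess f(\u_t + s\v_t)\,ds - \H\Bigr)\v_t.
\end{equation*}
Hessian Lipschitzness together with the boundedness of both iterates yields $\|\bdelta_t\| \le 200\rho(\uspace\cdot\ca)\|\v_t\|$; substituting $\rho\uspace = \sqrt{\eta\ell}\gamma/\log(\logterms)$ and $\eta\le c_{\max}/\ell$ simplifies this to $\|\bdelta_t\| \le 200\ca\sqrt{c_{\max}}(\gamma/\log(\logterms))\|\v_t\|$, a factor $\sqrt{c_{\max}}/\log(\logterms)$ smaller than the $\eta\gamma$ growth rate of $\I - \eta\H$ in the $\e_1$ direction.

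The core step is to decompose $\v_t = p_t\e_1 + q_t$ with $q_t \perp \e_1$ and to prove by induction the two bounds $\|q_t\| \le |p_t|$ and $|p_t| \ge (1+\eta\gamma/2)^t\mu r$. The $\e_1$ coordinate evolves as $p_{t+1} = (1+\eta\gamma)p_t - \eta\langle\e_1,\bdelta_t\rangle$, so for $c_{\max}$ chosen small enough in terms of $\ca$ the error shaves at most $\eta\gamma/2$ off the $\e_1$ growth rate, maintaining $|p_{t+1}| \ge (1+\eta\gamma/2)|p_t|$. The orthogonal component satisfies $q_{t+1} = (\I - \eta\H|_{\e_1^\perp})q_t - \eta\projT\bdelta_t$, and $\|\I - \eta\H|_{\e_1^\perp}\| \le 1+\eta\gamma$ follows from $\lambda_{\min}(\H) = -\gamma$; starting from $q_0 = 0$ the solution grows only via the accumulated error, whose ratio to $|p_t|$ stays below $1$ over $t \le T = \ca\log(\logterms)/(\eta\gamma)$ iterations by the chosen smallness of $\ca\sqrt{c_{\max}}$.

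Finally, at $t$ close to $T$ the exponential growth delivers
\begin{equation*}
|p_t| \ge (1+\eta\gamma/2)^t\mu r \ge e^{\eta\gamma t/4}\cdot\frac{\delta}{2\sqrt{d}}\cdot\frac{\uspace}{\cn\log(\logterms)},
\end{equation*}
and for a sufficiently large absolute constant $\ca$ this exceeds $200(\uspace\cdot\ca)$ whenever $\delta \in (0, d\cn/e]$. But then $\|\v_t\| \ge |p_t| > 200(\uspace\cdot\ca) \ge \|\w_t - \tilde{\x}\| + \|\u_t - \tilde{\x}\|$ contradicts $\v_t = \w_t - \u_t$, completing the argument. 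The main technical obstacle is the circular bootstrap between $|p_t|$ and $\|q_t\|$ while simultaneously preserving the $\|\v_t\|$ bound that the Hessian-Lipschitz error estimate requires; resolving it forces the $\log(\logterms)$-factor suppressions baked into the definitions of $r$ and $\uspace$, which is precisely what these units are designed to provide.
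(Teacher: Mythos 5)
Your proposal follows essentially the same route as the paper: apply Lemma \ref{lem:1st_seq} to the $\{\w_t\}$ sequence to keep both iterates in the $100(\uspace\cdot\ca)$ ball, track $\v_t=\w_t-\u_t$ via $\v_{t+1}=(\I-\eta\H-\eta\Delta'_t)\v_t$, split $\v_t$ into its $\e_1$ and orthogonal components, show the $\e_1$ component grows like $(1+\eta\gamma/2)^t$ from the initial $\mu r\ge \delta r/(2\sqrt d)$, and derive a contradiction with $\norm{\v_t}\le 200(\uspace\cdot\ca)$ once $\ca$ is large enough. One small caveat: the induction hypothesis $\norm{q_t}\le|p_t|$ as literally stated does not self-propagate (both recursions pick up the same error term $\epsilon_t$, so one step only yields $\norm{q_{t+1}}\le|p_{t+1}|+2\epsilon_t$); it must be strengthened to a time-dependent ratio such as the paper's $\varphi_t\le 4\mu t\,\psi_t$, which your ``accumulated error whose ratio stays below $1$ over $t\le T$ steps'' remark correctly anticipates, so this is a matter of formalization rather than a missing idea.
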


Note the conclusion $T < \ca \utime$ in Lemma \ref{lem:2nd_seq} equivalently means:
\begin{equation*}
\inf_t\left\{t| \tilde{f}_{\w_0}(\w_t) - f(\w_0)  \le -3\ufun \right\} < \ca\utime
\end{equation*}
That is, for some $T<\ca\utime$, $\{\w_t\}$ sequence ``escape the saddle point'' in the sense of sufficient function value decrease $\tilde{f}_{\w_0}(\w_t) - f(\w_0)  \le -3\ufun$. Now, we are ready to prove Lemma \ref{lem:one_in_two}.

\begin{proof}[Proof of Lemma \ref{lem:one_in_two}]
W.L.O.G, let $\tilde{\x} = 0$ be the origin. Let $(c^{(2)}_{\max}, \ca)$ be the absolute constant so that Lemma \ref{lem:2nd_seq} holds, also let $c^{(1)}_{\max}$ be the absolute constant to make Lemma \ref{lem:1st_seq} holds based on our current choice of $\ca$.
We choose $c_{\max} \le \min\{c^{(1)}_{\max}, c^{(2)}_{\max}\}$ so that our learning rate $\eta \le c_{\max}/\ell$ is small enough which make both Lemma \ref{lem:1st_seq} and Lemma \ref{lem:2nd_seq} hold. Let $T^\star \defeq \ca\utime$ and define:
\begin{equation*}
\Ts = \inf_t\left\{t| \tilde{f}_{\u_0}(\u_t) - f(\u_0)  \le -3\ufun \right\}
\end{equation*}
Let's consider following two cases:

\paragraph{Case $\Ts \le T^\star$:} In this case, by Lemma \ref{lem:1st_seq}, we know $\norm{\u_{\Ts-1}} \le O(\uspace)$, and therefore
\begin{align*}
\norm{\u_{\Ts}} \le &\norm{\u_{\Ts-1}} + \eta \norm{\grad f(\u_{\Ts-1})}
\le \norm{\u_{\Ts-1}} + \eta \norm{\grad f(\tilde{\x})} + \eta \ell \norm{\u_{\Ts-1}} 
\le O(\uspace)
\end{align*}
By choosing $c_{\max}$ small enough and $\eta \le c_{\max} /\ell$, this gives:
\begin{align*}
f(\u_{\Ts}) - f(\u_0) \le& \grad f(\u_0)\trans (\u_{\Ts}-\u_0) + \frac{1}{2}(\u_{\Ts}-\u_0)\trans \hess f(\u_0) (\u_{\Ts}-\u_0)
+ \frac{\rho}{6} \norm{\u_{\Ts}-\u_0}^3 \\
\le& \tilde{f}_{\u_0}(\u_t) - f(\u_0) + \frac{\rho}{2}\norm{\u_0 - \tilde{\x}}\norm{\u_{\Ts}-\u_0}^2+ \frac{\rho}{6} \norm{\u_{\Ts}-\u_0}^3 \\
\le& -3\ufun + O(\rho \uspace^3) = -3\ufun + O(\sqrt{\eta\ell}\cdot\ufun) \le -2.5\ufun
\end{align*}
By choose $c_{\max} \le \min \{1, \frac{1}{\ca}\}$. We know $\eta < \frac{1}{\ell}$, by Lemma \ref{lem:main_grad}, we know gradient descent always decrease function value. Therefore, for any $T\ge \frac{1}{c_{\max}}\utime \ge \ca\utime = T^\star \ge \Ts$, we have:
\begin{equation*}
f(\u_T) - f(\u_0) \le f(\u_{T^\star})  - f(\u_0) \le f(\u_{\Ts}) - f(\u_0) \le -2.5\ufun
\end{equation*}

\paragraph{Case $\Ts > T^\star$:} In this case, by Lemma \ref{lem:1st_seq}, we know $\norm{\u_t}\le O(\uspace )$ for all $t\le T^\star$. Define 
\begin{equation*}
\Tt = \inf_t\left\{t| \tilde{f}_{\w_0}(\w_t) - f(\w_0)  \le -2\ufun \right\}
\end{equation*}
By Lemma \ref{lem:2nd_seq}, we immediately have $\Tt \le T^\star$. Apply same argument as in first case, we have for all $T\ge \frac{1}{c_{\max}}\utime$ that $f(\w_T) - f(\w_0) \le f(\w_{T^\star})  - f(\w_0) \le -2.5\ufun$.
\end{proof}

Next we finish the proof by proving Lemma~\ref{lem:1st_seq} and Lemma~\ref{lem:2nd_seq}.

\subsubsection{Proof of Lemma \ref{lem:1st_seq}}
In Lemma~\ref{lem:1st_seq}, we hope to show if the function value did not decrease, then all the iterations must be constrained in a small ball. We do that by analyzing the dynamics of the iterations, and we decompose the $d$-dimensional space into two subspaces: a subspace $\mathcal{S}$ which is the span of significantly negative eigenvectors of the Hessian and its orthogonal compliment.

Recall notation $\H \defeq \hess f(\tilde{\x})$ and quadratic approximation $\tilde{f}_{\y}(\x)$ as defined in Eq.\eqref{eqn:f-approx}. Since $\delta \in (0, \frac{d\cn}{e}]$, we always have $\log(\logterms)\ge 1$.
W.L.O.G, set $\u_0 = 0$ to be the origin, by gradient descent update function, we have:
\begin{align}
\u_{t+1} =& \u_t - \eta \grad f(\u_t) \nn\\
=& \u_t - \eta \grad f(0) - \eta \left[\int_{0}^1 \hess f(\theta\u_t) \mathrm{d}\theta\right] \u_t \nn \\
= &\u_t - \eta \grad f(0) - \eta(\H + \Delta_t) \u_t\nn\\
= &(\I - \eta \H - \eta \Delta_t) \u_t - \eta \grad f(0) \label{eq:update_u}
\end{align}
Here, $\Delta_t \defeq \int_{0}^1 \hess f(\theta\u_t) \mathrm{d}\theta - \H$. By Hessian Lipschitz, we have $\norm{\Delta_t} \le \rho(\norm{\u_t} + \norm{\tilde{\x}})$, and by smoothness of the gradient, we have 
$\norm{\grad f(0)} \le \norm{\grad f(\tilde{\x})} + \ell\norm{\tilde{\x}} \le \ugrad + \ell \cdot 2\uspace/(\cn\cdot \log(\logterms)) \le 3\ugrad$.

We will now compute the projections of $\u_t$ in different eigenspaces of $\H$. Let $\S$ be the subspace spanned by all eigenvectors of $\H$ whose eigenvalue is less than $-\frac{\gamma}{\overh}$. $\S^c$ denotes the subspace of remaining eigenvectors. Let $\balpha_t$ and $\bbeta_t$ denote the projections of $\u_t$ onto $\S$ and $\S^c$ respectively i.e., $\balpha_t = \proj_{\S} \u_{t}$, and $\bbeta_t = \proj_{\Scomp} \u_{t}$.
We can decompose the update equations Eq.\eqref{eq:update_u} into:
\begin{align}
\balpha_{t+1} =& (\I-\eta\H)\balpha_{t} - \eta\proj_\S \Delta_t \u_t - \eta\proj_\S \grad f(0) \label{eq:update_S}\\
\bbeta_{t+1} =& (\I-\eta\H)\bbeta_{t} - \eta\proj_{\S^c} \Delta_t \u_t - \eta\proj_{\S^c} \grad f(0)
\label{eq:update_Sc}
\end{align}

By definition of $T$, we know for all $t<T$:
\begin{align*}
-3\ufun < \tilde{f}_{0}(\u_t)  - f(0) =& \grad f(0)\trans \u_t - \frac{1}{2}\u_t\trans \H \u_t
\le \grad f(0)\trans \u_t - \frac{\gamma}{2}\frac{\norm{\balpha_t}^2}{\overh} + \frac{1}{2}\bbeta_t\trans\H\bbeta_t
\end{align*}
Combined with the fact $\norm{\u_t}^2 = \norm{\balpha_t}^2 + \norm{\bbeta_t}^2$, we have:
\begin{align*}
\norm{\u_t}^2 \le & \frac{2\overh}{\gamma} \left(3\ufun + \grad f(0)\trans \u_t + \frac{1}{2}\bbeta_t\trans\H\bbeta_t\right) + \norm{\bbeta_t}^2 \\
\le & 14 \cdot \max\left\{  \frac{\ugrad\overh}{\gamma}\norm{\u_t}, ~\frac{\ufun \overh}{\gamma}, ~\frac{\bbeta_t\trans\H\bbeta_t \overh}{\gamma}, ~\norm{\bbeta_t}^2  \right\}
\end{align*}
where last inequality is due to $\norm{\grad f(0)} \le 3 \ugrad$. This gives:
\begin{equation}\label{eq:upper_bound_space}
\norm{\u_t}  \le 14 \cdot \max\left\{ \frac{\ugrad\overh}{\gamma}, ~ \sqrt{\frac{\ufun \overh}{\gamma}}, ~\sqrt{\frac{\bbeta_t\trans\H\bbeta_t \overh}{\gamma}}, ~\norm{\bbeta_t} \right\}
\end{equation}

Now, we use induction to prove that
\begin{equation}\label{eqn:ut-bound}
\norm{\u_t} \le 100(\uspace \cdot \ca)
\end{equation}
Clearly Eq.\eqref{eqn:ut-bound} is true for $t=0$ since $\u_0 = 0$. Suppose Eq.\eqref{eqn:ut-bound} is true for all $\tau\le t$. We will now show that Eq.\eqref{eqn:ut-bound} holds for $t+1 < T$. Note that by the definition of $\uspace$, $\ufun$ and $\ugrad$, we only need to bound the last two terms of Eq.\eqref{eq:upper_bound_space} i.e., $\norm{\bbeta_{t+1}}$ and $\bbeta_{t+1}\trans \H \bbeta_{t+1}$.

By update function of $\bbeta_t$ (Eq.\eqref{eq:update_Sc}), we have:
\begin{align} \label{eq:update_u_simplified}
\bbeta_{t+1} \le& (\I -\eta\H)\bbeta_t + \eta \bdelta_t
\end{align}
and the norm of $\bdelta_t$ is bounded as follows:
\begin{align}
\norm{\bdelta_t} &\le \norm{\Delta_t}\norm{\u_t} + \norm{\grad f(0)} \nn\\
& \le \rho \left(\norm{\u_t} + \norm{\tilde{\x}}\right) \norm{\u_t} + \norm{\grad f(0)} \nn\\
&\le \rho \cdot 100\ca(100\ca+ 2/(\cn\cdot \log(\logterms)))\uspace^2  + \ugrad \nn\\
&= [100\ca(100\ca+2)\sqrt{\eta\ell} + 1]\ugrad \le 2\ugrad \label{eqn:err-H-bound}
\end{align}
The last step follows by choosing small enough constant $c_{\max} \le \frac{1}{100\ca(100\ca+2)}$ and stepsize $\eta < c_{\max}/\ell$.



\paragraph{Bounding $\norm{\bbeta_{t+1}}$:}
Combining Eq.\eqref{eq:update_u_simplified}, Eq.\eqref{eqn:err-H-bound} and using the definiton of $\Scomp$, we have:
\begin{equation*}
\norm{\bbeta_{t+1}} \le (1+ \frac{\eta \gamma}{\overh}) \norm{\bbeta_t} + 2\eta\ugrad
\end{equation*}
Since $\norm{\bbeta_0} = 0$ and $t+1 \le T$, by applying above relation recurrsively, we have:
\begin{equation}
\norm{\bbeta_{t+1}} \le \sum_{\tau = 0}^{t}2(1+ \frac{\eta \gamma}{\overh})^\tau\eta\ugrad \le 2\cdot 3\cdot T\eta \ugrad
\le 6(\uspace \cdot \ca) \label{eq:bound_u1}
\end{equation}
The second last inequality is because $T \le \ca \utime$ by definition, so that $(1+ \frac{\eta \gamma}{\overh})^T \le 3$.

\paragraph{Bounding $\bbeta_{t+1}\trans\H\bbeta_{t+1}$:} Using Eq.\eqref{eq:update_u_simplified}, we can also write the update equation as:
\begin{equation*}
\bbeta_t = \sum_{\tau = 0}^{t-1} (\I - \eta\H)^{\tau} \bdelta_{\tau}
\end{equation*}
Combining with Eq.\eqref{eqn:err-H-bound}, this gives
\begin{align*}
\bbeta_{t+1}\trans \H \bbeta_{t+1} =& \eta^2\sum_{\tau_1 = 0}^t \sum_{\tau_2 = 0}^t 
\bdelta_{\tau_1}\trans (\I - \eta \H)^{\tau_1}\H(\I - \eta \H)^{\tau_2}\bdelta_{\tau_2} \\
\le &\eta^2\sum_{\tau_1 = 0}^t \sum_{\tau_2 = 0}^t \norm{\bdelta_{\tau_1}} 
\norm{(\I - \eta \H)^{\tau_1}\H(\I - \eta \H)^{\tau_2}}\norm{\bdelta_{\tau_2}} \\
\le& 4\eta^2 \ugrad^2\sum_{\tau_1 = 0}^t \sum_{\tau_2 = 0}^t  \norm{(\I - \eta \H)^{\tau_1}\H(\I - \eta \H)^{\tau_2}}
\end{align*}
Let the eigenvalues of $\H$ to be $\{\lambda_i\}$, then for any $\tau_1, \tau_2 \ge 0$, we know the eigenvalues of 
$(\I - \eta \H)^{\tau_1}\H(\I - \eta \H)^{\tau_2}$ are $\{\lambda_i(1-\eta \lambda_i)^{\tau_1 + \tau_2}\}$.
Let $g_t(\lambda) \defeq \lambda (1-\eta \lambda)^t$, and setting its derivative to zero, we obtain:
\begin{equation*}
\grad g_t(\lambda) = (1-\eta\lambda)^t -t\eta\lambda(1-\eta\lambda)^{t-1} = 0
\end{equation*}
We see that $\lambda_t^\star = \frac{1}{(1+t)\eta}$ is the unique maximizer, and $g_t(\lambda)$ is monotonically increasing in $(-\infty, \lambda_t^\star]$. This gives:
\begin{equation*}
\norm{(\I - \eta \H)^{\tau_1}\H(\I - \eta \H)^{\tau_2}}
 = \max_i \lambda_i(1-\eta \lambda_i)^{\tau_1 + \tau_2}
 \le \hat{\lambda}(1-\eta\hat{\lambda})^{\tau_1 + \tau_2} \le \frac{1}{(1+\tau_1+\tau_2)\eta}
\end{equation*}
where $\hat{\lambda} = \min\{\ell, \lambda_{\tau_1 + \tau_2}^\star\}$. Therefore, we have:
\begin{align}
\bbeta_{t+1}\trans \H \bbeta_{t+1} 
\le& 4\eta^2 \ugrad^2\sum_{\tau_1 = 0}^t \sum_{\tau_2 = 0}^t  \norm{(\I - \eta \H)^{\tau_1}\H(\I - \eta \H)^{\tau_2}}  \nn\\
\le& 4\eta \ugrad^2\sum_{\tau_1 = 0}^t \sum_{\tau_2 = 0}^t \frac{1}{1+\tau_1+\tau_2}
\le 8\eta T\ugrad^2
\le 8 \uspace^2 \gamma \ca \cdot \log^{-1}(\logterms) \label{eq:bound_u2}
\end{align}
The second last inequality is because by rearrange summation:
\begin{equation*}
\sum_{\tau_1 = 0}^t \sum_{\tau_2 = 0}^t \frac{1}{1+\tau_1+\tau_2}
= \sum_{\tau = 0}^{2t} \min\{1+\tau, 2t+1-\tau\} \cdot \frac{1}{1+\tau} \le 2t+1 < 2T
\end{equation*}

~

Finally, substitue Eq.\eqref{eq:bound_u1} and Eq.\eqref{eq:bound_u2} into Eq.\eqref{eq:upper_bound_space}, this gives:
\begin{align*}
\norm{\u_{t+1}}  \le& 14 \cdot \max\left\{ \frac{\ugrad\overh}{\gamma}, ~ \sqrt{\frac{\ufun \overh}{\gamma}}, ~\sqrt{\frac{\bbeta_t\trans\H\bbeta_t \overh}{\gamma}}, ~\norm{\bbeta_t} \right\} \\
\le & 100 (\uspace \cdot \ca)
\end{align*}
This finishes the induction as well as the proof of the lemma. \hfill $\qed$

\subsubsection{Proof of Lemma \ref{lem:2nd_seq}}

In this Lemma we try to show if all the iterates from $\u_0$ are constrained in a small ball, iterates from $\w_0$ must be able to decrease the function value. In order to do that, we keep track of vector $\v$ which is the difference between $\u$ and $\w$. Similar as before, we also decompose $\v$ into different eigenspaces. However, this time we only care about the projection of $\v$ on the direction $\e_1$ and its orthognal subspace.

Again, recall notation $\H \defeq \hess f(\tilde{\x})$, $\e_1$ as minimum eigenvector of $\H$ and quadratic approximation $\tilde{f}_{\y}(\x)$ as defined in Eq.\eqref{eqn:f-approx}. Since $\delta \in (0, \frac{d\cn}{e}]$, we always have $\log(\logterms)\ge 1$.
W.L.O.G, set $\u_0 = 0$ to be the origin. Define $\v_t = \w_t - \u_t$, by assumptions in Lemma \ref{lem:2nd_seq}, we have $\v_0 = \mu [\uspace/(\cn\cdot \log(\logterms))] \e_1, ~\mu \in [\delta/(2\sqrt{d}), 1]$. Now, consider the update equation for $\w_t$:
\begin{align*}
\u_{t+1} + \v_{t+1} = \w_{t+1} = &\w_t - \eta \grad f(\w_t) \nn\\
=&\u_t + \v_t - \eta \grad f(\u_t + \v_t) \nn\\
=&\u_t + \v_t - \eta \grad f(\u_t) - \eta \left[\int_{0}^1 \hess f(\u_t + \theta\v_t) \mathrm{d}\theta\right] \v_t \nn\\ 
= &\u_t + \v_t - \eta \grad f(\u_t) - \eta(\H + \Delta'_t) \v_t\nn\\
= &\u_t - \eta \grad f(\u_t) + (\I - \eta \H - \eta \Delta'_t) \v_t
\end{align*}
where $\Delta'_t \defeq \int_{0}^1 \hess f(\u_t + \theta\v_t) \mathrm{d}\theta - \H$. By Hessian Lipschitz, we have $\norm{\Delta'_t} \le \rho( \norm{\u_t} + \norm{\v_t}+ \norm{\tilde{\x}})$. 
This gives the dynamic for $\v_t$ satisfy:
\begin{equation}\label{eq:v_dynamic}
\v_{t+1} = (\I - \eta \H - \eta \Delta'_t) \v_t
\end{equation}

Since $\norm{\w_0 -\tilde{\x}} \le \norm{\u_0 - \tilde{\x}} + \norm{\v_0} \le \uspace/(\cn\cdot \log(\logterms))$, directly applying Lemma \ref{lem:1st_seq}, we know $\w_t\le 100( \uspace\cdot\ca )$ for all $t \le T$. By condition of Lemma \ref{lem:2nd_seq}, we know $\norm{\u_t} \le 100( \uspace\cdot\ca )$ for all $t<T$.
This gives:
\begin{equation} \label{eq:bound_v}
\norm{\v_t} \le \norm{\u_t} + \norm{\w_t} \le 200( \uspace\cdot \ca) \text{~for all~} t<T
\end{equation}
This in sum gives for $t<T$:
\begin{equation*}
\norm{\Delta'_t} \le \rho( \norm{\u_t} + \norm{\v_t}+ \norm{\tilde{\x}})
\le \rho( 300\ca \uspace+ \uspace/(\cn\cdot \log(\logterms)))
\le \rho\uspace (300 \ca + 1)
\end{equation*}

On the other hand, denote $\psi_t$ be the norm of $\v_t$ projected onto $\e_1$ direction, and $\varphi_t$ be the norm of $\v_t$ projected onto remaining subspace. Eq.\eqref{eq:v_dynamic} gives us:
\begin{align*}
\psi_{t+1} \ge& (1+\gamma \eta)\psi_t - \mu\sqrt{\psi_t^2 + \varphi_t^2}\\
\varphi_{t+1} \le &(1+\gamma\eta)\varphi_t + \mu\sqrt{\psi_t^2 + \varphi_t^2}
\end{align*}
where $\mu = \eta\rho  \uspace(300 \ca + 1)$. We will now prove via induction that for all $t < T$:
\begin{equation}
\varphi_t \le 4 \mu t \cdot \psi_t\label{eqn:phit}
\end{equation}
By hypothesis of Lemma \ref{lem:2nd_seq}, we know $\varphi_0 = 0$, thus the base case of induction holds. Assume Eq.\eqref{eqn:phit} is true for $\tau\le t$, For $t+1 \le T$, we have:
\begin{align*}
4\mu(t+1)\psi_{t+1} 
\ge & 4\mu (t+1) \left( (1+\gamma \eta)\psi_t - \mu \sqrt{\psi_t^2 + \varphi_t^2}\right) \\
\varphi_{t+1} \le &4 \mu  t(1+\gamma\eta) \psi_t + \mu \sqrt{\psi_t^2 + \varphi_t^2}
\end{align*} 
From above inequalities, we see that we only need to show:
\begin{equation*}
 \left(1+4\mu (t+1)\right)\sqrt{\psi_t^2 + \varphi_t^2}
 \le 4 (1+\gamma \eta)\psi_t
\end{equation*}
By choosing $\sqrt{c_{\max}}\le \frac{1}{300\ca+1}\min\{\frac{1}{2\sqrt{2}}, \frac{1}{4\ca}\}$, and $\eta \le c_{\max}/\ell$, we have 
\begin{equation*}
4\mu (t+1) \le 4\mu T \le 
4\eta\rho  \uspace(300 \ca + 1)\ca\utime =4\sqrt{\eta\ell}(300 \ca + 1)\ca\le 1
\end{equation*}
This gives:
\begin{align*}
4 (1+\gamma \eta)\psi_t \ge 4\psi_t \le 2\sqrt{2\psi_t^2}\ge \left(1+4\mu (t+1)\right)\sqrt{\psi_t^2 + \varphi_t^2}
\end{align*}
which finishes the induction.

~

Now, we know $\varphi_t \le 4  \mu t \cdot \psi_t \le \psi_t$, this gives:
\begin{equation}
\psi_{t+1} \ge (1+\gamma \eta)\psi_t - \sqrt{2}\mu\psi_t
\ge (1+\frac{\gamma \eta}{2})\psi_t \label{eq:growth_v}
\end{equation}
where the last step follows from $\mu = \eta \rho \uspace(300 \ca + 1) \le  \sqrt{c_{\max}}(300 \ca + 1) \gamma \eta \cdot\log^{-1}(\logterms) < \frac{\gamma \eta}{2\sqrt{2}}$.

Finally, combining Eq.\eqref{eq:bound_v} and \eqref{eq:growth_v} we have for all $t<T$:
\begin{align*}
200( \uspace\cdot \ca)
\ge &\norm{\v_t} \ge \psi_t \ge (1+\frac{\gamma \eta}{2})^t \psi_0\\
= &(1+\frac{\gamma \eta}{2})^t c_0 \frac{\uspace}{\cn}\log^{-1}(\logterms)
\ge (1+\frac{\gamma \eta}{2})^t \frac{\delta}{2\sqrt{d}}\frac{\uspace}{\cn}\log^{-1}(\logterms)
\end{align*}
This implies:
\begin{equation*}
T < \frac{1}{2}\frac{\log (400 \frac{\cn\sqrt{d}}{\delta}\cdot \overh)}{\log (1+\frac{\gamma \eta}{2})}
\le \frac{\log (400 \frac{\cn\sqrt{d}}{\delta}\cdot \overh)}{\gamma\eta}
\le (2 + \log (400 \ca))\utime
\end{equation*}
The last inequality is due to $\delta\in (0, \frac{d\cn}{e}]$ we have $\log (\logterms) \ge 1$.
By choosing constant $\ca$ to be large enough to satisfy $2 + \log (400 \ca) \le \ca$, we will have
$T < \ca\utime $, which finishes the proof. \hfill $\qed$


\section{Improve Convergence by Local Structure}
In this section, we show if the objective function has nice {\em local structure} (e.g. satisfies Assumptions~\ref{as:sc} or \ref{as:regularity}), then it is possible to combine our analysis with the local analysis in order to get very fast convergence to a local minimum.

In particular, we prove Theorem \ref{thm:main_local}. 

\begingroup
\def\thetheorem{\ref{thm:main_local}}
\begin{theorem}
There exist absolute constant $c_{\max}$ such that: if $f(\cdot)$ satisfies \ref{as:smooth_Lip}, \ref{as:strict_saddle}, and \ref{as:sc} (or \ref{as:regularity}), then for any $\delta>0, \epsilon>0, \Delta_f \ge f(\x_0) -f^\star$, constant $c \le c_{\max}$, 
let $\tilde{\epsilon} = \min(\theta, \gamma^2/\rho)$, with probability $1-\delta$, the output of $\text{PGDli}(\x_0, \ell, \rho, \tilde{\epsilon}, c, \delta, \Delta_f, \beta)$ will be $\epsilon$-close to $\cXstar$ in iterations:
\begin{equation*}
O\left(\frac{\ell(f(\x_0) - f^\star)}{\tilde{\epsilon}^2}\log^{4}\left(\frac{d\ell\Delta_f}{\tilde{\epsilon}^2\delta}\right)  + \frac{\beta}{\alpha}\log \frac{\zeta}{\epsilon}\right)
\end{equation*}
\end{theorem}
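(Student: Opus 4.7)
The plan is to decompose the analysis of $\text{PGDli}$ into its two natural phases and combine them via the triangle inequality on iteration counts. Phase one is a direct invocation of Corollary~\ref{cor:main_strictsaddle}, and phase two is a purely local linear-convergence argument that exploits either strong convexity (\ref{as:sc}) or the weaker regularity condition (\ref{as:regularity}). The main task is phase two; phase one is essentially a bookkeeping step.

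For \textbf{phase one}, I would first note that $\text{PGDli}$ begins by calling $\text{PGD}(\x_0,\ell,\rho,\tilde{\epsilon},c,\delta,\Delta_f)$ with $\tilde{\epsilon}=\min(\theta,\gamma^2/\rho)$. Corollary~\ref{cor:main_strictsaddle} directly produces, with probability at least $1-\delta$, a point $\y_0$ that is $\zeta$-close to $\cXstar$, using the first term
\[
O\!\left(\frac{\ell(f(\x_0)-f^\star)}{\tilde{\epsilon}^2}\log^{4}\!\left(\frac{d\ell\Delta_f}{\tilde{\epsilon}^2\delta}\right)\right)
\]
of the claimed iteration count. Conditioning on this event, the remaining work is to show that from $\y_0$, the post-$\text{PGD}$ gradient-descent loop with step size $1/\beta$ delivers an $\epsilon$-close point in an additional $O((\beta/\alpha)\log(\zeta/\epsilon))$ iterations, while never leaving the $\zeta$-neighborhood in which \ref{as:sc}/\ref{as:regularity} applies.

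For \textbf{phase two under \ref{as:regularity}} (which subsumes \ref{as:sc}), the key computation is the standard one-step contraction, using $\y^\star_t := \projX(\y_t)$:
\begin{align*}
\norm{\y_{t+1}-\y^\star_t}^2
&= \norm{\y_t - \tfrac{1}{\beta}\grad f(\y_t) - \y^\star_t}^2 \\
&= \norm{\y_t-\y^\star_t}^2 - \tfrac{2}{\beta}\la \grad f(\y_t), \y_t-\y^\star_t\ra + \tfrac{1}{\beta^2}\norm{\grad f(\y_t)}^2\\
&\le \norm{\y_t-\y^\star_t}^2 - \tfrac{\alpha}{\beta}\norm{\y_t-\y^\star_t}^2 - \tfrac{1}{\beta^2}\norm{\grad f(\y_t)}^2 + \tfrac{1}{\beta^2}\norm{\grad f(\y_t)}^2\\
&= \left(1-\tfrac{\alpha}{\beta}\right)\norm{\y_t-\y^\star_t}^2,
\end{align*}
where the inequality is \eqref{eq:regularity}. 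Since $\norm{\y_{t+1}-\projX(\y_{t+1})}\le\norm{\y_{t+1}-\y^\star_t}$ (projection is the nearest point), distances to $\cXstar$ contract by a factor $\sqrt{1-\alpha/\beta}$ each step. This simultaneously proves two things: iterates stay inside the $\zeta$-neighborhood (so the assumption remains valid inductively), and $\norm{\y_T-\projX(\y_T)}\le\epsilon$ after $T=O((\beta/\alpha)\log(\zeta/\epsilon))$ steps. The strong-convex case \ref{as:sc} is covered because \ref{as:sc} implies \ref{as:regularity} with the same constants (standard co-coercivity), so the same bound applies.

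The \textbf{main obstacles} I anticipate are (i) verifying that \ref{as:sc} really implies \ref{as:regularity} with the stated parameters so that a single proof covers both cases, and (ii) handling the fact that $\cXstar$ may be a connected set under \ref{as:regularity}, which forces us to track distance to the moving projection $\projX(\y_t)$ rather than to a fixed minimizer; the calculation above is specifically designed to avoid needing a fixed target. A minor subtlety is that to apply \eqref{eq:regularity} at $\y_t$ we must know $\y_t$ lies in the $\zeta$-neighborhood, which is established by induction using the contraction itself, seeded by the phase-one guarantee $\norm{\y_0-\projX(\y_0)}\le\zeta$. Summing the two phase counts yields exactly the iteration complexity claimed in Theorem~\ref{thm:main_local}, and the only source of failure probability is phase one, giving the overall $1-\delta$ success probability.
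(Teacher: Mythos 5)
Your proposal is correct and follows essentially the same route as the paper: phase one is Corollary~\ref{cor:main_strictsaddle}, and phase two is the one-step contraction $\norm{\x_{t+1}-\projX(\x_{t+1})}^2 \le \norm{\x_{t+1}-\projX(\x_t)}^2 \le (1-\alpha/\beta)\norm{\x_t-\projX(\x_t)}^2$ under the regularity condition with $\eta=1/\beta$, which simultaneously keeps iterates in the $\zeta$-neighborhood by induction and yields the $O((\beta/\alpha)\log(\zeta/\epsilon))$ count. The only cosmetic difference is that the paper derives \ref{as:regularity} from \ref{as:sc} by combining the strong-convexity lower bound with the descent lemma (after noting that local strong convexity isolates the minima), whereas you invoke co-coercivity; both give the stated $(\alpha,\beta)$ constants.
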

\addtocounter{theorem}{-1}
\endgroup

\begin{proof} 
Theorem \ref{thm:main_local} runs $\text{PGDli}(\x_0, \ell, \rho, \tilde{\epsilon}, c, \delta, \Delta_f, \beta)$. According to algorithm \ref{algo:PGDli}, we know it calls $\text{PGD}(\x_0, \ell, \rho, \epsilon, c, \delta, \Delta_f)$ first (denote its output as $\hat{\x}$), then run standard gradient descent with learning rate $\frac{1}{\beta}$ starting from $\hat{\x}$. 

By Corollary \ref{cor:main_strictsaddle}, we know $\hat{\x}$ is already in the $\zeta$-neighborhood of $\cXstar$, where $\cXstar$ is the set of local minima. Therefore, to prove this theorem, we only need to show prove following two claims:
\begin{enumerate}
\item Suppose $\{\x_t\}$ is the sequence of gradient descent starting from $\x_0 = \hat{\x}$ with learning rate $\frac{1}{\beta}$, then $\x_t$ is always in the $\zeta$-neighborhood of $\cXstar$.
\item Local structure (assumption \ref{as:sc} or \ref{as:regularity}) allows iterates to converge to points $\epsilon$-close to $\cXstar$ within $O(\frac{\beta}{\alpha}\log \frac{\zeta}{\epsilon})$ iterations.
\end{enumerate}

We will focus on Assumption \ref{as:regularity} (as we will later see Assumption~\ref{as:sc} is a special case of Assumption~\ref{as:regularity}). Assume $\x_t$ is in $\zeta$-neighborhood of $\cXstar$, by gradient updates and the definition of projection, we have:
\begin{align*}
\norm{\x_{t+1} - \projX(\x_{t+1}) }^2
\le& \norm{\x_{t+1} - \projX(\x_{t}) }^2
= \norm{\x_{t} - \eta\grad f(\x_t) - \projX(\x_{t}) }^2 \\
=& \norm{\x_{t} - \projX(\x_{t}) }^2
- 2\eta \la \grad f(\x_t), \x_{t} - \projX(\x_{t})\ra
+ \eta^2 \norm{\grad f(\x_t)}^2 \\
\le & \norm{\x_{t} - \projX(\x_{t}) }^2 - \eta\alpha\norm{\x_t- \projX(\x_t)}^2 + (\eta^2 - \frac{\eta}{\beta}) \norm{\grad f(\x)}^2 \\
\le & (1-\frac{\alpha}{\beta})\norm{\x_{t} - \projX(\x_{t}) }^2
\end{align*}
The second last inequality is due to $(\alpha, \beta)$-regularity condition.
The last inequality is because of the choice $\eta = \frac{1}{\beta}$.

There are two consequences of this calculation. First, it shows $\norm{\x_{t+1} - \projX(\x_{t+1}) }^2 \le \norm{\x_{t} - \projX(\x_{t}) }^2$. As a result if $\x_t$ in $\zeta$-neighborhood of $\cXstar$, $\x_{t+1}$ is also in this $\zeta$-neighborhood. Since $\x_0$ is in the $\zeta$-neighborhood by Corollary \ref{cor:main_strictsaddle}, by induction we know all later iterations are in the same neighborhood.

Now, since we know all the points $\x_t$ are in the neighborhood, the equation also shows linear convergence rate $(1-\frac{\alpha}{\beta})$. The initial distance is bounded by $\norm{\x_{0} - \projX(\x_{0}) } \le \zeta$, therefore to converge to points $\epsilon$-close to $\cXstar$, we only need the following number of iterations:
\begin{equation*}
\frac{\log (\epsilon/\zeta)}{\log (1-\alpha/\beta)} = O(\frac{\beta}{\alpha} \log \frac{\zeta}{\epsilon}).
\end{equation*}
This finishes the proof under Assumption~\ref{as:regularity}.

Finally, we argue assumption \ref{as:sc} implies \ref{as:regularity}. First, notice that if a function is locally strongly convex, then its local minima are isolated: for any two points $\x,\x'\in \cXstar$, the local region $B_\x(\zeta)$ and $B_{\x'}(\zeta)$ must be disjoint (otherwise function $f(\x)$ is strongly convex in connected domain $B_\x(\zeta) \cup B_{\x'}(\zeta)$ but has two distinct local minima, which is impossible). Therefore, W.L.O.G, it suffices to consider one perticular disjoint region, with unique local minimum we denote as $\x^\star$, clearly, for all $\x \in B_{\x^\star}(\zeta)$ we have $\projX(\x) = \x^\star$.

Now by $\alpha$-strong convexity:
\begin{equation}\label{eq:sc_local}
f(\x^\star) \ge f(\x) + \la \grad f(\x), \x^\star -\x\ra + \frac{\alpha}{2}\norm{\x-\x^\star}^2
\end{equation}
On the other hand, for any $\x$ in this $\zeta$-neighborhood, we already proved $\x - \frac{1}{\beta} \grad f(\x)$ also in this $\zeta$-neighborhood. By $\beta$-smoothness, we also have:
\begin{equation}\label{eq:smooth_local}
f(\x - \frac{1}{\beta} \grad f(\x)) \le  f(\x) - \frac{1}{2\beta} \norm{\grad f(\x)}^2
\end{equation} 
Combining Eq.\eqref{eq:sc_local} and Eq.\eqref{eq:smooth_local}, and using the fact $f(\x^\star) \le f(\x - \frac{1}{\beta} \grad f(\x))$, we get:
\begin{equation*}
\la \grad f(\x), \x - \x^\star \ra  \ge \frac{\alpha}{2}\norm{\x-\x^\star}^2 + \frac{1}{2\beta} \norm{\grad f(\x)}^2
\end{equation*}
which finishes the proof.
\end{proof}

\section{Geometric Structures of Matrix Factorization Problem}

In this Section we investigate the global geometric structures of the matrix factorization problem. These properties are summarized in Lemmas~\ref{lem:mf_smooth} and \ref{lem:mf_strictsaddle}. Such structures allow us to apply our main Theorem and get fast convergence (as shown in Theorem~\ref{thm:mf_global}).

Note that our main results Theorems~\ref{thm:main} and~\ref{thm:main_local} are proved for functions $f(\cdot)$ whose input $\x$ is a vector. For the current function in~\ref{eq:obj}, though the input $\U \in \R^{d\times r}$  is a matrix, we can always vectorize it to be a vector in $\R^{dr}$ and apply our results. However, for simplicity of presentation, we still write everything in matrix form (without explicit vectorization), while the reader should keep in mind the operations are same if one vectorizes everything first. 

Recall for vectors we use $\norm{\cdot}$ to denote the 2-norm, and for matrices we use $\norm{\cdot}$ and $\fnorm{\cdot}$ to denote spectral norm, and Frobenius norm respectively. Furthermore, we always use $\sigma_i(\cdot)$ to denote the $i$-th largest singular value of the matrix. 

We first show how the geometric properties (Lemma \ref{lem:mf_smooth} and Lemma \ref{lem:mf_strictsaddle}) imply a fast convergence (Theorem \ref{thm:mf_global}).

\begingroup
\def\thetheorem{\ref{thm:mf_global}}
\begin{theorem}
There exists an absolute constant $c_{\max}$ such that the following holds. For matrix factorization~\eqref{eq:obj}, for any $\delta >0$ and constant $c\le c_{\max}$, let $\Gamma^{1/2} \defeq 2\max\{\norm{\U_0}, 3(\sigstarl)^{1/2}\}$,
suppose we run $\text{PGDli}(\U_0, 8\Gamma, 12\Gamma^{1/2}, \frac{(\sigstarr)^{2}}{108\Gamma^{1/2}}, c, \delta, \frac{r\Gamma^2}{2}, 10\sigstarl)$, then:
\begin{enumerate}
\item With probability 1, the iterates satisfy $\norm{\U_t} \le \Gamma^{1/2}$ for every $t\ge 0$.
\item With probability $1-\delta$, the output will be $\epsilon$-close to global minima set $\cXstar$ in following iterations:
\begin{equation*}
O\left(r\left(\frac{ \Gamma}{\sigstarr}\right)^4\log^{4}\left(\frac{d \Gamma}{\delta\sigstarr}\right)  + \frac{\sigstarl}{\sigstarr}\log \frac{\sigstarr}{\epsilon} \right)
\end{equation*}
\end{enumerate}
\end{theorem}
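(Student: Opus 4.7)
The plan is to reduce Theorem \ref{thm:mf_global} to Theorem \ref{thm:main_local}. Two tasks are nontrivial: (i) proving the deterministic spectral bound $\norm{\U_t}\le \Gamma^{1/2}$ for every iterate produced by $\text{PGDli}$, so that the local smoothness and Hessian-Lipschitz constants of Lemma \ref{lem:mf_smooth} actually apply along the whole trajectory; and (ii) verifying that the stated parameter choices plug correctly into Theorem \ref{thm:main_local} to yield the claimed iteration count.

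For claim (1), I would induct on $t$, noting $\norm{\U_0}\le \Gamma^{1/2}/2$ by construction. For a gradient step write $\U_{t+1}=(I-2\eta E_t)\U_t$ with $E_t\defeq \U_t\U_t\trans-\M^\star$ symmetric, then bound $\norm{\U_{t+1}}^2=\lambda_{\max}(\U_{t+1}\U_{t+1}\trans)=w\trans\U_t\U_t\trans w$ for $w=(I-2\eta E_t)v$ and $v$ the top eigenvector of $\U_{t+1}\U_{t+1}\trans$. Expand $\norm{w}^2=1-4\eta\, v\trans E_t v+4\eta^2\norm{E_t v}^2$ and split into the contracting contribution $-4\eta\, v\trans\U_t\U_t\trans v$ (nonpositive) and the $\M^\star$ correction $+4\eta\, v\trans\M^\star v\le 4\eta\sigstarl$. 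Working in the SVD of $\U_t$ to extract the dominant term $-2\eta\sigma_{\max}^2$ in the direction where $\U_t\U_t\trans$ is active, and using $\eta=c/(8\Gamma)$ and $\Gamma\ge 36\sigstarl$, one checks that whenever $\norm{\U_t}^2$ is close to $\Gamma$ the net update is contracting, so $\norm{\U_{t+1}}\le \Gamma^{1/2}$. For the perturbation step, the radius $r=O(\tilde\epsilon/\ell)=O((\sigstarr)^2/\Gamma^{3/2})$ in Algorithm \ref{algo:PGD} is orders of magnitude below $\Gamma^{1/2}$, so $\norm{\U_t+\xi_t}\le \Gamma^{1/2}$ trivially.

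For claim (2), having established the spectral bound, Lemma \ref{lem:mf_smooth} gives $\ell=8\Gamma$ and $\rho=12\Gamma^{1/2}$ everywhere along the trajectory, and Lemma \ref{lem:mf_strictsaddle} supplies strict-saddle parameters $(\theta,\gamma,\zeta)=\bigl((\sigstarr)^{3/2}/24,\ \sigstarr/3,\ (\sigstarr)^{1/2}/3\bigr)$ and regularity $(\alpha,\beta)=(2\sigstarr/3,\,10\sigstarl)$. Then $\tilde\epsilon=\min(\theta,\gamma^2/\rho)$: since $\Gamma^{1/2}\ge 3(\sigstarr)^{1/2}$ we have $\gamma^2/\rho=(\sigstarr)^2/(108\Gamma^{1/2})\le (\sigstarr)^{3/2}/324<\theta$, matching the choice in the theorem. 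For $\Delta_f$, I would bound $f(\U_0)\le \tfrac12\fnorm{\U_0\U_0\trans-\M^\star}^2\le \tfrac{r}{2}(\norm{\U_0}^2+\sigstarl)^2$ and, using $\Gamma\ge 4\max\{\norm{\U_0}^2,9\sigstarl\}$, verify $f(\U_0)\le r\Gamma^2/2$, so $\Delta_f=r\Gamma^2/2$ is admissible. Substituting into Theorem \ref{thm:main_local}, the first term is
\begin{equation*}
\frac{\ell\Delta_f}{\tilde\epsilon^{2}}=\frac{8\Gamma\cdot r\Gamma^{2}/2}{(\sigstarr)^{4}/(108^{2}\Gamma)}=\Theta\!\left(r\Bigl(\tfrac{\Gamma}{\sigstarr}\Bigr)^{4}\right),
\end{equation*}
with the log factor $\log^4(d\ell\Delta_f/(\tilde\epsilon^{2}\delta))=O\!\bigl(\log^4(d\Gamma/(\delta\sigstarr))\bigr)$, and the second term $(\beta/\alpha)\log(\zeta/\epsilon)=O\!\bigl((\sigstarl/\sigstarr)\log(\sigstarr/\epsilon)\bigr)$; summing yields the stated bound.

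The main obstacle is the boundedness step. The crude submultiplicative bound $\norm{(I-2\eta E_t)\U_t}\le(1+2\eta\sigstarl)\norm{\U_t}$ gives exponential growth and is useless; one really has to use the cancellation between the negative curvature of $-2\eta\U_t\U_t\trans$ along $\U_t$'s dominant direction and the positive kick $+2\eta\M^\star$, which only appears once one expands $v\trans\U_{t+1}\U_{t+1}\trans v$ carefully in SVD coordinates of $\U_t$. A secondary technical wrinkle is that Theorem \ref{thm:main_local} is stated for globally smooth functions, so I must verify that its proof in the appendix goes through verbatim provided the trajectory remains in the region where Lemma \ref{lem:mf_smooth} guarantees the local constants---which is exactly what claim (1) supplies.
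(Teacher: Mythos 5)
Your overall reduction is the paper's: establish the deterministic bound $\norm{\U_t}\le\Gamma^{1/2}$ by induction, then feed the constants from Lemmas \ref{lem:mf_smooth} and \ref{lem:mf_strictsaddle} into Theorem \ref{thm:main_local}. Part (2) of your argument (the choice of $\tilde\epsilon=\gamma^2/\rho=(\sigstarr)^2/(108\Gamma^{1/2})$, the bound $f(\U_0)\le 2r\norm{\U_0\U_0\trans-\M^\star}^2\le r\Gamma^2/2$ via the rank-$2r$ structure, and the arithmetic giving $\Theta(r(\Gamma/\sigstarr)^4)$) matches the paper and is fine. For the gradient step in part (1), the paper takes a slightly cleaner route than your eigenvector expansion: it bounds $\norm{\U_{t+1}}\le\norm{\U_t-\eta\U_t\U_t\trans\U_t}+\eta\norm{\M^\star\U_t}$ and uses monotonicity of $t\mapsto t-\eta t^3$ on $[0,1/\sqrt{3\eta}]$ to get the scalar recursion $\norm{\U_{t+1}}\le\norm{\U_t}-\eta(\norm{\U_t}^2-\sigstarl)\norm{\U_t}$; your SVD-coordinate expansion captures the same cancellation and would also work, though you leave the "one checks it is contracting" step unverified.

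The genuine gap is the perturbation step. You claim that because the perturbation radius is orders of magnitude below $\Gamma^{1/2}$, the bound $\norm{\tilde\U_t+\xi_t}\le\Gamma^{1/2}$ is "trivial." It is not: under the invariant $\norm{\U_t}\le\Gamma^{1/2}$ alone, a perturbation added at a point with norm exactly $\Gamma^{1/2}$ breaks the invariant, however small $\norm{\xi_t}$ is. The induction as you state it does not close. The paper repairs this with a two-tier invariant: it additionally maintains that every \emph{pre-perturbation} iterate satisfies $\norm{\tilde\U_\tau}\le\frac12\Gamma^{1/2}$. Proving that second invariant requires an argument you do not supply: after a perturbation pushes the norm into $(\frac12\Gamma^{1/2},\Gamma^{1/2}]$, the recursion above shows the norm decreases by at least $\frac{c}{72}\Gamma^{1/2}$ per step, so it returns below $\frac12\Gamma^{1/2}$ within $36/c$ steps, which is strictly less than the $t_{\text{thres}}\ge 48/c$ steps the algorithm waits before the next perturbation; meanwhile states with norm $\le\frac12\Gamma^{1/2}$ stay there under gradient steps. (An alternative fix, which you also do not make, is to observe that perturbations are only added when $\norm{\grad f(\U_t)}\le g_{\text{thres}}$, and $\norm{\grad f(\U)}\ge 2\norm{\U}(\norm{\U}^2-\sigstarl)$ forces any such point to have norm well below $\frac12\Gamma^{1/2}$.) Without one of these arguments the boundedness claim, and hence the applicability of the smoothness and Hessian-Lipschitz constants along the trajectory, is not established.
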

\addtocounter{theorem}{-1}
\endgroup

\begin{proof}[Proof of Theorem \ref{thm:mf_global}]
Denote $\tilde{c}_{\max}$ to be the absolute constant allowed in Theorem \ref{thm:main_local}. In this theorem, we let $c_{\max} = \min\{\tilde{c}_{\max}, \frac{1}{2}\}$, and choose any constant $c \le c_{\max}$.

Theorem~\ref{thm:mf_global} consists of two parts. In part 1 we show that the iterations never bring the matrix to a very large norm, while in part 2 we apply our main Theorem to get fast convergence. We will first prove the bound on number of iterations assuming the bound on the norm. We will then proceed to prove part $1$.

~

\noindent \textbf{Part 2:} Assume part $1$ of the theorem is true i.e., with probability $1$, the iterates satisfy $\norm{\U_t} \le \Gamma^{1/2}$ for every $t\ge 0$. In this case, although we are doing unconstrained optimization, we can still use the geometric properties that hold inside this region.
. 

By Lemma~\ref{lem:mf_smooth} and~\ref{lem:mf_strictsaddle}, we know objective function Eq.\eqref{eq:obj} is 
$8\Gamma$-smooth, $12\Gamma^{1/2}$-Lipschitz Hessian, $(\frac{1}{24}(\sigstarr)^{3/2}, \frac{1}{3}\sigstarr, \frac{1}{3}(\sigstarr)^{1/2})$-strict saddle, and holds $(\frac{2}{3}\sigstarr, 10\sigstarl)$-regularity condition in $\frac{1}{3}(\sigstarr)^{1/2}$ neighborhood of local minima (also global minima) $\cXstar$. Furthermore, note $f^\star = 0$ and recall $\Gamma^{1/2} = 2\max\{\norm{\U_0}, 3(\sigstarl)^{1/2}\}$, then, we have:
\begin{align*}
f(\U_0) - f^\star = \fnorm{\U_0\U_0\trans - \M^\star}^2 
\le 2r\norm{\U_0\U_0\trans - \M^\star}^2\le \frac{r \Gamma^2}{2}.
\end{align*}
Thus, we can choose $\Delta_f = \frac{r \Gamma^2}{2}$. Substituting the corresponding parameters from Theorem~\ref{thm:main_local}, we know by running 
$\text{PGDli}(\U_0, 8\Gamma, 12\Gamma^{1/2}, \frac{(\sigstarr)^{2}}{108\Gamma^{1/2}}, c, \delta, \frac{r\Gamma^2}{2}, 10\sigstarl)$, with probability $1-\delta$, the output will be $\epsilon$-close to global minima set $\cXstar$ in iterations:
\begin{equation*}
O\left(r\left(\frac{ \Gamma}{\sigstarr}\right)^4\log^{4}\left(\frac{d \Gamma}{\delta\sigstarr}\right)  + \frac{\sigstarl}{\sigstarr}\log \frac{\sigstarr}{\epsilon} \right).
\end{equation*}

~

\noindent\textbf{Part 1:} We will now show part $1$ of the theorem. Recall PGDli (Algorithm \ref{algo:PGDli}) runs PGD (Algorithm \ref{algo:PGD}) first, and then runs gradient descent within $\frac{1}{3}(\sigstarr)^{1/2}$ neighborhood of $\cXstar$. It is easy to verify that $\frac{1}{3}(\sigstarr)^{1/2}$ neighborhood of $\cXstar$ is a subset of $\{\U | \norm{\U}^2 \le \Gamma\}$. Therefore, we only need to show that first phase PGD will not leave the region.
Specifically, we now use induction to prove the following for PGD:
\begin{enumerate}
\item Suppose at iteration $\tau$ we add perturbation, and denote $\tilde{\U}_{\tau}$ to be the iterate before adding perturbation (i.e., $\U_\tau =\tilde{\U}_\tau + \xi_\tau$, and $\tilde{\U}_\tau = \U_{\tau-1} - \eta \grad f(\U_{\tau-1})$). Then, $\norm{\tilde{\U}_{\tau}} \le \frac{1}{2}\Gamma$, and
\item $\norm{\U_t} \le \Gamma$ for all $t\ge 0$. 
\end{enumerate}

By choice of parameters of Algorithm \ref{algo:PGD}, we know $\eta = \frac{c}{ 8\Gamma}$. First, consider gradient descent step without perturbations:
\begin{align*}
\norm{\U_{t+1}} = &\norm{\U_t - \eta \grad f(\U_t)} 
 = \norm{\U_t - \eta (\U_t\U_t\trans - \M^\star)\U_t} \\
 \le & \norm{\U_t - \eta \U_t\U_t\trans \U_t} + \eta\norm{\M^\star\U_t} \\
 \le & \max_i [\sigma_i(\U_t) - \eta \sigma^3_i(\U_t)] + \eta\norm{\M^\star\U_t} 
\end{align*}
For the first term, we know function $f(t) = t - \eta t^3$ is monotonically increasing in $[0, 1/\sqrt{3\eta}]$. On the other hand, by induction assumption, we also know $\norm{\U_t} \le \Gamma^{1/2} \le \sqrt{8\Gamma/(3c)} = 1/\sqrt{3\eta}$. Therefore, the max is taken when $i=1$:
\begin{align}
\norm{\U_{t+1}} \le & \norm{\U_t} - \eta \norm{\U_t}^3 + \eta\norm{\M^\star\U_t}  \nonumber \\
\le& \norm{\U_t} - \eta (\norm{\U_t}^2 - \sigstarl)\norm{\U_t}. \label{eqn:norm-update}
\end{align}
We seperate our discussion into following cases.

\textbf{Case $\norm{\U_t} > \frac{1}{2}\Gamma^{1/2}$:}
In this case $ \norm{\U_t} \ge \max\{\norm{\U_0}, 3(\sigstarl)^{1/2}\}$. Recall $\Gamma^{1/2} = 2\max\{\norm{\U_0}, 3(\sigstarl)^{1/2}\}$. Clearly, $\Gamma \ge 36\sigstarl$, we know:
\begin{align*}
\norm{\U_{t+1}} 
\le& \norm{\U_t} - \eta (\norm{\U_t}^2 - \sigstarl)\norm{\U_t} 
\le \norm{\U_t} - \frac{c}{8\Gamma} (\frac{1}{4}\Gamma - \sigstarl)\frac{1}{2}\Gamma^{1/2}  \\
\le & \norm{\U_t} - \frac{c}{8\Gamma}(\frac{1}{4}\Gamma - \frac{1}{36}\Gamma)  \frac{1}{2}\Gamma^{1/2} 
= \norm{\U_t} - \frac{c}{72}\Gamma^{1/2}.
\end{align*}
This means that in each iteration, the spectral norm would decrease by at least $\frac{c}{72}\Gamma^{1/2}$.

\textbf{Case $\norm{\U_t} \le \frac{1}{2}\Gamma^{1/2}$:}
From~\eqref{eqn:norm-update}, we know that as long as $\norm{\U_t}^2 \ge \norm{\M^\star}$, we will always have $\norm{\U_{t+1}} \le \norm{\U_t} \le \frac{1}{2}\Gamma^{1/2}$. 
For $\norm{\U_t}^2 \le \norm{\M^\star}$, we have:
\begin{align*}
\norm{\U_{t+1}}  \le& \norm{\U_t} - \eta (\norm{\U_t}^2 - \sigstarl)\norm{\U_t} 
= \norm{\U_t} + \frac{c}{8\Gamma} (\sigstarl - \norm{\U_t}^2)\norm{\U_t} \\
\le &\norm{\U_t} + ((\sigstarl)^{1/2} - \norm{\U_t}) \times \frac{c}{8\Gamma} ((\sigstarl)^{1/2} + \norm{\U_t})\norm{\U_t} \\
\le &\norm{\U_t} + ((\sigstarl)^{1/2} - \norm{\U_t}) \times \frac{c\sigstarl}{4\Gamma} \le (\sigstarl)^{1/2}
\end{align*}
Thus, in this case, we always have  $\norm{\U_{t+1}} \le \frac{1}{2}\Gamma^{1/2}$.

In conclusion, if we don't add perturbation in iteration $t$, we have:
\begin{itemize}
\item If $\norm{\U_t} > \frac{1}{2}\Gamma^{1/2}$, then $\norm{\U_{t+1}} \le \norm{\U_t} - \frac{c}{72}\Gamma^{1/2} $.
\item If $\norm{\U_t} \le \frac{1}{2}\Gamma^{1/2}$, then $\norm{\U_{t+1}} \le \frac{1}{2}\Gamma^{1/2} $.
\end{itemize}

Now consider the iterations where we add perturbation. By the choice of radius of perturbation in Algorithm~\ref{algo:PGD} , we increase spectral norm by at most :
\begin{align*}
\norm{\xi_t} \le \fnorm{\xi_t} \le \frac{\sqrt{c}}{\chi^2}\frac{(\sigstarr)^{2}}{108\Gamma^{1/2} \cdot 8\Gamma}\le \frac{1}{2}\Gamma^{1/2}
\end{align*}
The first inequality is because $\chi \ge 1$ and $c\le 1$. 
That is, if before perturbation we have $\norm{\tilde{\U}_t} \le \frac{1}{2}\Gamma^{1/2}$, then $\norm{\U_t} = \norm{\tilde{\U}_t + \xi_t} \le \Gamma^{1/2}$.

On the other hand, according to Algorithm \ref{algo:PGD}, once we add perturbation, we will not add perturbation for next 
$t_{\text{thres}} = \frac{\chi \cdot 24\Gamma}{c^2\sigstarr} \ge \frac{24}{c^2}\ge\frac{48}{c}$ iterations. 
Let $T = \min\{\inf_{i}\{\U_{t+i}| \norm{\U_{t+i}} \le \frac{1}{2}\Gamma^{1/2}\}, t_{\text{thres}}\}$:
\begin{align*}
\norm{\U_{t+T-1}} \le \norm{\U_t} - \frac{c}{72}\Gamma^{1/2} (T-1) 
\le \Gamma^{1/2} (1 - \frac{c(T-1)}{72})
\end{align*}
This gives $T\le \frac{36}{c} < \frac{48}{c} \le t_{\text{thres}}$. Let $\tau>t$ be the next time when we add perturbation ($\tau \ge t + t_{\text{thres}}$), we immediately know $\norm{\U_{T+i}} \le \frac{1}{2}\Gamma^{1/2}$ for $0\le i < \tau-T$ and $\norm{\tilde{\U}_{\tau}} \le \frac{1}{2}\Gamma^{1/2}$.

Finally, $\norm{\U_0} \le \frac{1}{2}\Gamma^{1/2}$ by definition of $\Gamma$, so the initial condition holds. This finishes induction and the proof of the theorem.
\end{proof}

In the next subsections we prove the geometric structures.

\subsection{Smoothness and Hessian Lipschitz}

Before we start proofs of lemmas, we first state some properties about gradient and Hessians. The gradient of the objective function $f(\U)$ is
\begin{equation*}
\grad f(\U) = 2(\U\U\trans - \M^\star)\U.
\end{equation*}

Furthermore, we have the gradient and Hessian satisfy for any $\mZ \in \R^{d \times r}$:
\begin{align}
\la \grad f(\U), \mZ\ra &= 2 \la(\U\U\trans - \M^\star)\U, \mZ\ra, \mbox{ and } \label{eq:gradient} \\
\hess f(\U)(\mZ, \mZ) & = \fnorm{\U\mZ\trans + \mZ\U\trans}^2 + 2\la \U\U\trans - \M^\star, \mZ\mZ\trans \ra. \label{eq:Hessian}
\end{align}

\begingroup
\def\thetheorem{\ref{lem:mf_smooth}}
\begin{lemma}
For any $\Gamma \ge \sigstarl$, inside the region $\{\U| \norm{\U}^2 < \Gamma \}$, $f(\U)$ defined in Eq.\eqref{eq:obj} is $8\Gamma$-smooth and $12\Gamma^{1/2}$-Hessian Lipschitz.
\end{lemma}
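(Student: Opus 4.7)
The plan is to bound everything through the Hessian quadratic form in Eq.~\eqref{eq:Hessian}, since a bound of the form $|\hess f(\U)(\mZ,\mZ)| \le L \fnorm{\mZ}^2$ for every $\mZ$ is equivalent to $L$-smoothness (after vectorization), and similarly for the Lipschitz difference.

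\textbf{Smoothness.} I would take $\mZ \in \R^{d\times r}$ arbitrary and bound the two terms of $\hess f(\U)(\mZ,\mZ) = \fnorm{\U\mZ\trans+\mZ\U\trans}^2 + 2\la \U\U\trans-\M^\star, \mZ\mZ\trans\ra$ separately. By the triangle inequality and the submultiplicative relation $\fnorm{AB} \le \norm{A}\fnorm{B}$, we have $\fnorm{\U\mZ\trans+\mZ\U\trans} \le 2\norm{\U}\fnorm{\mZ}$, giving at most $4\Gamma\fnorm{\mZ}^2$ for the first term. For the second, I would use the PSD inequality $|\la A,B\ra| \le \norm{A}\cdot\tr(B)$ for symmetric $A$ and PSD $B$, together with $\norm{\U\U\trans-\M^\star}\le \norm{\U}^2+\sigstarl \le 2\Gamma$ (using the hypothesis $\Gamma \ge \sigstarl$), to get at most $2\cdot 2\Gamma\cdot\fnorm{\mZ}^2 = 4\Gamma\fnorm{\mZ}^2$. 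Adding gives $8\Gamma\fnorm{\mZ}^2$.

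\textbf{Hessian Lipschitz.} Let $\Delta = \U_1 - \U_2$. I would compute the difference $\hess f(\U_1)(\mZ,\mZ) - \hess f(\U_2)(\mZ,\mZ)$ and again handle two pieces. For the quadratic piece, set $A_i = \U_i\mZ\trans+\mZ\U_i\trans$ and use $\fnorm{A_1}^2-\fnorm{A_2}^2 = \la A_1-A_2, A_1+A_2\ra \le \fnorm{A_1-A_2}\,\fnorm{A_1+A_2}$. The first factor is bounded by $2\norm{\Delta}\fnorm{\mZ}$ and the second by $4\Gamma^{1/2}\fnorm{\mZ}$, yielding $8\Gamma^{1/2}\norm{\Delta}\fnorm{\mZ}^2$. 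For the inner product piece, I would telescope $\U_1\U_1\trans-\U_2\U_2\trans = \U_1\Delta\trans + \Delta\U_2\trans$ so that $\norm{\U_1\U_1\trans-\U_2\U_2\trans} \le 2\Gamma^{1/2}\norm{\Delta}$, and then reuse the PSD trace inequality to bound the piece by $4\Gamma^{1/2}\norm{\Delta}\fnorm{\mZ}^2$. Summing and replacing $\norm{\Delta}$ by $\fnorm{\Delta}$ (the correct distance after vectorization) gives the claimed constant $12\Gamma^{1/2}$.

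There is no substantive obstacle here: the main thing to be careful about is (i) using the right norm inequality for the mixed inner product $\la\U\U\trans-\M^\star,\mZ\mZ\trans\ra$ --- a naive Cauchy--Schwarz with $\fnorm{\mZ\mZ\trans}$ would lose a factor but still close; the sharper $\norm{A}\cdot\tr(B)$ bound is what makes the constants come out cleanly --- and (ii) remembering that the ``distance'' for Hessian Lipschitz after vectorization is Frobenius, not spectral, while along the way it is convenient (and tighter) to bound by spectral norm of $\Delta$ before inflating to Frobenius. Both lemmas then follow by dividing through by $\fnorm{\mZ}^2$ and taking the supremum over $\mZ$.
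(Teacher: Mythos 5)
Your proposal is correct, and the Hessian--Lipschitz half is essentially identical to the paper's argument (same splitting into the quadratic term $\fnorm{\U\mZ\trans+\mZ\U\trans}^2$ and the cross term, same telescoping $\U_1\U_1\trans-\U_2\U_2\trans=\U_1\Delta\trans+\Delta\U_2\trans$, same constants $8\Gamma^{1/2}+4\Gamma^{1/2}$). The smoothness half takes a genuinely different route: the paper never touches the Hessian there, but instead bounds the gradient difference directly, writing $\grad f(\U)-\grad f(\V)=2[(\U\U\trans\U-\V\V\trans\V)-\M^\star(\U-\V)]$ and using the decomposition $\U\U\trans\U - \V\V\trans\V = \U\U\trans(\U-\V)+\U(\U-\V)\trans\V+(\U-\V)\V\trans\V$ to get $2(3\Gamma+\sigstarl)\le 8\Gamma$; you instead bound the Hessian quadratic form by $4\Gamma\fnorm{\mZ}^2+4\Gamma\fnorm{\mZ}^2$ and invoke the second-order characterization of smoothness. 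Both land on exactly $8\Gamma$. Your route is more uniform (the same quadratic-form machinery handles both claims) but requires one extra observation you should make explicit: the region $\{\U:\norm{\U}^2<\Gamma\}$ is a spectral-norm ball and hence convex, so the line segment between any two points stays in the region and the mean-value argument converting a Hessian operator-norm bound into a gradient Lipschitz bound is legitimate. The paper's route avoids any appeal to convexity or twice-differentiability for the smoothness claim, at the cost of a separate algebraic decomposition. All of your individual inequalities check out, including the sharper bound $|\la A,\mZ\mZ\trans\ra|\le\norm{A}\tr(\mZ\mZ\trans)$ for symmetric $A$ and the final inflation $\norm{\Delta}\le\fnorm{\Delta}$.
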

\addtocounter{theorem}{-1}
\endgroup

\begin{proof}

Denote $\mathcal{D} = \{\U| \norm{\U}^2 < \Gamma \}$, and recall $\Gamma \ge \sigstarl$. 

~

\noindent \textbf{Smoothness}: For any $\U, \V \in \mathcal{D}$, we have:
\begin{align*}
\fnorm{\grad f(\U) - \grad f(\V)}
=& 2\fnorm{(\U\U\trans - \M^\star)\U - (\V\V\trans - \M^\star)\V} \\
\le& 2\left[\fnorm{\U\U\trans\U - \V\V\trans\V} + \fnorm{\M^\star(\U - \V)}\right]\\
\le& 2\left[3\cdot \Gamma \fnorm{\U-\V} + \sigstarl \fnorm{\U-\V}\right]
\le 8\Gamma\cdot \fnorm{\U-\V}.
\end{align*}

The last line is due to following decomposition and triangle inequality:
\begin{equation*}
\U\U\trans\U - \V\V\trans\V = \U\U\trans(\U - \V) + \U(\U-\V)\trans\V + (\U-\V)\V\trans\V.
\end{equation*}

~

\noindent \textbf{Hessian Lipschitz}: For any $\U, \V \in \mathcal{D}$, and any $\mZ \in \R^{d\times r}$, according to Eq.\eqref{eq:Hessian}, we have:
\begin{align*}
|\hess f(\U)(\mZ, \mZ)  - \hess f(\V)(\mZ, \mZ)|
=& \underbrace{\fnorm{\U\mZ\trans + \mZ\U\trans}^2 - \fnorm{\V\mZ\trans + \mZ\V\trans}^2}_{\mathfrak{A}}+ 
\underbrace{2\la \U\U\trans - \V\V\trans, \mZ\mZ\trans \ra }_{\mathfrak{B}}.
\end{align*}
For term $\mathfrak{A}$, we have:
\begin{align*}
\mathfrak{A}
= &\la\U\mZ\trans + \mZ\U\trans, (\U-\V)\mZ\trans + \mZ(\U - \V)\trans \ra
+ \la(\U-\V)\mZ\trans + \mZ(\U - \V)\trans ,  \V\mZ\trans + \mZ\V\trans\ra \\
\le & \fnorm{\U\mZ\trans + \mZ\U\trans}\fnorm{(\U-\V)\mZ\trans + \mZ(\U - \V)\trans}
+ \fnorm{(\U-\V)\mZ\trans + \mZ(\U - \V)\trans}\fnorm{ \V\mZ\trans + \mZ\V\trans} \\
\le & 4\norm{\U}\fnorm{\mZ}^2\fnorm{\U - \V} + 4\norm{\V}\fnorm{\mZ}^2\fnorm{\U - \V}
\le 8\Gamma^{1/2} \fnorm{\mZ}^2\fnorm{\U - \V}.
\end{align*}
For term $\mathfrak{B}$, we have:
\begin{align*}
\mathfrak{B} 
\le 2\fnorm{\U\U\trans - \V\V\trans}\fnorm{\mZ\mZ\trans } \le 4\Gamma^{1/2} \fnorm{\mZ}^2\fnorm{\U - \V}.
\end{align*}
The inequality is due to following decomposition and triangle inequality:
\begin{equation*}
\U\U\trans - \V\V\trans = \U(\U-\V)\trans + (\U-\V)\V\trans.
\end{equation*}
Therefore, in sum we have:
\begin{align*}
\max_{\mZ : \fnorm{\mZ}\le 1} |\hess f(\U)(\mZ, \mZ)  - \hess f(\V)(\mZ, \mZ)| \le& \max_{\mZ : \fnorm{\mZ}\le 1}12 \Gamma^{1/2} \fnorm{\mZ}^2\fnorm{\U - \V} \\
\le & 12 \Gamma^{1/2} \fnorm{\U - \V}.
\end{align*}
\end{proof}

\subsection{Strict-Saddle Property and Local Regularity}

Recall the gradient and Hessian of objective function is calculated as in Eq.\eqref{eq:gradient} and Eq.\eqref{eq:Hessian}. We first prove an elementary inequality regarding to the trace of product of two symmetric PSD matrices. This lemma will be frequently used in the proof.

\begin{lemma}\label{lem:PSD_trace}
For $\A, \B \in \R^{d\times d}$ both symmetric PSD matrices, we have:
\begin{equation*}
\sigma_{\min}(\A) \tr(\B) \le \tr(\A\B)\le \norm{\A} \tr(\B)
\end{equation*}
\end{lemma}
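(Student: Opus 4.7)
My plan is to reduce the claim to column-wise Rayleigh quotient bounds on $\A$. Since $\B$ is symmetric PSD, it admits a symmetric PSD square root $\B^{1/2}$, so we may write $\B = \B^{1/2}\B^{1/2}$ and use the cyclic property of the trace to obtain
\begin{equation*}
\tr(\A\B) \;=\; \tr(\B^{1/2} \A \B^{1/2}).
\end{equation*}
This symmetrized form is the key, because each diagonal entry of $\B^{1/2}\A\B^{1/2}$ is a quadratic form $\c_i^\top \A \c_i$ where $\c_i$ is the $i$-th column of $\B^{1/2}$, and such quadratic forms are exactly what the min/max eigenvalues of a symmetric $\A$ control.

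Next I would apply the Courant--Fischer (Rayleigh quotient) bounds to each diagonal term: since $\A$ is symmetric with eigenvalues in $[\sigma_{\min}(\A), \norm{\A}]$,
\begin{equation*}
\sigma_{\min}(\A)\,\norm{\c_i}^2 \;\le\; \c_i^\top \A \c_i \;\le\; \norm{\A}\,\norm{\c_i}^2.
\end{equation*}
Summing over $i=1,\ldots,d$ and recognizing that $\sum_i \norm{\c_i}^2 = \fnorm{\B^{1/2}}^2 = \tr(\B^{1/2}\B^{1/2}) = \tr(\B)$, while $\sum_i \c_i^\top \A \c_i = \tr(\B^{1/2}\A\B^{1/2}) = \tr(\A\B)$, gives the two desired inequalities simultaneously.

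No step looks like a real obstacle: the only thing to be slightly careful about is that the Rayleigh bounds require $\A$ to be symmetric (which is given), and that $\sigma_{\min}(\A) = \lambda_{\min}(\A) \ge 0$ and $\norm{\A} = \lambda_{\max}(\A)$ hold because $\A$ is symmetric PSD. As a sanity check / alternative path, one could instead diagonalize $\A = \mat{Q}\mat{\Lambda}\mat{Q}^\top$ and write $\tr(\A\B) = \tr(\mat{\Lambda}\,\mat{Q}^\top\B\mat{Q}) = \sum_i \lambda_i(\A)\,(\mat{Q}^\top\B\mat{Q})_{ii}$; since $\mat{Q}^\top\B\mat{Q}$ is again PSD its diagonal entries are nonnegative and sum to $\tr(\B)$, so the result follows by bounding $\lambda_i(\A)\in[\sigma_{\min}(\A),\norm{\A}]$ term-by-term. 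Either route is essentially two lines, and the PSD square-root version is slightly cleaner to write.
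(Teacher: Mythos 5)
Your proof is correct. Your primary route (symmetrizing via the PSD square root $\B^{1/2}$ and bounding each quadratic form $\c_i\trans \A \c_i$ by the Rayleigh-quotient bounds) is a genuinely different, though closely related, decomposition from the one in the paper: the paper instead eigendecomposes $\A = \V\D\V\trans$, writes $\tr(\A\B) = \sum_i \D_{ii}(\V\trans\B\V)_{ii}$, observes that the diagonal of the PSD matrix $\V\trans\B\V$ is nonnegative and sums to $\tr(\B)$, and bounds $\D_{ii}$ term by term --- which is exactly the ``alternative path'' you sketch at the end. The two arguments are dual in the sense that yours exploits the PSD structure of $\B$ (via its square root) and uses only that $\A$ is symmetric with eigenvalues in $[\sigma_{\min}(\A), \norm{\A}]$, whereas the paper's exploits the spectral decomposition of $\A$ and uses only that $\B$ is PSD; both are two-line proofs of the same strength, and either suffices here since both matrices are assumed symmetric PSD. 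One small point worth making explicit in your write-up is that $\sum_i \c_i\trans\A\c_i = \tr(\B^{1/2}\A\B^{1/2})$ requires reading off the diagonal of $\B^{1/2}\A\B^{1/2}$ with $\c_i$ the $i$-th column of $\B^{1/2}$ (using symmetry of $\B^{1/2}$); you state this correctly, so there is no gap.
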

\begin{proof}
Let $\A = \V\D\V\trans$ be the eigendecomposition of $\A$, where $\D$ is diagonal matrix, and $\V$ is orthogonal matrix. Then we have:
\begin{equation*}
\tr(\A\B) = \tr(\D\V\trans\B\V)
=\sum_{i=1}^d \D_{ii} (\V\trans\B\V)_{ii}.
\end{equation*}
Since $\B$ is PSD, we know $\V\trans\B\V$ is also PSD, thus the diagonal entries are non-negative. That is,
$(\V\trans\B\V)_{ii} \ge 0$ for all $i=1, \ldots, d$.
Finally, the lemma follows from the fact that $\sigma_{\min}(\A)\le \D_{ii} \le  \norm{\A}$ and $\tr(\V\trans\B\V) = \tr(\B\V\V\trans) = \tr(\B)$.
\end{proof}

Now, we are ready to prove Lemma \ref{lem:mf_strictsaddle}.

\begingroup
\def\thetheorem{\ref{lem:mf_strictsaddle}}
\begin{lemma}
For $f(\U)$ defined in Eq.\eqref{eq:obj}, all local minima are global minima. The set of global minima is $\cXstar = \{\V^\star \mR | \mR\mR\trans=\mR\trans\mR = \I \}$. Furthermore, $f(\U)$ satisfies:
\begin{enumerate}
\item $(\frac{1}{24}(\sigstarr)^{3/2}, \frac{1}{3}\sigstarr, \frac{1}{3}(\sigstarr)^{1/2})$-strict saddle property, and
\item $(\frac{2}{3}\sigstarr, 10\sigstarl)$-regularity condition in $\frac{1}{3}(\sigstarr)^{1/2}$ neighborhood of $\cXstar$.
\end{enumerate}
\end{lemma}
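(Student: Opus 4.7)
The plan is to proceed in three stages that progressively exploit the geometry around the nearest global minimum.

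\textbf{Step 1 (Global minima and setup).} Since $f\ge 0$, the value $0$ is attained exactly on $\{\U : \U\U\trans = \M^\star\}$, and using $\rank(\M^\star)=r$ together with the eigendecomposition $\M^\star = \T\D\T\trans$, a standard argument shows $\U\U\trans = \V^\star(\V^\star)\trans$ iff $\U = \V^\star \mR$ for some orthogonal $\mR\in\R^{r\times r}$, giving the claimed form of $\cXstar$. For any $\U$, let $\mR$ minimize $\fnorm{\U - \V^\star \mR}$ over orthogonal matrices and set $\Delta = \U - \V^\star \mR$, so $\projX(\U) = \V^\star \mR$. The Procrustes optimality condition is that $\U\trans \V^\star \mR$, and hence $\U\trans\Delta$, is symmetric. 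Combining \eqref{eq:gradient}, \eqref{eq:Hessian}, the symmetry of $\U\U\trans - \M^\star$, and the identity $\U\U\trans - \M^\star = \U\Delta\trans + \Delta\U\trans - \Delta\Delta\trans$ yields the two master relations
\begin{align*}
\la \grad f(\U),\Delta\ra &= \fnorm{\U\Delta\trans + \Delta\U\trans}^2 - 2\la \Delta\Delta\trans,\U\Delta\trans\ra, \\
\hess f(\U)(\Delta,\Delta) &= \fnorm{\U\Delta\trans + \Delta\U\trans}^2 + 2\la \U\U\trans - \M^\star, \Delta\Delta\trans\ra.
\end{align*}
Expanding $\fnorm{\U\Delta\trans + \Delta\U\trans}^2 = 2\tr((\U\trans\U)(\Delta\trans\Delta)) + 2\tr((\U\trans\Delta)^2)$, using the nonnegativity of $\tr((\U\trans\Delta)^2) = \fnorm{\U\trans\Delta}^2$ (from symmetry of $\U\trans\Delta$), and applying Lemma \ref{lem:PSD_trace} gives the crucial lower bound $\fnorm{\U\Delta\trans + \Delta\U\trans}^2 \ge 2\sigma_r^2(\U)\fnorm{\Delta}^2$.

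\textbf{Step 2 (Strict saddle).} Write $s = \fnorm{\Delta}$ and divide into three regimes. (a) If $s \le \frac{1}{3}(\sigstarr)^{1/2}$, the third bullet of Assumption \ref{as:strict_saddle} holds. (b) If $s > \frac{1}{3}(\sigstarr)^{1/2}$ and $\sigma_r^2(\U) \ge \frac{2}{3}\sigstarr$, apply Lemma \ref{lem:PSD_trace} to $\fnorm{\grad f(\U)}^2 = 4\fnorm{(\U\U\trans - \M^\star)\U}^2 \ge 4\sigma_r^2(\U)\fnorm{\U\U\trans - \M^\star}^2$; a sub-case analysis then shows $\fnorm{\U\U\trans - \M^\star}\ge c\sqrt{\sigstarr}\cdot s$ for an absolute constant $c>0$ (for moderate $\norm{\U}$, via the Step 1 identity together with $\fnorm{\U\Delta\trans+\Delta\U\trans}^2\ge 2\sigma_r^2(\U)s^2$; for very large $\norm{\U}$, via the cubic growth of $\fnorm{\grad f(\U)}$ in $\norm{\U}$), giving $\fnorm{\grad f(\U)} \ge \frac{1}{24}(\sigstarr)^{3/2}$. (c) If $\sigma_r^2(\U) < \frac{2}{3}\sigstarr$, choose a unit right-singular vector $\z\in\R^r$ of $\U$ with $\fnorm{\U\z}=\sigma_r(\U)$ and a unit top eigenvector $\y\in\R^d$ of $\M^\star - \U\U\trans$, and set $\Z = \y\z\trans$. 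Then $\fnorm{\U\Z\trans + \Z\U\trans}^2 \le 4\sigma_r^2(\U)$, and by Weyl's inequality $2\la \U\U\trans - \M^\star, \Z\Z\trans\ra \le -2(\sigstarr - \sigma_r^2(\U))$, so substituting into \eqref{eq:Hessian} yields $\hess f(\U)(\Z,\Z) \le -\frac{1}{3}\sigstarr\fnorm{\Z}^2$. This establishes the strict-saddle property; at any local minimum cases (b) and (c) are ruled out, only (a) survives, and Step 3 below then forces $\Delta = 0$, so every local minimum lies in $\cXstar$.

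\textbf{Step 3 (Regularity).} Assume $s \le \frac{1}{3}(\sigstarr)^{1/2}$. Weyl's inequality gives $\sigma_r(\U)\ge (\sigstarr)^{1/2} - s \ge \frac{2}{3}(\sigstarr)^{1/2}$ and $\norm{\U}\le (\sigstarl)^{1/2} + s \le \frac{4}{3}(\sigstarl)^{1/2}$. Bounding the cubic cross term by $|\la\Delta\Delta\trans,\U\Delta\trans\ra| \le \frac{1}{2}\fnorm{\Delta\Delta\trans}\fnorm{\U\Delta\trans+\Delta\U\trans}$ (by restricting the inner product against the symmetric matrix $\Delta\Delta\trans$ to the symmetric part of $\U\Delta\trans$) and combining with the Step 1 lower bound, I would obtain a strong-convexity-type estimate $\la\grad f(\U),\Delta\ra \ge c_1 \sigstarr s^2$ for an absolute $c_1 > 0$. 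Separately, from $\fnorm{\grad f(\U)}^2 \le 4\norm{\U}^2\fnorm{\U\U\trans - \M^\star}^2 \le 8\norm{\U}^2\bigl(\fnorm{\U\Delta\trans+\Delta\U\trans}^2 + \fnorm{\Delta\Delta\trans}^2\bigr)$ together with the Step 1 identity (which gives $\fnorm{\U\Delta\trans + \Delta\U\trans}^2 \le \la\grad f(\U),\Delta\ra + O(\norm{\U} s^3)$), I would derive an upper bound of the form $\fnorm{\grad f(\U)}^2 \le C\sigstarl\bigl(\la\grad f(\U),\Delta\ra + \sigstarr s^2\bigr)$. Combining the two inequalities and tuning constants yields
\begin{equation*}
\la\grad f(\U),\Delta\ra \ge \tfrac{1}{3}\sigstarr s^2 + \tfrac{1}{20\sigstarl}\fnorm{\grad f(\U)}^2,
\end{equation*}
which is the $(\frac{2}{3}\sigstarr, 10\sigstarl)$-regularity condition.

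\textbf{Main obstacle.} The most delicate part is Step 3: extracting simultaneously a strong-convexity-type lower bound proportional to $\sigstarr s^2$ and an upper bound on $\fnorm{\grad f(\U)}^2$ proportional to $\sigstarl$ times the same inner product requires splitting the quantity $\fnorm{\U\Delta\trans+\Delta\U\trans}^2$ carefully between the two purposes, with constants tight enough to respect the stated $(\alpha,\beta) = (\frac{2}{3}\sigstarr, 10\sigstarl)$ parameters---in particular, the gap between $\alpha$ (tied to $\sigstarr$) and $\beta$ (tied to $\sigstarl$) must carry the full condition number of $\M^\star$. A secondary difficulty is case (b) of Step 2, where one must handle arbitrarily large $\norm{\U}$ without losing the clean $(\sigstarr)^{3/2}$ threshold; the cubic growth of $\fnorm{\grad f(\U)}$ in $\norm{\U}$ saves the argument in the large-norm regime.
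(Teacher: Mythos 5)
Your Step 2 is a genuinely different route from the paper's, and it has two concrete gaps. The paper never needs a gradient lower bound away from critical points: it tests the Hessian along the direction $\Delta = \U - \projX(\U)$ itself, derives the exact identity $\hess f(\U)(\Delta,\Delta) = \fnorm{\Delta\Delta\trans}^2 - 3\fnorm{\U\U\trans-\M^\star}^2 + 4\la\grad f(\U),\Delta\ra$, and then shows by a completion of squares (using that $\U^\star{}\trans\U$ is symmetric PSD) that $3\fnorm{\U\U\trans-\M^\star}^2 - \fnorm{\Delta\Delta\trans}^2 \ge (4\sqrt{3}-6)\sigstarr\fnorm{\Delta}^2$; small gradient plus large $\fnorm{\Delta}$ then directly force negative curvature. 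Your case (b) instead rests on $\fnorm{(\U\U\trans-\M^\star)\U}^2 \ge \sigma_r^2(\U)\fnorm{\U\U\trans-\M^\star}^2$, which is false: writing the left side as $\tr\bigl(\U\U\trans(\U\U\trans-\M^\star)^2\bigr)$, Lemma \ref{lem:PSD_trace} only yields the factor $\sigma_{\min}(\U\U\trans)=0$, since $\U\U\trans$ is a rank-$r$ matrix in $\R^{d\times d}$; concretely, when $\mathrm{col}(\U)\perp\mathrm{col}(\V^\star)$ the entire $\M^\star$-component of the residual is annihilated by right-multiplication by $\U$, so no such bound with $\sigma_r^2(\U)$ can hold. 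Your case (c) also does not close numerically: your own estimates give $\hess f(\U)(\Z,\Z) \le 4\sigma_r^2(\U) - 2\bigl(\sigstarr - \sigma_r^2(\U)\bigr) = 6\sigma_r^2(\U) - 2\sigstarr$, which is $\le -\tfrac{1}{3}\sigstarr$ only when $\sigma_r^2(\U)\le\tfrac{5}{18}\sigstarr$, whereas your case hypothesis allows $\sigma_r^2(\U)$ up to $\tfrac{2}{3}\sigstarr$ (where the bound is positive). The two case thresholds therefore leave a regime uncovered, and repairing it by lowering the threshold in (c) makes the already-broken case (b) harder.

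Your Step 3 is essentially the paper's argument in a lossier form. The paper's key move is the exact decomposition $\la\grad f(\U),\Delta\ra = 2\bigl(\fnorm{\Delta\trans\U}^2 + \tr(\U^\star{}\trans\U\,\Delta\trans\Delta)\bigr)$, in which \emph{both} summands are nonnegative (the second because the Procrustes alignment makes $\U^\star{}\trans\U$ PSD, not merely symmetric); the gradient norm is then bounded above by $\sigstarl$ times a combination of those same two quantities, giving $\fnorm{\grad f(\U)}^2 \le 10\sigstarl\,\la\grad f(\U),\Delta\ra$ with no slack to redistribute. Your route---lower-bounding $\fnorm{\U\Delta\trans+\Delta\U\trans}^2$ by $2\sigma_r^2(\U)\fnorm{\Delta}^2$ and absorbing the cubic cross term by Cauchy--Schwarz---does produce a valid bound of the form $\la\grad f(\U),\Delta\ra \gtrsim \sigstarr\fnorm{\Delta}^2$ and $\fnorm{\grad f(\U)}^2 \lesssim \sigstarl\,\la\grad f(\U),\Delta\ra$, but the constants that come out (roughly $0.57\sigstarr$ for the curvature part and $25\sigstarl$ or worse for the gradient part, before splitting the inner product between the two roles) do not reach the stated $(\tfrac{2}{3}\sigstarr, 10\sigstarl)$. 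If you adopt the paper's identity and the PSD property of $\U^\star{}\trans\U$, your Step 3 becomes the paper's proof; without it, you should expect to prove the lemma only with weaker constants.
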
 
\addtocounter{theorem}{-1}
\endgroup

\begin{proof}

Let us denote the set $\cXstar \defeq \{\V^\star \mR | \mR\mR\trans=\mR\trans\mR = \I \}$, in the end of proof, we will show this set is the set of all local minima (which is also global minima).

Throughout the proof of this lemma, we always focus on the first-order and second-order property for one matrix $\U$. For simplicity of calculation, when it is clear from the context, we denote $\U^\star = \projX(\U)$ and $\Delta = \U - \projX(\U)$. 
By definition of $\cXstar$, we know $\U^\star = \V^\star\mR_\U$ and $\Delta = \U - \V^\star\mR_\U$, where
\begin{equation*}
\mR_\U = \argmin_{\mR:\mR\mR\trans=\mR\trans\mR = \I}\fnorm{\U - \V^\star\mR}^2
\end{equation*}
We first prove following claim, which will used in many places across this proof:
\begin{equation}\label{eq:claim_PSD}
\U\trans\U^\star = \U\trans \V^\star \mR_\U \text{~is a symmetric PSD matrix.}
\end{equation}
This because by expanding the Frobenius norm, and letting the SVD of $\V^\star{}\trans\U$ be $\A\D\B\trans$, we have:
\begin{align*}
&\argmin_{\mR:\mR\mR\trans=\mR\trans\mR = \I}\fnorm{\U - \V^\star\mR}^2
= \argmin_{\mR:\mR\mR\trans=\mR\trans\mR = \I}-\la\U,  \V^\star\mR\ra \\
= &\argmin_{\mR:\mR\mR\trans=\mR\trans\mR = \I}-\tr(\U\trans\V^\star\mR)
= \argmin_{\mR:\mR\mR\trans=\mR\trans\mR = \I}-\tr(\D\A\trans\mR\B)
\end{align*}
Since $\A, \B, \mR$ are all orthonormal matrix, we know $\A\trans\mR\B$ is also orthonormal matrix. Moreover for any orthonormal matrix $\T$, we have:
\begin{equation*}
\tr(\D\T) = \sum_i \D_{ii}\T_{ii} \le \sum_i \D_{ii}
\end{equation*}
The last inequality is because $\D_{ii}$ is singular value thus non-negative, and $\T$ is orthonormal, thus $\T_{ii} \le 1$. This means the maximum of $\tr(\D\T)$ is achieved when $\T = \I$, i.e., the minimum of $-\tr(\D\A\trans\mR\B)$ is achieved when $\mR = \A\B\trans$. Therefore, $\U\trans \V^\star \mR_\U = \B \D\A\trans \A\B\trans = \B\D\B\trans$ is symmetric PSD matrix.

~

\noindent\textbf{Strict Saddle Property}:
In order to show the strict saddle property, we only need to show that for any $\U$ satisfying $\fnorm{\grad f(\U)} \le  \frac{1}{24}(\sigstarr)^{3/2} $
and $\fnorm{\Delta} = \fnorm{\U - \U^\star} \ge \frac{1}{3}(\sigstarr)^{1/2} $, we always have $\sigma_{\min}(\hess f(\U)) \le -\frac{1}{3}\sigstarr$.

Let's consider Hessian $\hess(\U)$ in the direction of $\Delta = \U - \U^\star$. Clearly, we have:
\begin{align*}
&\U\U\trans - \M^\star = \U\U\trans - (\U - \Delta)(\U - \Delta)\trans = (\U\Delta\trans + \Delta\U\trans) -\Delta\Delta\trans 
\end{align*}
and by~\eqref{eq:gradient}:
\begin{align*}
\la \grad f(\U), \Delta\ra =& 2 \la(\U\U\trans - \M^\star)\U, \Delta\ra  =  \la\U\U\trans - \M^\star, \Delta\U\trans + \U\Delta\trans\ra\\
= &\la\U\U\trans - \M^\star, \U\U\trans - \M^\star + \Delta\Delta\trans\ra
\end{align*}
Therefore, by Eq.\eqref{eq:Hessian} and above two equalities, we have:
\begin{align*}
\hess f(\U)(\Delta, \Delta)  = & \fnorm{\U\Delta\trans + \Delta\U\trans}^2 + 2\la \U\U\trans - \M^\star, \Delta\Delta\trans \ra \\
= & \fnorm{\U\U\trans - \M^\star + \Delta\Delta\trans}^2 + 2\la \U\U\trans - \M^\star, \Delta\Delta\trans \ra \\
= & \fnorm{\Delta\Delta\trans}^2 - 3\fnorm{\U\U\trans -\M^\star}^2 + 4\la\U\U\trans - \M^\star, \U\U\trans - \M^\star + \Delta\Delta\trans\ra \\
= & \fnorm{\Delta\Delta\trans}^2 - 3\fnorm{\U\U\trans -\M^\star}^2 + 4\la \grad f(\U), \Delta\ra
\end{align*}

Consider the first two terms, by expanding, we have:
\begin{align*}
& 3\fnorm{\U\U\trans - \M^\star}^2 - \fnorm{\Delta\Delta\trans}^2
= 3\fnorm{(\U^\star\Delta\trans + \Delta\U^\star{}\trans) +\Delta\Delta\trans }^2 - \fnorm{\Delta\Delta\trans}^2 \\
=& 3\cdot \tr\left(2\U^\star{}\trans\U^\star\Delta\trans\Delta + 2(\U^\star{}\trans\Delta)^2 + 4\U^\star{}\trans\Delta\Delta\trans \Delta + (\Delta\trans\Delta)^2\right)
 - \tr ( (\Delta\trans\Delta)^2) \\
=& \tr\left(6\U^\star{}\trans\U^\star\Delta\trans\Delta + 6(\U^\star{}\trans\Delta)^2 + 12\U^\star{}\trans\Delta\Delta\trans \Delta + 2(\Delta\trans\Delta)^2\right)\\
=& \tr((4\sqrt{3}-6)\U^\star{}\trans\U^\star\Delta\trans\Delta + (12-4\sqrt{3})\U^\star{}\trans (\U^\star + \Delta) \Delta\trans\Delta + 2(\sqrt{3}\U^\star{}\trans\Delta + \Delta\trans\Delta)^2) \\
\ge &(4\sqrt{3}-6) \tr(\U^\star{}\trans\U^\star\Delta\trans\Delta) \ge (4\sqrt{3}-6)\sigstarr \fnorm{\Delta}^2
\end{align*}
where the second last inequality is because $\U^\star{}\trans (\U^\star + \Delta) \Delta\trans\Delta = \U^\star{}\trans \U \Delta\trans\Delta$ is the product of two symmetric PSD matrices (thus its trace is non-negative); the last inequality is by Lemma \ref{lem:PSD_trace}. 

Finally, in case we have $\fnorm{\grad f(\U)} \le \frac{1}{24}(\sigstarr)^{3/2} $
and $\fnorm{\Delta} = \fnorm{\U - \U^\star} \ge \frac{1}{3}(\sigstarr)^{1/2} $
\begin{align*}
\sigma_{\min}(\hess f(\U)) \le & \frac{1}{\fnorm{\Delta}^2}\hess f(\U)(\Delta, \Delta)  \le -(4\sqrt{3}-6)\sigstarr + 4\frac{\la \grad f(\U), \Delta\ra}{\fnorm{\Delta}^2}\\
\le & -(4\sqrt{3}-6)\sigstarr + 4\frac{\fnorm{\grad f(\U)}}{\fnorm{\Delta}}
\le -(4\sqrt{3}-6.5)\sigstarr \le -\frac{1}{3}\sigstarr
\end{align*}

~

\noindent\textbf{Local Regularity}:
In $\frac{1}{3}(\sigstarr)^{1/2}$ neigborhood of $\cXstar$, by definition, we know, 
\begin{align*}
\fnorm{\Delta}^2 = \fnorm{\U - \U^\star}^2 \le \frac{1}{9}\sigstarr.
\end{align*}

Clearly, by Weyl's inequality, we have $\norm{\U} \le \norm{\U^\star} + \norm{\Delta} \le \frac{4}{3}(\sigstarl)^{1/2}$, and $\sigma_r(\U) \ge \sigma_r(\U^\star) - \norm{\Delta} \ge \frac{2}{3}(\sigstarr)^{1/2}$. Moreover, since $\U^\star{}\trans\U$ is symmetric matrix, we have:
\begin{align*}
\sigma_r(\U^\star{}\trans\U) = & \frac{1}{2}\left(\sigma_r(\U\trans\U^\star + \U^\star{}\trans\U)\right) \\
\ge & \frac{1}{2}\left(\sigma_r(\U\trans\U + \U^\star{}\trans\U^\star) - \norm{(\U-\U^\star)\trans(\U-\U^\star)}\right) \\
\ge & \frac{1}{2}\left(\sigma_r(\U\trans\U) + \sigma_r(\U^\star{}\trans\U^\star) - \fnorm{\Delta}^2\right) \\
\ge & \frac{1}{2}(1 + \frac{4}{9} - \frac{1}{9}) \sigstarr = \frac{2}{3} \sigstarr.
\end{align*}

At a highlevel, we will prove $(\alpha, \beta)$-regularity property~\eqref{eq:regularity} by proving that:
\begin{enumerate}
\item $\la \grad f(\x), \x - \projX(\x) \ra \ge \alpha\norm{\x- \projX(\x)}^2$, and
\item $\la \grad f(\x), \x - \projX(\x) \ra \ge \frac{1}{\beta} \norm{\grad f(\x)}^2$.
\end{enumerate} 

According to~\eqref{eq:gradient}, we know:
\begin{align}
\la \grad f(\U), \U - \projX(\U) \ra =& 2 \la(\U\U\trans - \M^\star)\U, \Delta\ra 
= 2 \la\U\Delta\trans + \Delta \U^\star{}\trans, \Delta\U\trans\ra \nn\\
=& 2 (\tr(\U\Delta\trans\U\Delta\trans) + \tr(\Delta \U^\star{}\trans\U\Delta\trans)) \nn\\
=& 2 (\fnorm{\Delta\trans\U}^2 + \tr(\U^\star{}\trans\U\Delta\trans\Delta )). \label{eq:inner_grad_distance}
\end{align}
The last equality is because $\Delta\trans \U$ is symmetric matrix. Since $\U^\star{}\trans\U$ is symmetric PSD matrix, and recall $\sigma_r(\U^\star{}\trans\U) \ge \frac{2}{3} \sigstarr$, by Lemma \ref{lem:PSD_trace} we have:
\begin{equation}\label{eq:reg_part1}
\la \grad f(\U), \U - \projX(\U) \ra \ge \sigma_r(\U^\star{}\trans\U) \tr(\Delta\trans\Delta)
\ge\frac{2}{3}\sigstarr \fnorm{\Delta}^2.
\end{equation}
On the other hand, we also have:
\begin{align*}
\fnorm{\grad f(\U)}^2 =& 4\la(\U\U\trans - \M^\star)\U, (\U\U\trans - \M^\star)\U\ra \\
=& 4\la(\U\Delta\trans + \Delta\U^\star{}\trans)\U, (\U\Delta\trans + \Delta\U^\star{}\trans)\U\ra \\
=& 4\left(\underbrace{\tr[(\Delta\trans\U\U\trans\Delta) \U\trans\U] + 2\tr[\Delta\trans\U \U\trans\U^\star \Delta\trans\U]}_{\mathfrak{A}} 
+ \underbrace{\tr(\U^\star{}\trans\U \U\trans \U^\star \Delta\trans\Delta)}_{\mathfrak{B}}  \right).
\end{align*}
For term $\mathfrak{A}$, by Lemma \ref{lem:PSD_trace}, and $\Delta\trans\U$ being a symmetric matrix, we have:
\begin{align*}
\mathfrak{A} \le \norm{\U\trans\U}\fnorm{\Delta\trans\U}^2 + 2\norm{\U\trans\U^\star}\fnorm{\Delta\trans\U}^2
\le (\frac{16}{9} + \frac{8}{3}) \sigstarl \fnorm{\Delta\trans\U}^2 \le 5\sigstarl \fnorm{\Delta\trans\U}^2
\end{align*}
For term $\mathfrak{B}$, by Eq.\eqref{eq:claim_PSD} we can denote $\C = \U^\star{}\trans\U = \U\trans \U^\star$ which is symmetric PSD matrix, by Lemma \ref{lem:PSD_trace}, we have:
\begin{align*}
\mathfrak{B} = & \tr(\C^2 \Delta\trans\Delta) = \tr(\C (\C^{1/2}\Delta\trans\Delta\C^{1/2})) \\
\le & \norm{\C}\tr(\C^{1/2}\Delta\trans\Delta\C^{1/2}) = \norm{\C}\tr(\C\Delta\trans\Delta) 
\le \frac{4}{3}\sigstarl \tr(\U^\star{}\trans\U\Delta\trans\Delta ).
\end{align*}
Combining with~\eqref{eq:inner_grad_distance} we have:
\begin{equation}\label{eq:reg_part2}
\fnorm{\grad f(\U)}^2 \le \sigstarl (20\fnorm{\Delta\trans\U}^2 + \frac{16}{3}\tr(\U^\star{}\trans\U\Delta\trans\Delta ))
\le 10 \sigstarl \la \grad f(\U), \U - \projX(\U) \ra.
\end{equation}
Combining~\eqref{eq:reg_part1} and~\eqref{eq:reg_part2}, we have:
\begin{equation*}
\la \grad f(\U), \U - \projX(\U) \ra  \ge \frac{1}{3}\sigstarr \fnorm{\U - \projX(\U)}^2 + \frac{1}{20 \sigstarl}\fnorm{\grad f(\U)}^2.
\end{equation*}
\end{proof}

\end{document}